\documentclass[11pt,oneside,letterpaper]{article}

\usepackage[mode=buildnew]{standalone}
\usepackage{eurosym}
\usepackage{graphicx,amsmath,amsfonts}
\usepackage{setspace}
\usepackage{amssymb}
\usepackage{dsfont}
\usepackage{amsmath}
\usepackage{amsfonts}
\usepackage{ifthen}
\usepackage{mathtools}
\usepackage{graphicx}
\usepackage{enumitem}
\usepackage{color}
\usepackage{framed}
\usepackage[margin=1in]{geometry}
\usepackage{natbib}
\usepackage[dvipsnames]{xcolor}
\usepackage[colorlinks=true, urlcolor=Mahogany, linkcolor=Mahogany, citecolor=Mahogany]{hyperref}
\usepackage{fp}
\usepackage[capitalize]{cleveref}
\usepackage{amsthm}
\usepackage{caption}
\usepackage{subcaption}
\usepackage{palatino}
\usepackage{cuted,balance}
\usepackage{amssymb}
\usepackage{soul}
\usepackage{graphicx,amsmath,amsfonts}
\usepackage{float}
\usepackage{bbm}

\usepackage[ruled]{algorithm2e}

\newtheorem{proposition}{Proposition}
\newtheorem{theorem}{Theorem}
\newtheorem{lemma}{Lemma}

\newtheorem{remark}{Remark}

\allowdisplaybreaks

\let\originaleqref\eqref
\renewcommand{\eqref}{\originaleqref}
\graphicspath{{figs/}}
\newcommand{\E}{\mathbb{E}}
\newcommand{\Prob}{\mathbb{P}}
\newcommand{\R}{\mathbb{R}}

\newcommand{\indic}{\mathbf{1}}
\newcommand{\ddm}{\ifmmode\mathrm{DDM}\else DDM\fi}
\newcommand{\name}{\ifmmode\mathrm{DDMPO}\else DDMPO\fi}
\newcommand{\ddmpo}{\ifmmode\mathrm{DDMPO}\else DDMPO\fi}
\newcommand{\dpo}{\ifmmode\mathrm{DPO}\else DPO\fi}
\newcommand{\ipo}{\ifmmode\mathrm{IPO}\else IPO\fi}
\newcommand{\rlhf}{\ifmmode\mathrm{RLHF}\else RLHF\fi}
\newcommand{\llm}{\ifmmode\mathrm{LLM}\else LLM\fi}
\newcommand{\bt}{\ifmmode\mathrm{BT}\else BT\fi}
\newcommand{\btmodel}{Bradley--Terry}

\newcommand{\Fstar}{F^{*}}
\newcommand{\bhat}{\widehat b_n}
\newcommand{\btil}{\widetilde B_n}
\newcommand{\muhat}{\widehat\mu_n}
\newcommand{\muhatpi}{\widehat\mu_n^{\mathrm{PI}}}
\newcommand{\thetahat}{\widehat{\theta}_n}
\newcommand{\Lhat}{\widehat L_n}
\newcommand{\Lb}{L_b}
\newcommand{\Nbeta}{N_\beta}
\newcommand{\Dbeta}{D_\beta}
\newcommand{\wbeta}{w_\beta}
\newcommand{\ck}{c_k}

\newcommand{\cX}{\mathcal{X}}

\newcommand{\cD}{\mathcal{D}}

\newcommand{\cN}{\mathcal{N}}

\newcommand{\Var}{\operatorname{Var}}

\newcommand{\vhat}{\widehat v}

\begin{document}

\title{Response Time Enhances Alignment with Heterogeneous Preferences}

\author{Federico Echenique\thanks{Department of Economics, University of California, Berkeley} \and 
Alireza Fallah\thanks{Department of Computer Science and Ken Kennedy Institute, Rice University} \and 
Baihe Huang \thanks{Departments of Electrical Engineering and Computer Sciences, University of California, Berkeley} \and
Michael I. Jordan\thanks{Departments of Electrical Engineering and Computer Sciences and Statistics, University of California, Berkeley; Inria Paris}}

\date{May 7, 2026}

\maketitle

\sloppy

\begin{abstract}
Aligning large language models (LLMs) to human preferences typically relies on aggregating pooled feedback into a single reward model. However, this standard approach assumes that all labelers share the same underlying preferences, ignoring the fact that real-world labelers are highly heterogeneous and usually anonymous. Consequently, relying solely on binary choice data fundamentally distorts the learned policy, making the true population-average preference unidentifiable. To overcome this critical limitation, we demonstrate that augmenting preference datasets with a simple, secondary signal—the user's response time—can restore the identifiability of the population's average preference. By modeling each decision as a Drift-Diffusion Model (DDM), we introduce a novel, consistent estimator of heterogeneous preferences that successfully corrects the distortions of standard choice-only labels. We prove that our estimator asymptotically converges to the true average preference even in extreme cases where each anonymous labeler contributes only a single choice. Empirically, across both synthetic and real-world datasets, our method consistently outperforms standard baselines that otherwise fail and plateau at a bias floor. Because response times are essentially free to record and require zero user tracking or identification, our results bring promises and open up new opportunities for future data-collection pipelines to improve the social benefit without requiring user-level identifiers or repeated elicitations.

\end{abstract}

\section{Introduction}
\label{sec:intro}

Large language models (LLMs) are aligned to human preferences through pairwise-comparison data~\citep{openai2023gpt4,llama3_2024,gemini2023,anthropic2024claude3}. The standard methodology, reinforcement learning from human feedback (\rlhf{})~\citep{christiano2017deep,stiennon2020learning,ouyang2022training,bai2022training}, fits a Bradley--Terry reward model to binary preferences and then optimizes the policy against that reward. Direct Preference Optimization (\dpo{})~\citep{rafailov2023dpo} and related objectives~\citep{azar2024general,ethayarajh2024kto,meng2024simpo,hong2024orpo} change the optimization layer, but they keep the same single-reward statistical backbone.

These methods typically assume a unique preference shared by all labelers. Real-world datasets, however, pool anonymous annotators with highly heterogeneous tastes, values, and expertise. Recent work shows that this mismatch can badly distort the learned policy and can even make the population-average preference unidentifiable from choices alone~\citep{siththaranjan2024distributional,shirali2025directhetero,golz2025distortion,chidambaram2024dpohetero,park2024heterorlhf}.

One solution is to estimate each labeler's preferences separately, but this presents two significant challenges. First, large-scale datasets are often anonymous or contain too few comparisons per labeler. Second, even with perfect access to each labeler's individual preferences, how are we to aggregate them into a single aggregate reward?

We deal with the first problem by jointly modeling a group of labelers. Each individual is assumed to follow the \emph{Drift Diffusion Model} (\ddm{}), the dominant procedural choice model in psychology and neuroscience~\citep{ratcliff1978theory,ratcliff2008diffusion,gold2007neural,fehr2011neuroeconomic,clithero2018improving}. The DDM assumes a Brownian motion with latent drift $V$ absorbed at $\pm b$, where $V$ is the signed utility difference between the two choices, and $b$ is the decision threshold. Different labelers have different utilities and thus idiosyncratic drifts, but they share the boundary parameter $b$.

We deal with the second problem using response time to gauge utility intensity. The problem of uncovering a common aggregate reward has a long history in social choice theory. The 18th-century philosopher Marquis de Condorcet famously noted that pairwise comparisons cannot always be aggregated into a single winning choice. The modern version of Condorcet's paradox is Arrow's impossibility theorem, which states that no voting procedure satisfies basic efficiency and fairness properties. 

Our solution to the paradoxes of Condorcet and Arrow lies in the measurement of utility. \citet{d1977equity} show that Arrow's theorem stems from the assumption that utilities are entirely ordinal, making them incomparable. By introducing degrees of comparability, they demonstrate that a utilitarian criterion (such as an average reward) can emerge. This comparability, however, requires a mechanism to measure the intensity of preferences. Our approach rests precisely on such a mechanism.

The \ddm{} predicts that response time and utility are related. An increase in the intensity of utility translates into a larger drift, and therefore a shorter response time. Indeed, marginalizing out response time reduces the \ddm{} directly to the \btmodel{} model. Recent work shows that, when $b$ is known, response times recover preference parameters at the parametric $1/n$ rate \citep{echenique2025response}. Our departure is the practically relevant case in which the boundary is \emph{common but unknown}. We therefore estimate the latent boundary and the population-average drift jointly from anonymous choices and response times.
Our contributions are three-fold.
\begin{itemize}[leftmargin=1.5em,itemsep=1pt,topsep=1pt]
    \item \textbf{Unbiased drift and consistent boundary estimation.} Given the boundary $b$, we derive an unbiased estimator of the utilitarian aggregate reward, defined as the average drift $\mu=\E[V]$. The estimator is a weighted average of response times, with closed-form weights $w_b(\cdot)$. From the large-$\lambda$ tail of the response-time Laplace transform, we then construct a Richardson-extrapolated consistent estimator $\widetilde B_n$ of $b$, whose population bias is $O(\lambda_n^{-1})$---improving the one-scale $O(\lambda_n^{-1/2})$ bias under bounded drifts.
    \item \textbf{Consistent average preference in the linear model.} In the heterogeneous linear preference model with per-labeler coefficient $\theta_i\in\R^d$ and drift $V_i=\psi_i^\top\theta_i$, combining the plug-in response-time weight $w_{\widetilde B_n}(\cdot)$ with ordinary least squares yields a consistent estimator of the population-average preference vector $\theta^\star=\E[\theta_i]$, with asymptotically negligible plug-in error from $\widetilde B_n$.
    \item \textbf{Empirical validation.} On synthetic data over diverse latent-preference priors and on an intertemporal-choice dataset from \citet{amasino2019amount}, reanalyzed by \citet{echenique2025response}, the response-time plug-in matches an oracle that knows the boundary, while the choice-only baseline plateaus at a heterogeneity-induced bias floor.
\end{itemize}

To sum up, the use of response time in an anonymous, heterogeneous population of decision makers allows for the recovery of a population-average preference that serves as a utilitarian aggregate objective. While response times are not yet logged in standard preference-learning datasets for LLM alignment, the required instrumentation is minimal; our results strongly argue for recording this essential free signal in future data-collection pipelines.

\subsection{Related Work}
\label{sec:related}

\paragraph{Reinforcement learning from human feedback.}
\rlhf{} fits a reward model to binary preferences via the Bradley--Terry likelihood and then optimizes the policy against that reward~\citep{christiano2017deep,stiennon2020learning,ouyang2022training,bai2022training,ziegler2019fine,bradley1952rank,schulman2017proximal}. \dpo{} and related objectives replace the optimization layer while keeping the same single-reward statistical target~\citep{rafailov2023dpo,azar2024general,zhao2023slic,ethayarajh2024kto,meng2024simpo,hong2024orpo,tang2024generalized}. A complementary line of work studies reward-model over-optimization, robustness, and the gap between online and offline alignment~\citep{gao2023scaling,zhu2023fine,zhu2023principled,eisenstein2023helping,chowdhury2024provably,liu2024provably,tang2024understanding,chang2024dataset}. Response times do not enter the statistical design of these methods.

\paragraph{Heterogeneous preferences and the Bradley--Terry distortion.}
Recent work shows that a single Bradley--Terry reward can be badly misspecified under heterogeneous feedback. Hidden-context analyses imply Borda-style aggregation rather than mean aggregation~\citep{siththaranjan2024distributional}; direct alignment without per-labeler information faces a consistency-sample-efficiency tradeoff~\citep{shirali2025directhetero}; and social-choice analyses quantify severe worst-case distortion for Bradley--Terry-based alignment, with Nash Learning from Human Feedback emerging as distortion-optimal in the worst case~\citep{golz2025distortion,munos2023nash}. Other approaches enrich the learner with mixtures, personalization, or explicit value axes~\citep{chidambaram2024dpohetero,park2024heterorlhf,sorensen2023valuekaleidoscope}. We instead keep the learner simple and add response time as an orthogonal signal, with no labeler identifier at all.

\paragraph{Drift-diffusion model and response-time-based inference.}
The \ddm{} is a standard model of perceptual and value-based binary choice~\citep{ratcliff1978theory,ratcliff1998modeling,ratcliff2008diffusion,gold2007neural,krajbich2010visual,fehr2011neuroeconomic,strzalecki2025stochastic}. It yields explicit formulas for the joint law of choice and first-passage time and has been used in food choice, intertemporal choice, advertising, and laboratory decision tasks~\citep{krajbich2010visual,amasino2019amount,chiong2024combining,clithero2018improving,fudenberg2018speed,fudenberg2020testing}. On the inference side, prior work develops nonparametric tests, ordinal inference from response times, and $1/n$-rate estimation when the boundary is known~\citep{fudenberg2020testing,alosferrer2021time,echenique2025response}; recent ML work also studies response times in robust preference learning and linear bandits~\citep{sawarni2025preference,li2024rtbandits}. To our knowledge, our paper is the first response-time-based preference-learning work that simultaneously targets the population-average preference in an anonymous heterogeneous labeler pool and provides a consistent estimator when the common DDM boundary is unknown and must be estimated from the same response-time data.

\section{Preliminaries}
\label{sec:prelim}

\paragraph{Preference learning.}
A preference dataset is a set of tuples $\cD=\{(x_i,y_i,z_i)\}_{i=1}^n$ in which $(x_i,y_i) \in \mathcal{X}^2$ is a pair of alternatives (e.g.\ two LLM responses to the same prompt) and $z_i\in\{+1,-1\}$ encodes whether labeler $i$ prefers $x_i$ ($z_i=+1$) or $y_i$ ($z_i=-1$). The standard \btmodel{} model assumes a latent reward $r_\theta:\cX\to\R$ and posits
\begin{align*}
    \Prob(z_i=+1\mid x_i,y_i) = \sigma\!\big(r_\theta(x_i)-r_\theta(y_i)\big), \qquad \sigma(u)=\tfrac{1}{1+e^{-u}}.
\end{align*}
Classical \rlhf{}~\citep{ouyang2022training} first fits $\theta$ by maximum likelihood and then optimizes a KL-regularized objective against $r_\theta$. 
Variants such as \dpo{}~\citep{rafailov2023dpo}, \ipo{}~\citep{azar2024general}, KTO~\citep{ethayarajh2024kto}, SimPO~\citep{meng2024simpo}, and ORPO~\citep{hong2024orpo} retain the single-reward assumption but differ in the loss or the parameterization.

\paragraph{Alignment under heterogeneous preferences.}
Real-world preference datasets pool annotations from many labelers with potentially different preferences. Label $i$ comes from an individual labeler with utility $u_i:\cX\to\R$, drawn i.i.d.\ from a population distribution. Individual $i$ is never observed again. We seek to learn the \emph{average utility,} which corresponds to the utilitarian criterion we discussed in the introduction. \citet{siththaranjan2024distributional,shirali2025directhetero,golz2025distortion} show that the single-reward likelihood is misspecified under heterogeneity: even with infinitely many samples, the \btmodel{} MLE converges to the \emph{Borda-rule} aggregate rather than the average-utility. The gap between the utilitarian and Borda aggregates can be large (see \Cref{sec:util-aggregation} for details), and a utilitarian goal has stronger foundations when utility is cardinally meaningful. Consequently, a model trained with \rlhf{} or \dpo{} on heterogeneous data need not rank alternatives in the order dictated by the utilitarian criterion $\mathrm{AvgUtil}(x):=\E[u_i(x)]$. \citet{shirali2025directhetero} prove an incompatibility result between consistency and sample efficiency when per-labeler information is unavailable, motivating the need to go beyond binary choice data: hence response times. 
Response times are one such signal, and they are the focus of this paper.

\paragraph{Drift-diffusion model.}
The Drift diffusion model (\ddm{})~\citep{ratcliff1978theory,ratcliff2008diffusion} is the dominant model in psychology and neuroscience for modeling choice and response time. The \ddm{} assumes that information about $x_i$ and $y_i$ arrives over time according to a Brownian motion $W_\tau$ with drift $v$. When $W_{\tau}$ first reaches a boundary $b$ or $-b$ a decision is made:
\begin{align}\label{eq:ddm-sde}
W_{\tau} = B_{\tau} + v \cdot \tau, \quad T=\inf\{\tau>0: |W_{\tau}|=b\}, \quad Z=2 \cdot \indic(W_T=b)-1\in\{+1,-1\},
\end{align}
where $(B_{\tau})$ is a standard Brownian motion with $B_0=0$. The sign $Z$ encodes the choice between $x_i$ and $y_i$, the stopping time $T$ is the response time\footnote{An extended version of the \ddm{} allows the time it takes humans to process the alternatives, called \textit{non-decision time}, to be included in the response time \citep{ratcliff2002estimating}. In Appendix \ref{app:no_decision_time}, we discuss how our results can be adjusted to accommodate this.}, and $v$ is the underlying utility difference between $x_i$ and $y_i$. The \ddm{} can be viewed as a decision maker learning about the relative quality of two alternatives through separate Gaussian signals. Posterior beliefs become more accurate over time, but the decision maker must choose when to stop, since waiting entails a constant marginal cost~\citep{fudenberg2018speed}.

Importantly, the \ddm{} recovers the \btmodel{} as a special case when response time is ignored. The probability of $Z=z$ in the \ddm{} is given by $\sigma(2bzv)$ \citep{echenique2025response}. In other words, the \ddm{} provides a joint model of choice and response time that is consistent with the \btmodel{} and has strong support in the psychology literature \citep{ratcliff2008diffusion, ratcliff2016diffusion}.
Furthermore, under fixed $(v,b)$, it is known that $Z$ and $T$ are conditionally independent given $(v,b)$, and $\E[Z\mid v,b]=\tanh(bv)$ \citep{echenique2025response,fudenberg2020testing}.

\paragraph{Roadmap.}
We assume that individual labelers follow the DDM model, each with their idiosyncratic utility and, thus, an idiosyncratic drift. We seek to estimate the mean drift, which corresponds to the utilitarian criterion. We assume that all labelers share a common boundary $b$, which is tantamount to assuming a common temperature in \btmodel{}. In Appendix \ref{app:boundary_heterogeneity}, we discuss what happens if this assumption does not hold and propose possible adjustments.

In \Cref{sec:estimators}, we start off with a simple scenario: every observation comes from a fresh labeler with an unknown latent drift $V \sim \Fstar$ (i.i.d.). In other words, for a labeler $i$ with drift $v_i$, we only get to see $(z_i, t_i)$ generated by Eq.~\eqref{eq:ddm-sde}. We just assume bounded support, meaning there is a known constant $M < \infty$ such that $|V| \le M$ almost surely\footnote{We relax this assumption to a bounded exponential-moment condition in Appendix~\ref{subsec:light-tailed-drifts}, which holds for Gaussian drift.}. One can interpret this as asking all labelers to choose between the same two alternatives, $x$ and $y$. Here, $v_i$ is simply $u_i(x) - u_i(y)$, where $u_i(\cdot)$ represents labeler $i$'s utility. Our main goal here is to learn the average utility difference, i.e., $\mu:= \mathbb{E}_{V \sim \Fstar}[V]$.

In \Cref{sec:linear}, we extend this analysis to a contextual linear setting with multiple alternatives. More precisely, labeler $i$ is offered two alternatives $(x_i, y_i) \in \mathcal{X}^2$. We assume that labeler $i$'s utility is linear and given by $u_i(x) = \phi(x)^\top \theta_i$, where $\theta_i \in \mathbb{R}^d$ is the labeler's unknown latent preference vector, drawn from a distribution $G^*$, and $\phi: \mathcal{X} \to \mathbb{R}^d$ is the known feature map. 
In this case, the drift becomes $v_i = \left(\phi(x_i) - \phi(y_i)\right)^\top \theta_i$, and our goal is to learn the population-average preference vector $\theta^\star := \mathbb{E}_{\theta \sim G^*}[\theta]$. Note that we never observe the labeler's identity.

\section{Estimating the Common Boundary and the Average Drift}
\label{sec:estimators}
Our goal is to estimate the utilitarian objective, the average drift $\mu = \mathbb{E}_{V \sim \Fstar}[V]$, assuming access to the dataset ${(z_i,t_i)}_{i=1}^n$, where $z_i$ and $t_i$ denote the choice and response-time data, respectively, generated by a \ddm{} with drift $v_i \sim \Fstar$. We present a two-stage construction. First, we design an unbiased estimator of $\mu$, \textit{assuming that the boundary parameter $b$ is known}. Next, we develop a \emph{consistent} Richardson-extrapolated estimator of the common boundary.

\subsection{Unbiased estimation of the average drift given the boundary}
\label{sec:unbiased-drift}
For $k\ge 0$ and $b>0$, write $\ck(b):=(2k+1)b$ and for $t>0$ define the ratio-of-series weight
\begin{align}\label{eq:wb-def}
    w_b(t)
    \;:=\;
    \frac{\displaystyle\sum_{k=0}^\infty \left(\frac{\ck(b)^2}{t}-1\right) \exp\!\left(-\frac{\ck(b)^2}{2t}\right)\!}
         {\displaystyle\sum_{k=0}^\infty (-1)^k \ck(b)\exp\!\left(-\frac{\ck(b)^2}{2t}\right)}.
\end{align}

\begin{theorem}[Unbiased estimator of the average drift given the boundary]
\label{thm:unbiased}
Fix the value of the boundary $b>0$. Under the \ddm{} data-generating process of Section~\ref{sec:prelim}, the estimator
\begin{align*}
\muhat \;=\; \frac{1}{n}\sum_{i=1}^n z_i\, w_b(t_i)
\end{align*}
satisfies $\E\!\left[\muhat\right]=\mu$ for every drift distribution $\Fstar$. 
Moreover, $w_b$ is the \emph{unique} smooth function satisfying $\E[Z w_b(T)\mid V=v]=v$ for all $v$.
\end{theorem}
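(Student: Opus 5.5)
Conditioning on $V=v$ reduces the claim, by the tower property, to the pointwise identity $\E[Z w_b(T)\mid V=v]=v$ for every $v$. Unbiasedness of $\muhat$ then follows by averaging over $V\sim\Fstar$ and over the $n$ i.i.d.\ observations; integrability is automatic since $|V|\le M$. To prove the pointwise identity I will write the joint law of $(Z,T)$ given $v$ explicitly. By Girsanov, the first-passage density at the upper (resp.\ lower) boundary is $e^{\pm bv - v^2 t/2}\, g_b(t)$, where $g_b$ is the drift-free density of exiting $(-b,b)$ through a given side at time $t$. The method of images gives
\[
g_b(t)=\frac{1}{\sqrt{2\pi t^3}}\sum_{k=0}^\infty (-1)^k c_k(b)\exp\!\Big(-\frac{c_k(b)^2}{2t}\Big).
\]
Hence $\E[Z h(T)\mid v]=2\sinh(bv)\int_0^\infty h(t)e^{-v^2t/2}g_b(t)\,dt$ for any test function $h$. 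This also recovers the conditional independence of $Z$ and $T$ and the value $\tanh(bv)$ quoted in Section~\ref{sec:prelim}.

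The target identity therefore becomes a Laplace-transform equation in $s=v^2/2$:
\[
\int_0^\infty w_b(t)g_b(t)e^{-st}\,dt=\frac{\sqrt{2s}}{2\sinh(b\sqrt{2s})}.
\]
The right-hand side is even in $v$ and analytic in $s$. I would expand $v/(2\sinh(bv))=\sum_{k\ge0} v\,e^{-(2k+1)bv}$ and use the two standard transforms
\[
\int_0^\infty \frac{c}{\sqrt{2\pi t^3}}e^{-c^2/2t}e^{-st}\,dt=e^{-c\sqrt{2s}},\qquad \int_0^\infty \frac{1}{\sqrt{2\pi t^3}}\Big(\frac{c^2}{t}-1\Big)e^{-c^2/2t}e^{-st}\,dt=\sqrt{2s}\,e^{-c\sqrt{2s}}.
\]
The second follows from the first by differentiating in $c$: $\partial_c$ of the first kernel is $(1-c^2/t)(2\pi t^3)^{-1/2}e^{-c^2/2t}$, and $\partial_c e^{-c\sqrt{2s}}=-\sqrt{2s}\,e^{-c\sqrt{2s}}$. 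Summing over $k$ with $c=c_k(b)$, the function $w_b g_b$ must equal $(2\pi t^3)^{-1/2}\sum_k(c_k^2/t-1)e^{-c_k^2/2t}$. Dividing by $g_b$ gives exactly the ratio of series in \eqref{eq:wb-def}.

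\textbf{Main obstacle.} The delicate step is justifying the interchanges of sum, integral and Laplace transform, and checking that $w_b$ is well defined. One needs $g_b(t)>0$ for all $t>0$, which holds because $g_b$ is a density times a positive constant and is strictly positive. One also needs $w_bg_b$ to be absolutely integrable against $e^{-st}$ uniformly enough. I would handle this with two bounds. For small $t$, the $k=0$ term dominates and is Gaussian-small. For large $t$, Poisson summation (the theta-function representation) shows that both series decay like $e^{-\pi^2 t/(8b^2)}$, which yields the bound $|w_b(t)|\lesssim 1+t^{-1}$. Once integrability holds, Fubini and Tonelli apply termwise, since $\sum_k c_k e^{-c_k^2/2t}/\sqrt{t^3}$ converges in $L^1$.

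\textbf{Uniqueness.} Suppose $h$ is smooth and $\E[Zh(T)\mid v]=v$ for all $v$. Then $d=h-w_b$ satisfies $\int_0^\infty d(t)g_b(t)e^{-st}\,dt=0$ for all $s=v^2/2\ge0$, a set of $s$ with accumulation points. The Laplace transform of $d\,g_b$ is analytic on $\Re s>0$, so it vanishes identically, and by uniqueness of the Laplace transform $d\,g_b=0$ almost everywhere. Since $g_b>0$ and $d$ is continuous, $d\equiv0$. The class of admissible $h$ must be restricted to those making the transform finite, for example polynomially bounded in $t$ and $1/t$; I would state this integrability condition as part of the meaning of ``smooth'' in the theorem.
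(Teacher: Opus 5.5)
Your proposal is correct and follows the paper's proof essentially step for step: Girsanov tilting to the zero-drift exit density ($g_b=f_0/2$), reduction to a Laplace identity in $s=v^2/2$, the geometric expansion of $1/\sinh$, the $c$-derivative of the first-passage transform, division by the image series, and uniqueness by injectivity of the Laplace transform. Your extra care about the interchanges goes beyond what the paper provides, with two harmless slips. First, the numerator series decays at the faster rate $e^{-\pi^2 t/(2b^2)}$, so $w_b$ in fact decays exponentially rather than merely staying bounded. Second, the termwise Fubini step needs the weight $e^{-st}$ with $s>0$, since without it $\sum_k\int_0^\infty c_k t^{-3/2}e^{-c_k^2/(2t)}\,dt=\sum_k\sqrt{2\pi}$ diverges; this costs nothing, because the case $v=0$ is trivial anyway.
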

\textit{Proof sketch:} 
To show that $\E[Z w_b(T)\mid V=v]=v$, we first recall that, conditional on the drift $V=v$, the choice $Z$ and response time $T$ are independent. Therefore,
\begin{equation*}
\E[Z w_b(T)\mid V=v] = \E[Z\mid V=v] \cdot \E[ w_b(T)\mid V=v].    
\end{equation*}
As stated in \Cref{sec:prelim}, the first expectation is known to be $\tanh(bv)$. Hence, it remains to characterize the second expectation. By Girsanov’s theorem, this problem can be reduced to computing the Laplace transform of $w_b(t)f_0(t)$, where $f_0(\cdot)$ is the density of the response time when the drift is zero. The proof then concludes by evaluating this Laplace transform. 
A proof is provided in Appendix~\ref{app:proof-unbiased}. \qed

\subsection{Consistent estimation of the common unknown boundary}
\label{sec:consistent-boundary}

The weight function \eqref{eq:wb-def} requires knowledge of the boundary parameter $b$: We now turn to a consistent estimator of $b$. To this end, we use the empirical Laplace transform of the response times, given by
\begin{align}\label{eq:Lhat-def}
    \Lhat(\lambda) \;:=\; \frac{1}{n}\sum_{i=1}^n e^{-\lambda t_i}.
\end{align}
To see why this is useful, let us consider for a moment the homogeneous case with fixed drift $V=v$. Note that, as $n \to \infty$, the empirical Laplace transform converges to its population counterpart, $\Lb(\lambda):=\E[e^{-\lambda T}]$, which is given by \citep[see][Lemma 4]{echenique2025response}:
\begin{equation} \label{eqn:laplace_transform_drift}
\Lb(\lambda) = \frac{\cosh(bv)}{\cosh\bigl(b\sqrt{2\lambda+v^2}\bigr)}.
\end{equation}
Consequently, as $\lambda \to \infty$, we see that 
$\frac{-\log \Lb(\lambda)}{\sqrt{2\lambda}}$
converges to $b$, allowing us to recover the boundary parameter.
The next theorem formalizes this idea while accounting for raters' heterogeneity in $v$.
\begin{theorem}
\label{thm:one-scale}
Assume $|V|\le M$ almost surely. Let $(\lambda_n)_{n\ge 1}$ be any deterministic sequence with
\begin{align}\label{eq:lambda-rate}
    \lambda_n\to\infty \qquad\text{and}\qquad \sqrt{\lambda_n}=o(\log n),
\end{align}
e.g.\ $\lambda_n=(\log n)^{3/2}$. Define
\begin{align}\label{eq:bhat-def}
    \bhat \;:=\; -\frac{\log \Lhat(\lambda_n)}{\sqrt{2\lambda_n}}.
\end{align}
Then, $\bhat \xrightarrow{P} b$.
\end{theorem}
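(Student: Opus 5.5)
Under heterogeneity, the population Laplace transform is the mixture
\begin{align*}
L_b(\lambda)=\E\!\left[\frac{\cosh(bV)}{\cosh\bigl(b\sqrt{2\lambda+V^2}\bigr)}\right].
\end{align*}
We split the error into a deterministic bias term and a stochastic term:
\begin{align*}
\bhat-b=\Bigl(-\tfrac{\log L_b(\lambda_n)}{\sqrt{2\lambda_n}}-b\Bigr)+\frac{\log L_b(\lambda_n)-\log \Lhat(\lambda_n)}{\sqrt{2\lambda_n}}.
\end{align*}

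\textbf{Bias term.} Since $|V|\le M$, we have $1\le\cosh(bV)\le\cosh(bM)$. For the denominator, $\tfrac12 e^{x}\le\cosh x\le e^{x}$ with $x=b\sqrt{2\lambda+V^2}\in[b\sqrt{2\lambda},\,b\sqrt{2\lambda}+bM]$. Together these give the deterministic sandwich
\begin{align*}
e^{-b\sqrt{2\lambda}-bM}\;\le\; L_b(\lambda)\;\le\; 2\cosh(bM)\,e^{-b\sqrt{2\lambda}}.
\end{align*}
Taking logs, $|-\log L_b(\lambda)-b\sqrt{2\lambda}|\le C_M$ with $C_M=\max\{bM,\log(2\cosh bM)\}$. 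Hence the first term is $O(\lambda_n^{-1/2})\to0$.

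\textbf{Stochastic term.} The summands $e^{-\lambda_n t_i}$ are i.i.d.\ in $[0,1]$ with mean $L_n:=L_b(\lambda_n)$ and variance at most $\E[e^{-2\lambda_n T}]\le L_n$. Chebyshev then gives
\begin{align*}
\Prob\bigl(|\Lhat(\lambda_n)/L_n-1|>\epsilon\bigr)\le \frac{1}{n\epsilon^2L_n}.
\end{align*}
By the lower bound, $nL_n\ge n\,e^{-b\sqrt{2\lambda_n}-bM}=\exp\bigl(\log n-b\sqrt{2\lambda_n}-bM\bigr)$. This tends to $\infty$ exactly because $\sqrt{\lambda_n}=o(\log n)$. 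So $\Lhat(\lambda_n)/L_n\to1$ in probability, whence $\log\Lhat(\lambda_n)-\log L_n\to0$ in probability. The event $\Lhat(\lambda_n)=0$ has vanishing probability, since $t_i<\infty$ a.s.\ makes $\Lhat>0$. Dividing by $\sqrt{2\lambda_n}\to\infty$ makes the second term $o_P(1)$. Combining the two terms yields $\bhat\xrightarrow{P}b$.

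\textbf{Main obstacle.} The delicate step is the stochastic term. $\Lhat(\lambda_n)$ is exponentially small, so additive concentration is useless and we need relative concentration. The crucial input is the explicit lower bound on $L_b(\lambda_n)$ from the bounded-drift sandwich, which converts the rate condition $\sqrt{\lambda_n}=o(\log n)$ into $nL_n\to\infty$. The bias bound is routine once the Laplace transform formula (Lemma 4 of \citet{echenique2025response}, quoted above) is averaged over $V\sim\Fstar$, using conditional independence given $V$.
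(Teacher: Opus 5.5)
Your proof is correct. The bias term matches the paper's Step~1 up to a harmless difference: your lower bound $e^{-b\sqrt{2\lambda}-bM}$ is slightly looser than the paper's $e^{-b\sqrt{M^2+2\lambda}}$, and either suffices.

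The genuine difference is in the stochastic term. The paper proceeds in three steps:
\begin{enumerate}
\item It proves a separate lemma that $n\Var(e^{-\lambda_n T})\to\infty$, using the law of total variance and a lower bound on the conditional variance.
\item It uses that lemma to verify the Lindeberg condition and applies the Lindeberg--Feller CLT to $\sqrt{n}\,(\Lhat(\lambda_n)-\Lb(\lambda_n))/\sqrt{\Var(e^{-\lambda_n T})}$.
\item It extracts $\Lhat(\lambda_n)/\Lb(\lambda_n)\xrightarrow{P}1$ by a contradiction argument, using $\Var(e^{-\lambda_n T})\le \Lb(\lambda_n)$ and $n\Lb(\lambda_n)\to\infty$.
\end{enumerate}
You use exactly the same two final ingredients, $\Var(e^{-\lambda_n T})\le\Lb(\lambda_n)$ and $n\Lb(\lambda_n)\to\infty$. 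You feed them directly into Chebyshev, which gives the explicit bound $1/(n\epsilon^2\Lb(\lambda_n))$. You read off $n\Lb(\lambda_n)\to\infty$ straight from the population lower bound, so the variance lemma and the Lindeberg verification are not needed.

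Your route is shorter and non-asymptotic, and it makes transparent that the only real requirement is $\log n-b\sqrt{2\lambda_n}\to\infty$. It is also essentially the second-moment argument the paper itself uses later in the proof of \Cref{cor:light-tailed-empirical-boundary-consistency}. The paper's CLT route buys asymptotic normality of the normalized fluctuation of $\Lhat(\lambda_n)$, which could underpin inference, but nothing beyond what you obtain is needed for consistency.
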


\textit{Proof sketch:} 
The proof first formalizes the intuition stated earlier, namely that
$\lim_{\lambda\to\infty} \frac{-\log \Lb(\lambda)}{\sqrt{2\lambda}}=b$
in the case of heterogeneous drifts. We do so by establishing upper and lower bounds on $\Lb(\lambda)$ of the form
\begin{equation*}
2e^{bM}e^{-b\sqrt{2\lambda}} \;\ge\; \Lb(\lambda) \;\ge\; e^{-b\sqrt{M^2+2\lambda}}.
\end{equation*}
Next, we show how to connect the empirical approximation $\Lhat(\lambda_n)$ with its population counterpart $\Lb(\lambda_n)$. 
For a fixed value of $\lambda$, $\Lhat(\lambda)$ converges almost surely to $\Lb(\lambda)$. The difficulty here is that $\lambda_n$ increases with $n$. To deal with this problem, we show that, under condition \eqref{eq:lambda-rate}, the Lindeberg--Feller Central Limit Theorem implies that 
\begin{equation*}
\sqrt{n} \cdot \frac{\Lhat(\lambda_n)-\Lb(\lambda_n)}{\sqrt{\Var(\exp(-\lambda_n T))}}
\end{equation*}
converges in distribution to $\mathcal{N}(0,1)$. From here, we establish that $\Lhat(\lambda_n)/\Lb(\lambda_n)\xrightarrow{P} 1$. The full proof is provided in Appendix~\ref{app:proof-one-scale}. \qed

\paragraph{Richardson-extrapolated estimator.} 
In our analysis of the estimator $\hat b_n$, the argument based on the central limit theorem required that  $\sqrt{\lambda_n}=o(\log n)$. However, in our numerical experiments, this requirement leads to slow convergence rates. In our next result, a simple two-scale correction improves the \textit{population bias}.
\begin{theorem}\label{thm:two-scale}
Assume $|V|\le M$ almost surely and that $\{\lambda_n\}_n$ satisfies \eqref{eq:lambda-rate}. Define
\begin{align}\label{eq:btil-def}
    \btil
    \;:=\;
    \frac{\log \Lhat(\lambda_n)-\log \Lhat(4\lambda_n)}{\sqrt{2\lambda_n}}.
\end{align}
Then, $\btil \xrightarrow{P} b$. Moreover, the population bias of $\btil$ improves upon that of $\bhat$ in the following sense:
\begin{equation} \label{eqn:estimators_convergence_rate}
\left |
~\frac{\log \Lb(\lambda)-\log \Lb(4\lambda)}{\sqrt{2\lambda}} - b ~
\right |
=
O\!\left(\frac{1}{\lambda}\right),
\qquad
\left |
~-\frac{\log \Lb(\lambda)}{\sqrt{2\lambda}} - b~
\right |
=
\Omega\!\left(\frac{1}{\sqrt{\lambda}}\right).
\end{equation}
\end{theorem}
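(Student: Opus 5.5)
Conditioning on $V$ in \eqref{eqn:laplace_transform_drift} gives the heterogeneous Laplace transform
\[
\Lb(\lambda)=\E_{V\sim\Fstar}\!\left[\frac{\cosh(bV)}{\cosh\bigl(b\sqrt{2\lambda+V^2}\bigr)}\right].
\]
The plan is to derive a precise large-$\lambda$ expansion of $\log\Lb(\lambda)$ from this formula and then read off both rates. Write $\cosh(bs)^{-1}=2e^{-bs}(1+e^{-2bs})^{-1}$ and expand the square root uniformly in $|V|\le M$:
\[
\sqrt{2\lambda+V^2}=\sqrt{2\lambda}+\frac{V^2}{2\sqrt{2\lambda}}+O(\lambda^{-3/2}).
\]
Since $e^{-2b\sqrt{2\lambda}}$ is exponentially small, this yields
\[
\Lb(\lambda)=2e^{-b\sqrt{2\lambda}}\,\E\!\left[\cosh(bV)\Bigl(1-\tfrac{bV^2}{2\sqrt{2\lambda}}+O(\lambda^{-1})\Bigr)\right].
\]
Hence
\[
\log\Lb(\lambda)=-b\sqrt{2\lambda}+C_0-\frac{C_1}{\sqrt{2\lambda}}+O(\lambda^{-1}),
\]
with $C_0=\log\bigl(2\E[\cosh(bV)]\bigr)$ and $C_1=b\,\E[V^2\cosh(bV)]/\E[\cosh(bV)]\ge0$. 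All $O(\cdot)$ terms are uniform because $|V|\le M$. I will verify this carefully, and I expect it to be the main technical step.

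For the two-scale quantity, subtract the expansion at $4\lambda$, where $\sqrt{2\cdot4\lambda}=2\sqrt{2\lambda}$:
\[
\log\Lb(\lambda)-\log\Lb(4\lambda)=b\sqrt{2\lambda}-\frac{C_1}{2\sqrt{2\lambda}}+O(\lambda^{-1}).
\]
The constant $C_0$ cancels, which is the point of the Richardson step. Dividing by $\sqrt{2\lambda}$ gives bias $-C_1/(4\lambda)+O(\lambda^{-3/2})=O(1/\lambda)$.

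For the one-scale quantity, $-\log\Lb(\lambda)/\sqrt{2\lambda}-b=-C_0/\sqrt{2\lambda}+O(1/\lambda)$. Since $\E[\cosh(bV)]\ge1$, we have $C_0\ge\log 2>0$. This gives the $\Omega(1/\sqrt\lambda)$ lower bound regardless of $\Fstar$, and it confirms that the bias is nonvanishing even in the homogeneous case $V\equiv0$.

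For consistency, reuse the argument of Theorem~\ref{thm:one-scale}, which established $\Lhat(\lambda_n)/\Lb(\lambda_n)\xrightarrow{P}1$ under \eqref{eq:lambda-rate}. The sequence $4\lambda_n$ also satisfies \eqref{eq:lambda-rate}, since $\sqrt{4\lambda_n}=2\sqrt{\lambda_n}=o(\log n)$. Applying that result to both sequences gives
\[
\log\Lhat(\lambda_n)-\log\Lb(\lambda_n)=o_P(1),\qquad \log\Lhat(4\lambda_n)-\log\Lb(4\lambda_n)=o_P(1).
\]
Dividing by $\sqrt{2\lambda_n}\to\infty$ makes these errors $o_P(1/\sqrt{\lambda_n})$. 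Combined with the population bias bound, this gives $\btil\xrightarrow{P}b$.

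The only subtlety is checking that the rate condition transfers to $4\lambda_n$. Condition \eqref{eq:lambda-rate} is what keeps $n\Lb(4\lambda_n)$ large: by the lower bound in the sketch of Theorem~\ref{thm:one-scale}, $\Lb(4\lambda_n)\ge e^{-2b\sqrt{2\lambda_n}-O(1)}$, and this equals $n^{-o(1)}$. So the variance ratio in the Lindeberg--Feller step still vanishes at the larger scale.
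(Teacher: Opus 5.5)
Your proposal is correct and follows the paper's proof essentially step for step. It uses the same uniform second-order expansion of $\log\Lb(\lambda)$ from the mixture formula, the same cancellation of the constant $\log(2\E[\cosh(bV)])$ between scales $\lambda$ and $4\lambda$, the bound $\log(2\E[\cosh(bV)])\ge\log 2>0$ for the $\Omega(\lambda^{-1/2})$ claim, and Theorem~\ref{thm:one-scale} applied at both $\lambda_n$ and $4\lambda_n$ for consistency. The only slip is a factor of two: the constant should be $C_1=b\,\E[V^2\cosh(bV)]/\bigl(2\,\E[\cosh(bV)]\bigr)$, so the Richardson bias is $-b\,\E[V^2\cosh(bV)]/\bigl(8\,\E[\cosh(bV)]\,\lambda\bigr)$, which does not affect either rate.
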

The full proof appears in Appendix~\ref{app:proof-two-scale}. The idea is based on the following simple observation. As shown in the proof, expanding $\log \Lb(\lambda)$ as a function of $\lambda$ and with $\lambda \to \infty$, yields
\begin{align}\label{eq:laplace-expansion} 
\log \Lb(\lambda) \;=\; -b\sqrt{2\lambda} + \log(2C_0) - \frac{bC_2}{2C_0\sqrt{2\lambda}} + O(\lambda^{-1}), \end{align}
where $C_0:=\E[\cosh(bV)]$ and $C_2:=\E[V^2\cosh(bV)]$.
Consequently, the one-scale estimator $\bhat$ has an $O(\lambda_n^{-1/2})$ population bias, caused by the subleading term $\log(2C_0)$ in $\log \Lb(\lambda)$. 
$\btil$ then eliminates the dominating $O(\lambda_n^{-1/2})$ bias via a one-step Richardson extrapolation~\citep{richardson1911approximate, brezinski1991extrapolation}.

\subsection{Consistent estimation of the average drift without knowing the boundary}

Returning to our main goal of estimating the utilitarian objective $\mu = \mathbb{E}_{V \sim \Fstar}[V]$, the natural next step is to plug in the estimator $\btil$ into the estimator developed in \Cref{thm:unbiased}, which assumed knowledge of the boundary $b$. Our next result shows that the resulting estimator is consistent.  The proof is provided in Appendix~\ref{app:proof-plugin-scalar}.
\begin{theorem}
\label{thm:plugin-scalar}
Assume that $|V|\le M$ almost surely and that $\{\lambda_n\}_n$ satisfies \eqref{eq:lambda-rate}. Let
\begin{align*}
    \muhatpi \;:=\; \frac{1}{n}\sum_{i=1}^n z_i\, w_{\btil}(t_i).
\end{align*}
where $\btil$ is the estimator defined in Eq.~\eqref{eq:btil-def}.
Then, we have $\muhatpi \xrightarrow{P} \mu$.
\end{theorem}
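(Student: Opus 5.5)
We split the plug-in estimator as
\[
\muhatpi-\mu=\Big(\muhat-\mu\Big)+\frac1n\sum_{i=1}^n z_i\big(w_{\btil}(t_i)-w_b(t_i)\big),
\]
where $\muhat$ is the oracle estimator of \Cref{thm:unbiased} with the true $b$.

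\textbf{The oracle term.} By \Cref{thm:unbiased}, the summands $z_iw_b(t_i)$ are i.i.d.\ with mean $\mu$. The weak law of large numbers then gives $\muhat\xrightarrow{P}\mu$, provided $\E|w_b(T)|<\infty$. To check integrability, we study $w_b(t)$ in its two regimes.
\begin{itemize}
\item As $t\to0$, the $k=0$ terms dominate both series, so $w_b(t)\approx b/t$.
\item As $t\to\infty$, we would use the Jacobi theta transformation to rewrite both series in their dual (large-$t$) form. This should show that $w_b(t)$ grows at most like a polynomial or linear function of $t$.
\end{itemize}
Under $|V|\le M$, the density of $T$ is $f_0(t)\cosh(bv)e^{-v^2t/2}$ (Girsanov). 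This density decays like $e^{-\pi^2t/(8b^2)}$ at infinity and like $e^{-b^2/(2t)}$ at zero. Hence every polynomial moment of $w_b(T)$ and $1/T$ is finite, uniformly in $v$. In particular, $\E[w_b(T)^2]<\infty$.

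\textbf{The plug-in term.} By \Cref{thm:two-scale}, $\btil\xrightarrow{P}b$, so with probability tending to one we have $\btil\in[b/2,2b]=:I$. On this event, the mean value theorem in $\beta$ gives
\[
\Big|\frac1n\sum_i z_i\big(w_{\btil}(t_i)-w_b(t_i)\big)\Big|\le |\btil-b|\cdot\frac1n\sum_{i=1}^n H(t_i),
\qquad H(t):=\sup_{\beta\in I}|\partial_\beta w_\beta(t)|.
\]
If $\E H(T)<\infty$, the average $\frac1n\sum_i H(t_i)$ is $O_P(1)$ by the law of large numbers (or Markov's inequality). Since $|\btil-b|=o_P(1)$, the product is $o_P(1)$. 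This argument requires no independence between $\btil$ and the $t_i$, because the bound is uniform over $\beta\in I$.

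\textbf{Main obstacle: integrability of $H$.} We differentiate \eqref{eq:wb-def} in $\beta$ as a quotient of series.
\begin{itemize}
\item \emph{Small $t$.} The numerator and denominator are dominated by their $k=0$ terms, with the exponential factors $e^{-\beta^2/(2t)}$ cancelling. This yields $|\partial_\beta w_\beta(t)|\lesssim 1/t^2$, uniformly over $\beta\in I$, plus exponentially small corrections. Since $\E[T^{-2}]<\infty$, this regime is fine.
\item \emph{Large $t$.} This is the delicate point, because the alternating denominator could be small. We plan to use the theta-function dual representation: the denominator becomes proportional to $t^{3/2}\sum_k(-1)^k(2k+1)e^{-(2k+1)^2\pi^2t/(8\beta^2)}$, up to constants. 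This is bounded below by a positive multiple of its leading term for $t\ge t_0$, uniformly in $\beta\in I$. The numerator has a similar dual form. Their ratio and its $\beta$-derivative should then grow at most polynomially in $t$. Polynomial growth is integrable against the exponential tail $e^{-\pi^2t/(8b^2)}$ of $T$.
\item \emph{Intermediate $t\in[t_0',t_0]$.} Here the denominator is a strictly positive continuous function of $(\beta,t)$ on a compact set, since it is proportional to the zero-drift first-passage density. So $H$ is bounded there.
\end{itemize}
If the dual-form estimate proves messy, a fallback is to bound $H$ by a continuity and compactness argument on $[\epsilon,K]$, and handle the tails separately: show directly that $\frac1n\sum_i\indic(t_i>K)\,|w_{\beta}(t_i)|$ is uniformly small in expectation for large $K$.

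Combining the two terms gives $\muhatpi\xrightarrow{P}\mu$.
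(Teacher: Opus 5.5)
Your proposal is correct. It uses the same decomposition as the paper (oracle term plus plug-in residual), and the oracle term is handled the same way: unbiasedness from \Cref{thm:unbiased} plus the weak law of large numbers, given integrability of $w_b(T)$.

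The difference is in the residual. You bound it by a mean-value (Lipschitz-in-$\beta$) estimate, which requires a \emph{derivative} envelope $\sup_{\beta\in I}|\partial_\beta w_\beta(t)|$. The paper uses only continuity of $\beta\mapsto w_\beta(t)$ and the envelope $H_K$ for $|w_\beta|$ itself (Lemma~\ref{lem:envelope}). It defines the modulus $r_\delta(\tau)=\sup_{|\beta-b|\le\delta}|w_\beta(\tau)-w_b(\tau)|$, which satisfies $r_\delta\le 2H_K$. Dominated convergence then gives $\E_b[r_\delta(T)]\to 0$. The paper chooses $\delta$ so that this is below $\varepsilon/2$, and applies the law of large numbers on the event $\{|\btil-b|\le\delta\}$.

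What each route buys:
\begin{itemize}
\item The paper's route reuses the envelope already needed for the oracle term. It never has to differentiate a ratio of theta-type series, so it is less analytic work.
\item Your route needs that extra work in all three regimes, most notably differentiating the dual (large-$t$) forms. In exchange it gives a quantitative bound $|R_n|\le|\btil-b|\cdot O_P(1)$, so the plug-in error inherits the rate of $\btil$. The paper's qualitative argument does not give this.
\end{itemize}

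Your asymptotic bounds are conservative but valid:
\begin{itemize}
\item Near zero, $\partial_\beta w_\beta(t)=1/t+1/\beta^2$ up to exponentially small corrections, so your $1/t^2$ bound holds.
\item At infinity, the dual representations in Lemmas~\ref{lemma:f_0} and~\ref{lemma:q_beta} show that $w_\beta(t)$ and $\partial_\beta w_\beta(t)$ do not grow at all. Both are $O\bigl((1+t)e^{-3\pi^2t/(8\beta^2)}\bigr)$ uniformly over $\beta$ in a compact set.
\end{itemize}
So your ``at most polynomial growth'' hedge is more than satisfied.
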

\textit{Proof sketch:}
Decompose $\muhatpi-\mu$ as $A_n+R_n$, with $A_n:=\frac{1}{n}\sum_i z_i w_b(t_i)-\mu$ and $R_n:=\frac{1}{n}\sum_i z_i\bigl(w_{\btil}(t_i)-w_b(t_i)\bigr)$. We first note that $A_n \xrightarrow{P}0$ by \Cref{thm:unbiased} and the law of large numbers. To show $R_n$ also converges to zero in probability, we establish a envelope lemma (Lemma~\ref{lem:envelope} in the appendix) that controls $|w_\beta(t)|$ by \emph{gluing together three functions}: (i) $C_K(1+t^{-1})$ as $t$ is close to zero, (ii) a constant for $t$ in a compact middle range, and (iii) an exponential bound $C_K e^{-c_K t}$ as $t$ is large, all \emph{uniformly} in $K$. Integrability of this envelope against the marginal law of $t_i$ then yields $R_n\xrightarrow{P}0$ by using the dominated convergence theorem together with the law of large numbers. \qed

\section{A Consistent Estimator for Heterogeneous Linear Preferences}
\label{sec:linear}
We next move to the more general setting described in \Cref{sec:prelim}. We assume that, for each $x \in \mathcal{X}$, user $i$'s utility is given by $u_i(x) = \phi(x)^\top \theta_i$ for some underlying latent preference vector $\theta_i \sim G^*\in \Delta(\R^d)$ and \emph{shared} feature map $\phi:\mathcal{X} \to \R^d$. In this setting, given two alternatives $x_i$ and $y_i$, the drift has the linear form $v_i=\psi_i^\top\theta_i$, where the parameter $\psi_i := \phi(x_i) - \phi(y_i) \in \R^d$ is known to the algorithm. Throughout this section, we assume that $(\psi_i,\theta_i)$ are exogenous and i.i.d. across $i$, and the DDM paths are independent. Our goal is to estimate $\theta^\star := \E_{\theta \sim G^*} [\theta]$.

We next show how the results derived in the previous section can be used to develop a consistent estimator of the mean preference vector $\theta^\star$. Let $\vhat_i \;:=\; z_i\,w_{\btil}(t_i).$
Now consider the linear regression problem in which the $i$-th observation consists of the covariate vector $\psi_i$ and the \emph{pseudo-outcome} $\vhat_i$. Define the sample covariance matrix $\widehat Q_n \;:=\; \frac{1}{n}\sum_{i=1}^n \psi_i\psi_i^\top$ and empirical cross-moment $\widehat m_n \;:=\; \frac{1}{n}\sum_{i=1}^n \psi_i\,\vhat_i$.
Then, the \textit{plug-in regression estimator} is
\begin{equation}\label{eq:thetahat}
\thetahat \;:=\; \widehat Q_n^{-1}\widehat m_n.  
\end{equation}
We further assume that the covariance matrix $Q:=\E[\psi_i\psi_i^\top]$ is positive definite and that the sample covariance matrix $\widehat Q_n$ is almost surely invertible, which holds when $n \geq d$ and the distribution of $\psi_i$'s is non-degenerate \citep{mourtada2022exact}.
\begin{theorem}
\label{thm:linear-main}
Assume that there exist finite constants $L$ and $R$ such that $\|\psi_i\|\le L$ and $\|\theta_i\|\le R$ almost surely\footnote{We relax these assumptions in Appendix~\ref{subsec:light-tailed-drifts}, thus accommodating bounded $\psi_i$ and Gaussian $\theta_i$.}, and that the sequence $\{\lambda_n\}_n$ satisfies \eqref{eq:lambda-rate}. Then, we have $ \btil \xrightarrow{P} b$ and $\thetahat \xrightarrow{P} \theta^\star$.
Consequently, for every fixed $x\in\cX$, $\phi(x)^\top \thetahat \xrightarrow{P} \phi(x)^\top \theta^\star$.
\end{theorem}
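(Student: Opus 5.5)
The proof reduces the linear setting to the scalar results of \Cref{sec:estimators}. Set $v_i=\psi_i^\top\theta_i$. The bounds $\|\psi_i\|\le L$ and $\|\theta_i\|\le R$ give $|v_i|\le M:=LR$ almost surely. The pairs $(\psi_i,\theta_i)$ are i.i.d., so the drifts $v_i$ are i.i.d., and the triples $(v_i,z_i,t_i)$ are i.i.d. draws of the scalar \ddm{} model with drift law $\Fstar:=$ law of $\psi^\top\theta$. The estimator $\btil$ depends only on the $t_i$. Hence \Cref{thm:two-scale} applies verbatim and gives $\btil\xrightarrow{P} b$.

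For $\thetahat$, write $\thetahat=\widehat Q_n^{-1}\widehat m_n$. By the weak law of large numbers with bounded $\psi_i$, we have $\widehat Q_n\xrightarrow{P}Q$. Since $Q\succ0$, the continuous mapping theorem gives $\widehat Q_n^{-1}\xrightarrow{P}Q^{-1}$. It therefore suffices to show $\widehat m_n\xrightarrow{P}Q\theta^\star$. Decompose
\[
\widehat m_n=\frac1n\sum_i\psi_i z_i w_b(t_i)+\frac1n\sum_i\psi_i z_i\bigl(w_{\btil}(t_i)-w_b(t_i)\bigr)=:A_n+R_n .
\]

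For $A_n$, condition on $(\psi_i,\theta_i)$. By \Cref{thm:unbiased},
\[
\E[z_iw_b(t_i)\mid\psi_i,\theta_i]=v_i=\psi_i^\top\theta_i .
\]
Therefore
\[
\E[\psi_iz_iw_b(t_i)]=\E[\psi_i\psi_i^\top\theta_i].
\]
This equals $\E[\psi_i\psi_i^\top]\E[\theta_i]=Q\theta^\star$ by exogeneity. Here exogeneity is read as $\theta_i$ being independent of $\psi_i$, or at least mean-independent, $\E[\theta_i\mid\psi_i]=\theta^\star$. I will state this explicitly, since it is the only place the exogeneity assumption enters. Integrability of $\psi_iz_iw_b(t_i)$ follows from $\|\psi_i\|\le L$ together with $\E|w_b(T)|<\infty$. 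The latter comes from the envelope of Lemma~\ref{lem:envelope} integrated against the law of $T$ under bounded drift. The WLLN then gives $A_n\xrightarrow{P}Q\theta^\star$.

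The main obstacle is $R_n$, and it is handled as in the proof of \Cref{thm:plugin-scalar}. First,
\[
\|R_n\|\le L\cdot\frac1n\sum_i|w_{\btil}(t_i)-w_b(t_i)| .
\]
Fix $\delta>0$ small and set $K=[b-\delta,b+\delta]$. On the event $\{\btil\in K\}$, whose probability tends to $1$, we bound
\[
\frac1n\sum_i|w_{\btil}(t_i)-w_b(t_i)|\le \frac1n\sum_i\sup_{\beta\in K}|w_\beta(t_i)-w_b(t_i)|=:\frac1n\sum_i h_\delta(t_i).
\]
By the WLLN, this average converges in probability to $\E[h_\delta(T)]$. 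The function $w_\beta(t)$ is continuous in $\beta$ for each $t>0$, since the series converge locally uniformly. Hence $h_\delta(t)\to0$ pointwise as $\delta\to0$. The uniform envelope of Lemma~\ref{lem:envelope} dominates $h_\delta$ by an integrable function. This envelope is of the form $C_K(1+t^{-1})$ near $0$, bounded in the middle range, and $C_Ke^{-c_Kt}$ in the tail, and its integrability against the marginal law of $T$ holds for $|V|\le M$. Dominated convergence therefore gives $\E[h_\delta(T)]\to0$ as $\delta\to0$.

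A standard $\varepsilon$–$\delta$ argument then yields $R_n\xrightarrow{P}0$. Combining the pieces, $\widehat m_n\xrightarrow{P}Q\theta^\star$ and hence $\thetahat\xrightarrow{P}\theta^\star$ by Slutsky. The final claim, $\phi(x)^\top\thetahat\xrightarrow{P}\phi(x)^\top\theta^\star$, follows by continuous mapping.
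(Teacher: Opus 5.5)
Your proposal is correct and follows essentially the same route as the paper's proof. You reduce to the scalar drift $v_i=\psi_i^\top\theta_i$ with $|v_i|\le LR$ to get $\widetilde B_n\xrightarrow{P} b$ from \Cref{thm:two-scale}, apply the WLLN to $\widehat Q_n$, split $\widehat m_n$ into an unbiased main term (via \Cref{thm:unbiased} and $\E[\theta_i\mid\psi_i]=\theta^\star$) plus a plug-in remainder controlled by the envelope of Lemma~\ref{lem:envelope} and dominated convergence, and finish by continuous mapping. The only cosmetic difference is that the paper fixes $K=[b/2,3b/2]$ for the dominating envelope; your shrinking $K=[b-\delta,b+\delta]$ works equally well, since the envelope for a fixed $K_{\delta_0}$ dominates $h_\delta$ for all $\delta\le\delta_0$.
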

\textit{Proof sketch:}
By the weak law of large numbers applied to $\widehat Q_n$, we have $\widehat Q_n\xrightarrow{P}Q$.
Next, we establish that $\widehat m_n\xrightarrow{P}Q\theta^\star$. 
To do so, decompose $\widehat m_n = S_n+R_n$, where $S_n :=\frac{1}{n}\sum_i\psi_i z_iw_b(t_i)$ and $R_n :=\frac{1}{n}\sum_i\psi_i z_i(w_{\btil}(t_i)-w_b(t_i))$. The term $S_n$ has expectation $\E[\psi_i\,\E[z_i w_b(t_i)\mid\psi_i,\theta_i]]=\E[\psi_i\psi_i^\top\theta_i]=Q\theta^\star$ where the first equality follows from \Cref{thm:unbiased}. Moreover, the residual term $R_n\xrightarrow{P} 0$ by the same argument of \Cref{thm:plugin-scalar}. 
Therefore, the continuous mapping theorem gives
$\thetahat=\widehat Q_n^{-1}\widehat m_n\xrightarrow{P}Q^{-1}Q\theta^\star=\theta^\star$. 
The full proof is provided in Appendix \ref{app:proof-linear-main}. \qed

\begin{remark}[Identifiability with unknown boundary]
Because the single-reward \btmodel{} distribution depends on $b$ and $V$ only through their product $bV$, the mean preference vector $\theta^\star$ is \emph{not} identifiable from choices alone when $b$ is unknown---even asymptotically. However, as \Cref{thm:linear-main} shows, response times restore identifiability: the boundary and the utility scale are both pinned down by the introduced estimators.
\end{remark}


\subsection{Recovering the utilitarian aggregator}
\label{sec:util-aggregation}

Consider a utilitarian planner who wants to choose the alternative (identified by its feature vector in $\R^d$) that maximizes society's average utility, given by\footnote{For any distribution $\kappa \in \Delta(\mathcal{X})$, we similarly define $\mathrm{AvgUtil}(\kappa):= \E_{x \sim \kappa}[\mathrm{AvgUtil}(x)]$.}
\begin{equation}
\mathrm{AvgUtil}(x) = \E_{\theta \sim G^*}[x^\top \theta] = x^\top \theta^\star.
\end{equation}

Assume $\sup_{x \in \mathcal{X}} \mathrm{AvgUtil}(x)>0$. The planner has access to a dataset $\mathcal{D}_n$ consisting of $n$ choice observations, possibly augmented with response-time observations, from $n$ users. The planner then applies a possibly randomized algorithm $\mathcal{A}$ to $\mathcal{D}_n$. The expected average utility of $\mathcal{A}$, denoted by $\mathrm{AvgUtil}_{n}(\mathcal{A})$, is defined as
\begin{equation*}
\mathrm{AvgUtil}_{n}(\mathcal{A})
:=
\E\!\left[\mathrm{AvgUtil}\bigl(\mathcal{A}(\mathcal{D}_n)\bigr)\right],    
\end{equation*}
where the expectation is taken over the randomness in the dataset $\mathcal{D}_n$ and any additional randomness in the algorithm.

The distortion of $\mathcal{A}$ is the worst-case asymptotic competitive ratio between the optimal average utility and the average utility achieved by the algorithm. This is formally defined as:
\begin{equation}
\mathrm{dist}(\mathcal{A})
:=
\sup_G
\limsup_{n \to \infty}
\frac{\sup_{x \in \mathcal{X}} \mathrm{AvgUtil}(x)}{\mathrm{AvgUtil}_{n}(\mathcal{A})}.
\end{equation}

In other words, the distortion quantifies whether there is an inherent limit on how well a utilitarian decision maker can recover the optimal decision from preference data, even as the dataset size $\to \infty$.

Let us first consider the setting without response-time data. \citet[Theorem 3]{golz2025distortion} provide a lower bound of
$
\frac{\beta}{2}\cdot \frac{1+\exp(-\beta)}{1-\exp(-\beta)}
$
on the distortion rate, where $\beta$ is the temperature parameter, corresponding to the boundary $b$ in our model up to a constant factor of $2$. This lower bound can also be realized by a linear-utility model in higher dimensions. Thus, even with infinitely many choice observations, a utilitarian planner may fail to recover an alternative with optimal average utility.

On the other hand, suppose that response-time data are also included in the dataset $\mathcal{D}$. A natural algorithm $\mathcal{A}$ in this setting is to select
$
\arg\max_{x\in \cX}\, x^\top \thetahat.
$
Assume that the alternatives are uniformly bounded, and that the above maximizer exists. Then \Cref{thm:linear-main} implies that, for every $\varepsilon>0$, the probability that the gap between the maximum average utility and the average utility of the algorithm's output exceeds $\varepsilon$ converges to zero. Consequently, the distortion rate in this setting is equal to one. In this sense, response-time data allow the utilitarian planner to bypass the choice-only distortion barrier and asymptotically recover an optimal decision.
\section{Experiments}
\label{sec:experiments}

We validate our estimators from Sections~\ref{sec:estimators} and \ref{sec:linear} on synthetic \ddm{} data and on an intertemporal-choice dataset, collected by \citet{amasino2019amount}. In every experiment the boundary is treated as \emph{unknown} and estimated from response times alone via the bias-corrected estimator $\btil$ of Eq.~\eqref{eq:btil-def} with $\lambda_n=(\log n)^{3/2}$; the weight $w_\beta$ is truncated at $K=100$ terms; and each reported value averages over $R=50$ independent Monte-Carlo replications. Full experimental details appear in Appendix~\ref{app:experiments}.

\subsection{Synthetic experiments}
\label{sec:exp-synthetic}

\begin{figure}[t]
  \centering
  \begin{subfigure}[t]{0.47\linewidth}
    \centering
    \includegraphics[width=\linewidth]{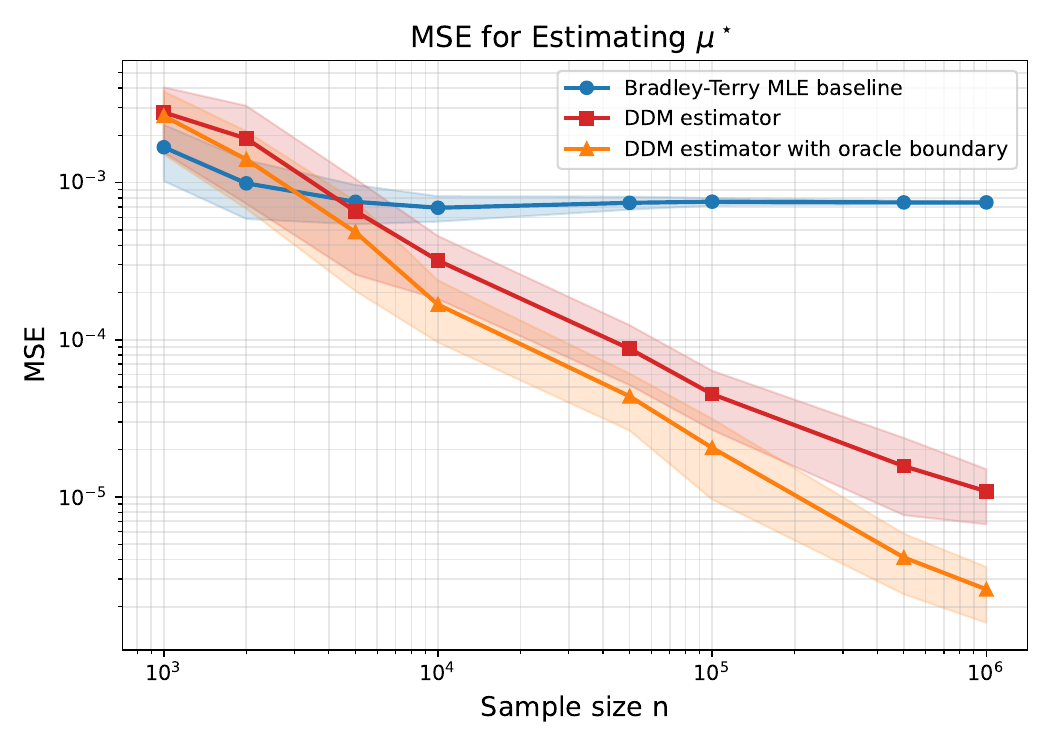}
    \caption{Uniform prior on $V$.}
    \label{fig:exp_basic_uniform}
  \end{subfigure}
  \hfill
  \begin{subfigure}[t]{0.47\linewidth}
    \centering
    \includegraphics[width=\linewidth]{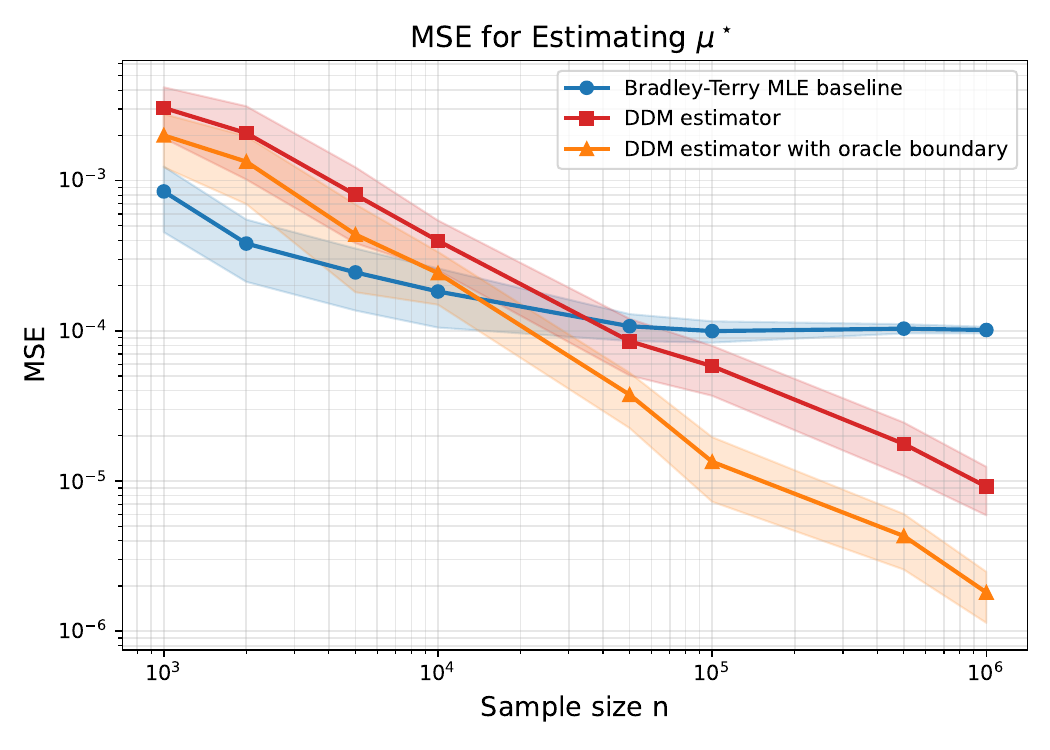}
    \caption{Beta prior on $V$.}
    \label{fig:exp_basic_beta}
  \end{subfigure}
  \caption{\textbf{Tabular setting.} Monte-Carlo MSE of $\muhat$ versus sample size $n$ for two latent-drift priors. The DDM estimator dominates the baseline, which plateaus at a heterogeneity-induced bias floor. Shaded bands denote pointwise $95\%$ confidence intervals over $R=50$ replications.}
  \label{fig:exp_basic_mse}
\end{figure}

\paragraph{Data generating process.}
We study two regimes mirroring Sections~\ref{sec:estimators} and~\ref{sec:linear}. In the \emph{tabular} setting, we fix two alternatives and sample drifts $v_1,\dots,v_n$ i.i.d.\ from $\Fstar$ with mean $\mu^\star=0.25$ and boundary $b^\star=1.25$; we consider both Uniform and Beta priors on the drift to reflect heterogeneous preference. In the \emph{linear} setting, we consider the linear drift model $v_i=\psi_i^\top\theta_i$ with unknown boundary $b=1.25$. Each alternative vector  $\psi_i\sim\mathrm{Uniform}[-1,1]^4$, and the mean preference feature is $\theta^\star=(0.25,-0.15,0.10,-0.30)$, where each coordinate of $\theta_i$ follows one of four priors with variance $\sigma_\theta^2=0.25$: Gaussian, Uniform, Beta, or Laplace (truncated to $\pm 6\sigma_\theta$).

\paragraph{Methods.}
In both regimes we compare the following three estimators.
\begin{itemize}[leftmargin=1em,itemsep=1pt,topsep=0.5pt]
    \item \emph{\btmodel{} MLE baseline}: the MLE minimizer of \btmodel{} (temperature calibrated by the oracle boundary $b^\star$) over the choice-only data. In the \emph{tabular} setting, $\widehat\mu_n^{\mathrm{BT}}=\mathrm{arctanh}\bigl(\tfrac{1}{n}\sum_i Z_i\bigr)/ b^\star$; in the \emph{linear} setting, the pooled \btmodel{} logistic MLE $\widehat\theta_n^{\mathrm{BT}}=\arg\min_\theta \frac{1}{n}\sum_i\log(1+e^{-2b^\star Z_i\psi_i^\top\theta})$.
    \item \emph{Drift Diffusion Model (DDM)}: the consistent estimator developed by \Cref{thm:plugin-scalar} and \Cref{thm:linear-main}. In the \emph{tabular} setting, $\hat \mu_n^{\mathrm{DDM}} =\frac{1}{n}\sum_i Z_iw_{\btil}(\tau_i)$; in the \emph{linear} setting, $\hat \theta_n^{\mathrm{DDM}}$ is the OLS estimator in Eq.~\eqref{eq:thetahat}. Both use the bias-corrected boundary estimate $\btil$.
    \item \emph{DDM with Oracle Boundary:} identical to the previous Drift Diffusion Model estimator except replaces $\btil$ by the oracle boundary $b^\star$.
\end{itemize}

\paragraph{Results.}
We measure the Monte-Carlo MSE to the ground-truth population target: $\E[(\muhat-\mu^\star)^2]$ in the tabular case and $\E[\|\thetahat-\theta^\star\|_2^2]$ in the linear case. Figures~\ref{fig:exp_basic_mse} and~\ref{fig:exp_linear_mse} show two robust patterns. First, the \btmodel{} baseline plateaus at a nonzero bias floor. In comparison, the error of our DDM estimator decays as sample size grows, and achieves substantially smaller errors. Second, using the estimated boundary $\btil$ incurs only moderate degradation
relative to the oracle-boundary estimator, supporting the practical effectiveness
of the Richardson correction.

\begin{figure}[t]
  \centering
  \begin{subfigure}[t]{0.47\linewidth}
    \centering
    \includegraphics[width=\linewidth]{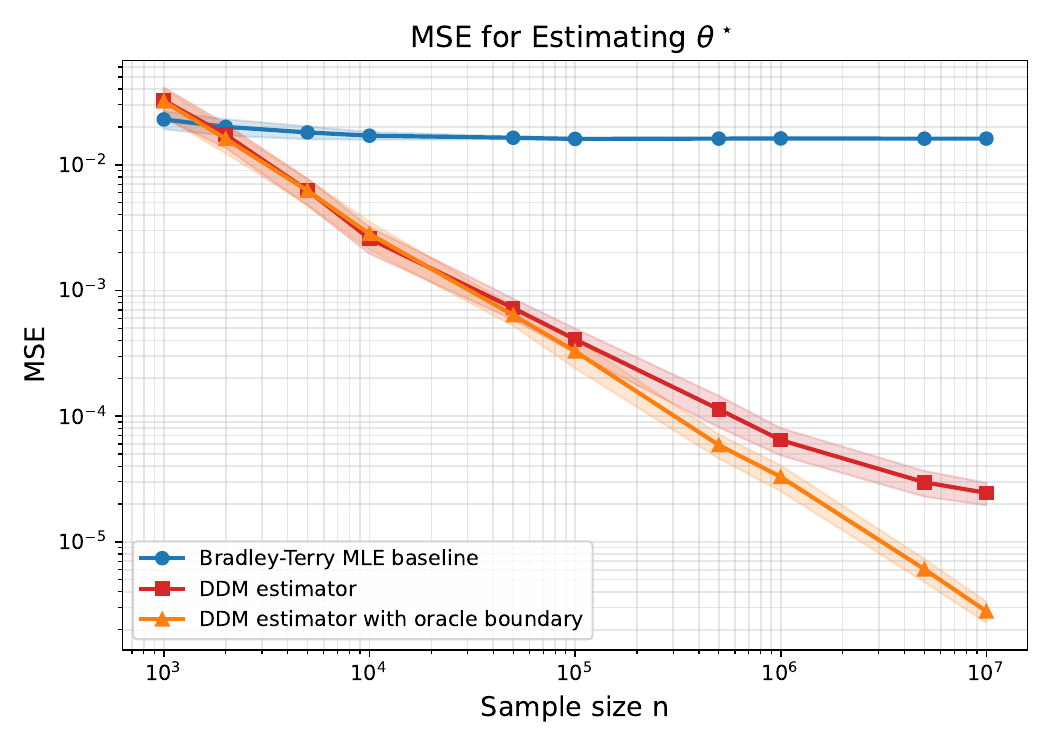}
    \caption{Gaussian.}
    \label{fig:exp_linear_gaussian}
  \end{subfigure}
  \hfill
  \begin{subfigure}[t]{0.47\linewidth}
    \centering
    \includegraphics[width=\linewidth]{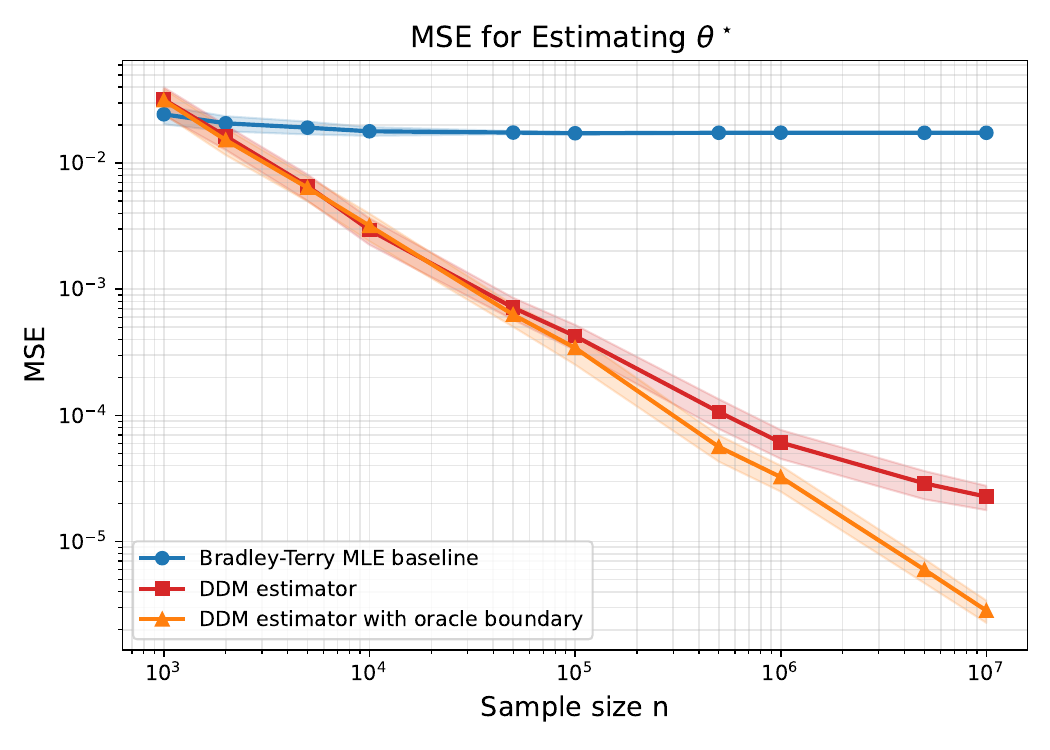}
    \caption{Uniform.}
    \label{fig:exp_linear_uniform}
  \end{subfigure}
  \begin{subfigure}[t]{0.47\linewidth}
    \centering
    \includegraphics[width=\linewidth]{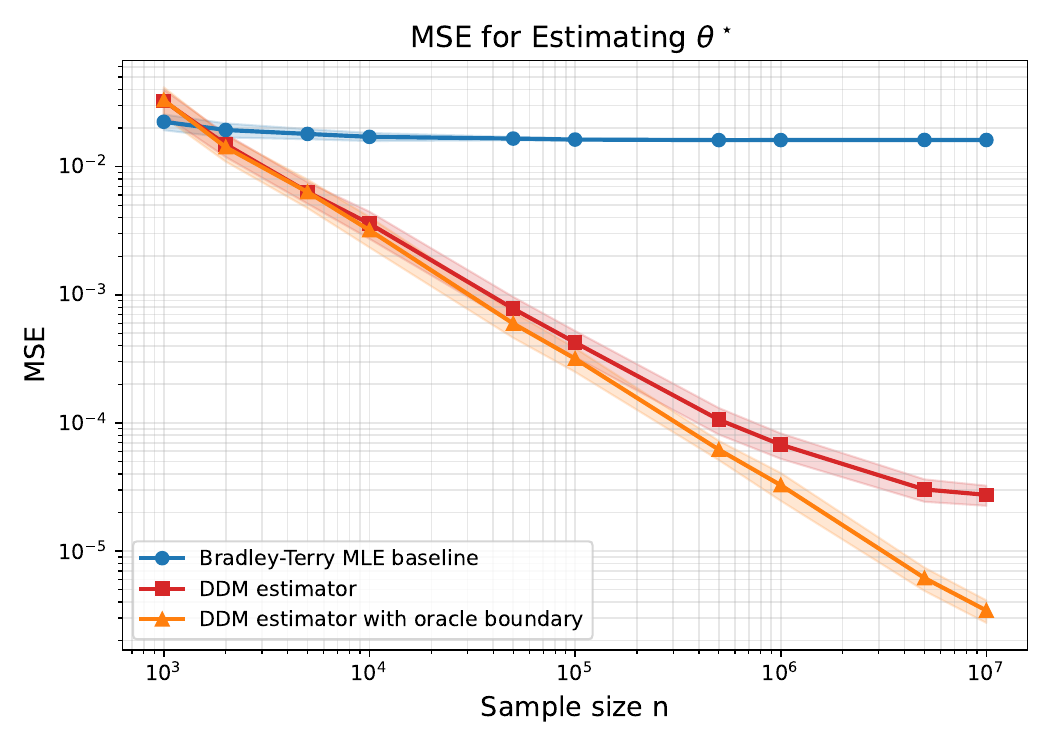}
    \caption{Beta.}
    \label{fig:exp_linear_beta}
  \end{subfigure}
  \hfill
  \begin{subfigure}[t]{0.47\linewidth}
    \centering
    \includegraphics[width=\linewidth]{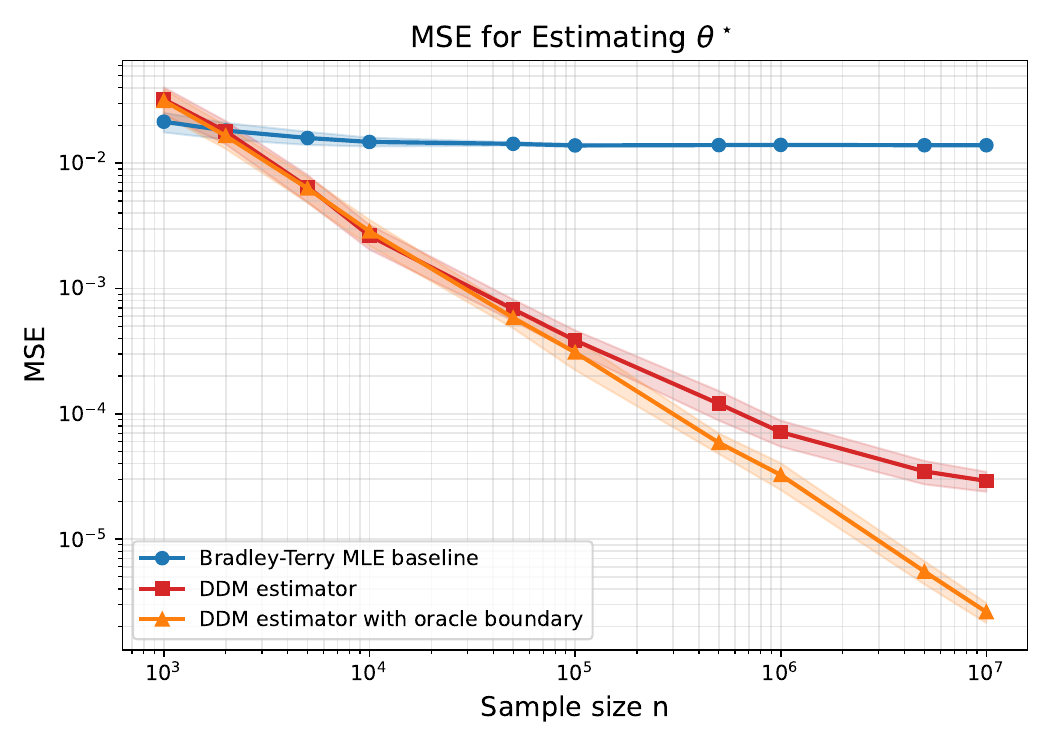}
    \caption{Laplace.}
    \label{fig:exp_linear_laplace}
  \end{subfigure}
  \caption{\textbf{Linear contextual setting.} Monte-Carlo MSE of $\thetahat$ versus sample size $n$ for four priors on $\theta_i\in\R^4$. The DDM estimator again tracks the oracle and uniformly dominates the baseline. Shaded bands denote pointwise $95\%$ confidence intervals over $R=50$ replications.}
  \label{fig:exp_linear_mse}
\end{figure}

\subsection{Real-data experiment}
\label{sec:exp-real}

\paragraph{Setup.} We use the publicly available intertemporal-choice dataset of \citet{amasino2019amount}. Each trial asks a participant to choose between a smaller-sooner reward $s_r\in\{0.5,\dots,9.5\}$ paid today and a larger-later reward $\ell_r=10$ paid after delay $\ell_d\in\{1,7,15,30,90,180,365\}$ days, with response time $\tau_i$ recorded. Following \citet{echenique2025response}, we encode the trial by $\psi_i=\big((\ell_r-s_r)/9.5,\,-\ell_d/365\big)\in\R^2$. After dropping missing trials and two participants with only one choice class, the pooled sample contains $n_{\max}=13{,}793$ observations from $S=98$ participants. The target is the subject-level average preference feature $\theta^\star = \frac{1}{S}\sum_{s=1}^S\widehat\theta^{(s)} \in \R^2$, computed by averaging per-participant \btmodel{} MLE $\widehat\theta^{(s)}$. Then we subsample $n$ anonymous data points used to estimate $\theta^\star$.

\begin{figure}[t]
  \centering
  \includegraphics[width=0.7\linewidth]{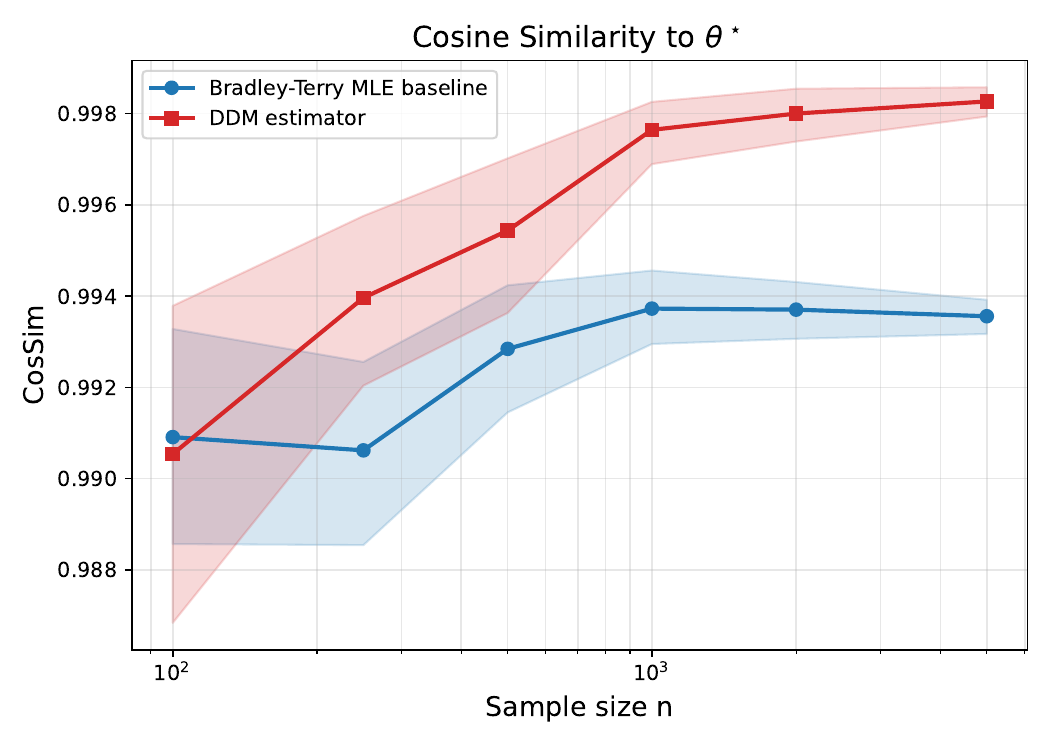}
  \caption{\textbf{Intertemporal choice data.} Cosine similarity between each estimator and the subject-level target versus subsample size $n$ on \citet{amasino2019amount} dataset. Shaded regions are 95\% bootstrap bands over $R=50$ replications. Our DDM estimator achieves higher accuracy than the baseline.}
  \label{fig:exp_real_cossim}
\end{figure}

\paragraph{Methods.} We compare the following two estimators.
\begin{itemize}[leftmargin=1em,itemsep=1pt,topsep=0.5pt]
    \item \emph{\btmodel{} MLE baseline}: the \btmodel{} logistic MLE over choice-only data.
    \item \emph{Drift Diffusion Model (DDM)}: the OLS estimator in Eq.~\eqref{eq:thetahat} using the bias-corrected boundary estimate $\btil$.
\end{itemize}

\paragraph{Results.} Because the common boundary is unknown, only the \emph{direction} of $\theta^\star$ is comparable across estimators. We therefore report $\mathrm{CosSim}(\widehat\theta,\theta^\star)$, averaged over $R=50$ subsamples of size $n$. Figure~\ref{fig:exp_real_cossim} shows that the DDM estimator is closer to the target than the \btmodel{} MLE baseline at nearly every subsample sizes. As $n$ grows, the baseline plateaus around $\mathrm{CosSim} \approx 0.993-0.994$, whereas our method reaches about $0.998$ at the largest evaluated subsample size $n=5000$. This is consistent with the theory: response times retain preference information that choices alone discard, even when the \ddm{} is only an approximation.

\section{Conclusion}
\label{sec:conclusion}
Under a drift-diffusion model with a common but unknown decision boundary, binary choices combined with response times are sufficient to construct a consistent estimator of the population-average preference in a heterogeneous anonymous labeler pool, a target that is not identifiable from choices alone.

\paragraph{Limitations and future work.} The theory assumes that choices are well-described by a \ddm{} with a common boundary; in practice, caution may vary across annotators or over time, and the model can be misspecified in other ways as well. The relaxation of these assumptions is discussed in the appendix. 
Moreover, the boundary estimator relies on a large-$\lambda$ Laplace tail and converges more slowly in finite samples than the drift estimator. Extending the analysis to heterogeneous boundaries and integrating the response-time pseudo-outcomes into end-to-end alignment objectives are natural next steps.

\section{Acknowledgment}
Part of this work was supported by the Simons Institute for the Theory of Computing, and conducted when Alireza Fallah visited the Institute. We also wish to acknowledge funding by the European Union (ERC-2022-SYG-OCEAN-101071601). Views and opinions expressed are however those of the author(s) only and do not
necessarily reflect those of the European Union or the European Research Council Executive Agency. Neither the European Union nor the granting authority can be held responsible for them.

\bibliography{ref}

\newpage
\appendix

\section{Omitted Proofs}
\label{app:proofs}

This appendix collects the full proofs of the results stated in the main body.
\subsection{Proof of \Cref{thm:unbiased}}
\label{app:proof-unbiased}

Fix $b>0$. We first introduce some notation. Let $p_v(z,t)$ denote the joint density of $(Z,T)$ under drift $V=v$. Let $\Prob_v$ denote the marginal distribution of $Z$. Finally, $\E_v$ denotes the expectation under a \ddm{} with fixed drift $V=v$ and boundary $b$. The proof has three stages.

\paragraph{Stage 1: Joint distribution of $Z$ and $T$.}
Applying Girsanov’s theorem, we obtain the tilting identity (see also \citep{echenique2025response}[Lemma 3])
\begin{align*}
p_v(z,t) \;=\; p_0(z,t)\,\exp\!\big(bvz-\frac{1}{2}v^2 t \big).
\end{align*}
Under zero drift and symmetric boundaries, $Z$ is symmetric and independent of $T$, so $p_0(z,t)=\frac12 f_0(t)$ for some density $f_0$. Hence
\begin{align}\label{eq:app-tilt}
    p_v(z,t) \;=\; \frac12 f_0(t)\,\exp\!\big(bvz-\frac{1}{2}v^2 t\big).
\end{align}
Summing \eqref{eq:app-tilt} over $z\in\{\pm 1\}$ gives the marginal of $T$ under drift $v$, and the conditional probability of $Z$ given $T$ is
\begin{align*}
    \Prob_v(Z=+1 \mid T=t) \;=\; \frac{e^{bv}}{e^{bv}+e^{-bv}} \;=\; \frac{1}{1+e^{-2bv}},
\end{align*}
which does not depend on $t$. Therefore $Z\perp\!\!\!\perp T$ under drift $v$, with $\E_v[Z]=\tanh(bv)$.

\paragraph{Stage 2: Reduction to a Laplace identity.}
Because $Z$ and $T$ are independent under drift $v$,
\begin{align}\label{eq:app-indep}
    \E_v[Z w_b(T)] \;=\; \tanh(bv)\,\E_v[w_b(T)].
\end{align}
Therefore, to establish $\E_v[Zw_b(T)]=v$ for all $v$, it suffices to show
\begin{align}\label{eq:app-goal}
    \E_v[w_b(T)] \;=\; \frac{v}{\tanh(bv)}.
\end{align}
Using \eqref{eq:app-tilt},
\begin{align*}
    \E_v[w_b(T)]
    &= \int_0^\infty w_b(t) f_0(t)\,\frac{e^{bv}+e^{-bv}}{2}\,e^{-v^2t/2}\,dt
    = \cosh(bv)\,G(v^2/2),
\end{align*}
where $G(s):=\int_0^\infty w_b(t)f_0(t)e^{-st}\,dt$. Note that $G(s)$ is the Laplace transform of $w_b(t)f_0(t)$.
Matching to \eqref{eq:app-goal} and canceling $\cosh(bv)$, we can see that we have to show
\begin{align}\label{eq:app-G}
    G(s) \;=\; \frac{\sqrt{2s}}{\sinh(b\sqrt{2s})}.
\end{align}
In other words, we have to show that $w_b(t)f_0(t)$ is the inverse Laplace transform of the right-hand side.

\paragraph{Stage 3: Series inversion.}
First note that the distribution of response time under zero drift, i.e., $f_0(t)$, is given by \citep{karatzas1991brownian}[Equation 8.24]
\begin{align}\label{eq:app-f0}
    f_0(t) \;=\; \sqrt{\frac{2}{\pi}}\,t^{-3/2}\sum_{k=0}^\infty (-1)^k \ck(b)\exp\!\left(-\frac{\ck(b)^2}{2t}\right).
\end{align}

Next, note that, by expanding $1/\sinh(y)=2e^{-y}/(1-e^{-2y})=2\sum_{k\ge0}e^{-(2k+1)y}$ and multiplying by $\sqrt{2s}$, we have
\begin{align*}
    \frac{\sqrt{2s}}{\sinh(b\sqrt{2s})} \;=\; 2\sum_{k=0}^\infty \sqrt{2s}\,e^{-\ck(b)\sqrt{2s}}.
\end{align*}
Now, to compute the inverse Laplace transform of the left hand side, first note that the inverse Laplace transform is a linear operator, and so, it suffices to find the inverse Laplace transform of $\mathcal{L}^{-1}[\sqrt{2s}\,e^{-c\sqrt{2s}}]$ for every $c$.

Notice that
\begin{equation}
\mathcal{L}^{-1}\!\left[e^{-c\sqrt{s}}\right](t)
=
\frac{c}{2\sqrt{\pi}\,t^{3/2}}
\exp\!\left(-\frac{c^{2}}{4t}\right),
\qquad c>0,\ t>0.    
\end{equation}
Next,
differentiating $\mathcal{L}^{-1}[e^{-c\sqrt{2s}}](t)$ with respect to $c$ gives
$-\mathcal{L}^{-1}[\sqrt{2s}\,e^{-c\sqrt{2s}}](t)=\frac{1}{\sqrt{2\pi}}t^{-3/2}e^{-c^2/(2t)}(\frac{c^2}{t}-1)$. 

Therefore, by linearity of the inverse Laplace transform, we have
\begin{align}\label{eq:app-q}    q(t):=\mathcal{L}^{-1}\!\left[\frac{\sqrt{2s}}{\sinh(b\sqrt{2s})}\right]\!(t)
\;=\; \sqrt{\frac{2}{\pi}}\,t^{-3/2}\sum_{k=0}^\infty\exp\!\left(-\frac{\ck(b)^2}{2t}\right)\!\left(\frac{\ck(b)^2}{t}-1\right).
\end{align}

Recall that our goal was to show the $q(t)$ above is equal to $f_0(t) w_b(t)$. Dividing \eqref{eq:app-q} by \eqref{eq:app-f0} and canceling the common prefactor $\sqrt{2/\pi}\,t^{-3/2}$ implies that we have to show $w_b(t)$ is equal to
\begin{align*}
\frac{q(t)}{f_0(t)}
\;=\; \frac{\sum_{k\ge0}e^{-\ck(b)^2/(2t)}(\ck(b)^2/t-1)}{\sum_{k\ge0}(-1)^k\ck(b)e^{-\ck(b)^2/(2t)}},
\end{align*}
which is exactly \eqref{eq:wb-def}. This gives us the desired result.

\paragraph{Stage 4: Unbiasedness and minimality.}
By \eqref{eq:app-indep} and \eqref{eq:app-goal}, $\E_v[Zw_b(T)]=v$ for every $v$. Iterating expectation under the mixture $V\sim\Fstar$ gives $\E[Zw_b(T)]=\E[V]=\mu$, so $\E[\muhat]=\mu$. 

Finally, uniqueness of $w_b$ follows from the injectivity of the Laplace transform: if another smooth $\tilde w$ satisfied \eqref{eq:app-goal}, then $\tilde w\cdot f_0$ and $w_b\cdot f_0$ would share the transform in \eqref{eq:app-G}, and injectivity would force $\tilde w=w_b$ Lebesgue-almost everywhere, hence everywhere by continuity. \qed


\subsection{Proof of \Cref{thm:one-scale}}
\label{app:proof-one-scale}

With a slight abuse of notation, let $\Lb(\lambda):=\E[e^{-\lambda T}]$, where the expectation is taken over the randomness of both $V$ and $T$.

\paragraph{Step 1: Population tail rate.} Fix $b>0$ and $\lambda\ge 0$. Under drift $v$, as stated in \eqref{eqn:laplace_transform_drift}, $\E[e^{-\lambda T}\mid V=v]=\cosh(bv)/\cosh(b\sqrt{v^2+2\lambda})$; therefore, we have:
\begin{align}\label{eq:app-Lb}
    \Lb(\lambda) \;=\; \E_{V\sim\Fstar}\!\left[\frac{\cosh(bV)}{\cosh(b\sqrt{V^2+2\lambda})}\right].
\end{align}
For $v\in[-M,M]$, $\cosh(bv)\ge 1$ and $\cosh(x)\le e^x$, so $\cosh(bv)/\cosh(b\sqrt{v^2+2\lambda})\ge e^{-b\sqrt{M^2+2\lambda}}$, yielding
\begin{align}\label{eq:app-Lb-lower}
    \Lb(\lambda) \;\ge\; e^{-b\sqrt{M^2+2\lambda}}.
\end{align}
Conversely, $\cosh(bv)\le e^{bM}$ and $\cosh(x)\ge e^x/2$, so $\cosh(bv)/\cosh(b\sqrt{v^2+2\lambda})\le 2e^{bM}e^{-b\sqrt{2\lambda}}$, giving
\begin{align}\label{eq:app-Lb-upper}
    \Lb(\lambda) \;\le\; 2e^{bM}e^{-b\sqrt{2\lambda}}.
\end{align}
Combining \eqref{eq:app-Lb-lower} and \eqref{eq:app-Lb-upper} implies
\begin{align*}
    -\frac{b\sqrt{M^2+2\lambda}}{\sqrt{2\lambda}} \;\le\; \frac{\log\Lb(\lambda)}{\sqrt{2\lambda}} \;\le\; \frac{\log 2+bM}{\sqrt{2\lambda}}-b.
\end{align*}
Both sides converge to $-b$ as $\lambda\to\infty$, and hence, we have $-\log\Lb(\lambda)/\sqrt{2\lambda}\to b$.

\paragraph{Step 2: From the empirical distribution to the population distribution.} Note that $X_{n,i}:=e^{-\lambda_n t_i}\in[0,1]$ are i.i.d.\ with mean $\Lb(\lambda_n)$. We also denote the variance of $X_{n,i}$ by $\Var(X_{n,i})$.
We next make the following claim.
\begin{lemma} \label{lemma:nVar_infty}
Suppose the conditions in \eqref{eq:lambda-rate} hold. Then, $\lim_{n \to \infty} n\Var(X_{n,i}) = \infty$.
\end{lemma}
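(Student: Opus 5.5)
We need a lower bound on $\Var(X_{n,i})=\E[e^{-2\lambda_n T}]-\Lb(\lambda_n)^2=\Lb(2\lambda_n)-\Lb(\lambda_n)^2$. Then show that $n$ times this bound diverges under $\sqrt{\lambda_n}=o(\log n)$. The natural tool is the two-sided bounds from Step~1, namely \eqref{eq:app-Lb-lower} and \eqref{eq:app-Lb-upper}. Applied at $2\lambda_n$ and $\lambda_n$, they give
\begin{align*}
\Lb(2\lambda_n)\ge e^{-b\sqrt{M^2+4\lambda_n}},\qquad \Lb(\lambda_n)^2\le 4e^{2bM}e^{-2b\sqrt{2\lambda_n}}.
\end{align*}

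The key comparison is between the exponents. The lower bound decays like $e^{-2b\sqrt{\lambda_n}}$ up to a bounded factor, since
\begin{align*}
\sqrt{M^2+4\lambda_n}-2\sqrt{\lambda_n}=\frac{M^2}{\sqrt{M^2+4\lambda_n}+2\sqrt{\lambda_n}}\le \frac{M}{2}
\end{align*}
for large $\lambda_n$. The subtracted term decays like $e^{-2\sqrt2\, b\sqrt{\lambda_n}}$. Because $2\sqrt2>2$, the ratio of the second to the first is at most $4e^{2bM+bM/2}e^{-(2\sqrt2-2)b\sqrt{\lambda_n}}\to0$. Hence for $n$ large,
\begin{align*}
\Var(X_{n,i})\ge \tfrac12 e^{-b\sqrt{M^2+4\lambda_n}}\ge c\,e^{-2b\sqrt{\lambda_n}}
\end{align*}
with $c=\tfrac12 e^{-bM/2}$. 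Since the variance is the difference of two exact population quantities, no expansion beyond the Step~1 bounds is needed.

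Finally, $n\Var(X_{n,i})\ge c\exp\bigl(\log n-2b\sqrt{\lambda_n}\bigr)$. By \eqref{eq:lambda-rate}, $2b\sqrt{\lambda_n}=o(\log n)$, so the exponent is $\log n\,(1-o(1))\to\infty$, and the claim follows.

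The main obstacle is confirming that the upper bound on $\Lb(\lambda_n)^2$ is genuinely of smaller order than the lower bound on $\Lb(2\lambda_n)$. This hinges on the strict inequality $2\sqrt2>2$ between exponents, together with the constant prefactors $e^{bM}$ staying bounded because $|V|\le M$. Once that comparison is in place, the remainder is a direct use of the rate condition.
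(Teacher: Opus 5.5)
Your proof is correct and follows essentially the same route as the paper: lower-bound $\Lb(2\lambda_n)$ and upper-bound $\Lb(\lambda_n)^2$ using the Step~1 bounds, use $2\sqrt{2}>2$ to show the subtracted term is negligible, then conclude via $\sqrt{\lambda_n}=o(\log n)$. The only difference is that the paper first reduces to a fixed drift through the law of total variance, $\Var(X_{n,i})\ge \E_V[\Var(X_{n,i}\mid V)]$. You instead apply the mixture bounds directly to $\Var(X_{n,i})=\Lb(2\lambda_n)-\Lb(\lambda_n)^2$, which is equally valid, slightly more direct, and gives the constant explicitly.
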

\begin{proof}
First, by the law of total variance, it suffices to show this condition on any fixed drift $V=v$, given that
\begin{equation*}
\Var(X_{n,i}) \geq \mathbb{E}_V \left [\Var(X_{n,i} ~ |~ V=v)\right ].    
\end{equation*}
Next, note that we have
\begin{align}
 \Var(X_{n,i} ~ |~ V=v) & =  \E[e^{-2\lambda_n T}\mid V=v]  - (\E[e^{-\lambda_n T}\mid V=v])^2 \nonumber \\
 & \geq  \exp(-b\sqrt{M^2+4\lambda_n}) - 4 \exp(2bM-2b\sqrt{2\lambda_n}), \label{eqn:variance_lb}
\end{align}
where the last inequality follows from the same argument that we used to derive \eqref{eq:app-Lb-lower} and \eqref{eq:app-Lb-upper}. Now, looking at \eqref{eqn:variance_lb}, we note that, as $\lambda_n \to \infty$, ignoring the constants, the first term would decay as $\exp(-2b\sqrt{\lambda_n})$, while the second term would decay as $\exp(-2b \sqrt{2\lambda_n})$. Therefore, there exists $n^*$ such that, for any $n \geq n^*$, we have
\begin{equation}
\exp(-b\sqrt{M^2+4\lambda_n}) - 4 \exp(2bM-2b\sqrt{2\lambda_n}) \geq \zeta \exp(-2b\sqrt{\lambda_n}),    
\end{equation}
for some constant $\zeta >0$. As a result, for any $n \geq n^*$, we have
\begin{equation}
n \Var(X_{n,i}) \geq \zeta n \exp(-2b\sqrt{\lambda_n}).    
\end{equation}
The condition $\sqrt{\lambda_n}=o(\log n)$ completes the proof.
\end{proof}
Now, let us define $\tilde{X}_{n,i} = X_{n,i} - L_b(\lambda_n)$. We also define $s_n^2 := n\Var(\tilde{X}_{n,i}) = n\Var(X_{n,i})$ and $S_n := \sum_{i=1}^n \tilde{X}_{n,i}$. Note that, $X_{n,i}\le 1$, and hence, $|\tilde{X}_{n,i}| \leq 1$.

The Lindeberg-Feller Central Limit Theorem \citep{billingsley2017probability}[Theorem 27.2] states that, if the Lindeberg condition, given as
\begin{equation} \label{eqn:Lindeberg_condition}
\lim_{n \to \infty} \frac{1}{s_n^2} \sum_{i=1}^n \mathbb{E}\left[ \tilde{X}_{n,i}^2; |\tilde{X}_{n,i}| \geq \varepsilon s_n \right] = 0 \quad \text{ for all } \varepsilon >0.   
\end{equation}
holds, then, $S_n/s_n$ converges to $\mathcal{N}(0,1)$ in distribution. 

Notice that the Lindeberg condition \eqref{eqn:Lindeberg_condition} indeed holds, as $\tilde{X}_{n,i}$ is bounded and $s_n \to \infty$ by the above lemma. 
Therefore, we obtain that
\begin{equation}
\frac{S_n}{s_n} = 
\sqrt{n} \cdot  \frac{\Lhat(\lambda_n)-\Lb(\lambda_n)}{\sqrt{\Var(X_{n,i})}}     
\end{equation}
converges to $\mathcal{N}(0,1)$ in distribution. This implies that, for any $\tilde{M}$, we have 
\begin{equation}
\lim_{n \to \infty} \text{Pr} \left( \left | \sqrt{n} \cdot  \frac{\Lhat(\lambda_n)-\Lb(\lambda_n)}{\sqrt{\Var(X_{n,i})}} \right | \geq \tilde{M} \right) = 2(1-\Phi(\tilde{M})),    
\end{equation}
where $\Phi(\cdot)$ is the CDF of the standard normal distribution. 
Consequently, we have
\begin{equation} \label{eqn:conv_prob_1}
\lim_{n \to \infty} \text{Pr} \left( \left | \frac{\Lhat(\lambda_n)}{\Lb(\lambda_n)} - 1 \right | \geq \frac{\tilde{M} \sqrt{\Var(X_{n,i})}}{\sqrt{n}\Lb(\lambda_n)} \right) 
= 2(1-\Phi(\tilde{M})).
\end{equation}
Note that $\Var(X_{n,i}) \leq \Lb(\lambda_n)$, because
\begin{equation} \label{eqn:var_Lb_bound}
\Var(X_{n,i})\le\E[X_{n,i}^2]\le\E[X_{n,i}]=\Lb(\lambda_n),    
\end{equation}
and thus, we deduce from \eqref{eqn:conv_prob_1} that
\begin{equation} \label{eqn:conv_prob_2}
\limsup_{n \to \infty} \text{Pr} \left( \left | \frac{\Lhat(\lambda_n)}{\Lb(\lambda_n)} - 1 \right | \geq \frac{\tilde{M}}{\sqrt{n \Lb(\lambda_n)}} \right) 
\leq 2(1-\Phi(\tilde{M})),
\end{equation}
for any $\tilde{M}$. Notice that, by \eqref{eqn:var_Lb_bound}, we have 
\begin{equation}
\sqrt{n \Lb(\lambda_n)} \geq \sqrt{n \Var(X_{n,i})},    
\end{equation}
and therefore, by \Cref{lemma:nVar_infty}, $\sqrt{n \Lb(\lambda_n)} \to \infty$ as $n$ grows. 
Using this fact, we claim \eqref{eqn:conv_prob_2} implies that $\Lhat(\lambda_n)/\Lb(\lambda_n)\xrightarrow{P}1$. We can prove this by contradiction. Suppose this is not the case. Then, there exists positive $\varepsilon$ and $\delta$ such that
\begin{equation}
\limsup_{n \to \infty} \text{Pr} \left( \left | \frac{\Lhat(\lambda_n)}{\Lb(\lambda_n)} - 1 \right | \geq \varepsilon \right) 
\geq \delta.
\end{equation}
However, setting $\tilde{M}$ in \eqref{eqn:conv_prob_2} such that $2(1-\Phi(\tilde{M})) < \delta$ leads to a contradiction as $\tilde{M}/\sqrt{n \Lb(\lambda_n)}$ would fall below $\varepsilon$ for large enough $n$. This completes the proof of our claim. 

Having $\Lhat(\lambda_n)/\Lb(\lambda_n)\xrightarrow{P}1$ means $\log\Lhat(\lambda_n)-\log\Lb(\lambda_n)\xrightarrow{P}0$. 
Dividing by $\sqrt{2\lambda_n}\to\infty$ and combining with Step 1 gives $\bhat=-\log\Lhat(\lambda_n)/\sqrt{2\lambda_n}\xrightarrow{P}b$. \qed

\subsection{Proof of \Cref{thm:two-scale}}
\label{app:proof-two-scale}

We establish the second-order expansion of $\log\Lb(\lambda)$ and then transfer it to the empirical transform.

\paragraph{Step 1: Second-order expansion of $\log\Lb(\lambda)$.}
Set $s:=\sqrt{2\lambda}$ and, for $v\in[-M,M]$, define $R_s(v):=\sqrt{s^2+v^2}-s=v^2/(\sqrt{s^2+v^2}+s)$. 

For $x\ge 0$, 
\begin{equation}
\frac{1}{\cosh(x)}=2e^{-x}(1+\eta(x)),    
\end{equation}
with $|\eta(x)|\le e^{-2x}$. Applying this with $x=b\sqrt{s^2+v^2}=b(s+R_s(v))$, and using $\sqrt{s^2+v^2}\ge s$, we obtain
\begin{align*}
    \frac{1}{\cosh(b\sqrt{s^2+v^2})} \;=\; 2e^{-bs}e^{-bR_s(v)}(1+\eta),\qquad |\eta|\le e^{-2bs}.
\end{align*}
Multiplying by $\cosh(bv)$ and taking expectation over $V\sim\Fstar$ yields
\begin{align*}
    \Lb(\lambda) \;=\; 2e^{-bs}\,\E[\cosh(bV)e^{-bR_s(V)}] + O(e^{-3bs}).
\end{align*}

Note that, there exist constants $K_1=K_1(M)$, $K_2=K_2(M)$ such that, uniformly on $|v|\le M$ as $s\to\infty$,
\begin{align*}
    \sup_{|v|\le M}\bigl|R_s(v)-\frac{v^2}{2s}\bigr|\le\frac{K_1}{s^3},
    \qquad
    \sup_{|v|\le M}|R_s(v)|\le\frac{K_2}{s}.
\end{align*}
Therefore, the following expansion holds uniformly on $|V|\le M$
\begin{equation}
e^{-bR_s(V)}=1-bR_s(V)+O(R_s(V)^2)=1-\frac{bV^2}{2s}+O(s^{-2}),    
\end{equation}
which implies
\begin{align}
    \E[\cosh(bV)e^{-bR_s(V)}] \;=\; C_0 - \frac{bC_2}{2s} + O(s^{-2}),
\end{align}
with $C_0=\E[\cosh(bV)]\ge 1$ and $C_2=\E[V^2\cosh(bV)]$. Absorbing the $O(e^{-3bs})$ remainder into $O(s^{-2})e^{-bs}$ gives
\begin{align*}
    \Lb(\lambda) \;=\; 2C_0 e^{-bs}\bigl(1-\frac{bC_2}{2C_0 s}+O(s^{-2})\bigr).
\end{align*}
Taking logarithms and using $\log(1+u)=u+O(u^2)$, we obtain
\begin{align}
    \log\Lb(\lambda) \;=\; -bs+\log(2C_0)-\frac{bC_2}{2C_0 s}+O(s^{-2}) \;=\; -b\sqrt{2\lambda}+\log(2C_0)-\frac{bC_2}{2C_0\sqrt{2\lambda}}+O(\lambda^{-1}),
\end{align}
which proves \eqref{eq:laplace-expansion}.

\paragraph{Step 2: Cancellation at two scales.}
Applying \eqref{eq:laplace-expansion} at $\lambda$ and $4\lambda$ (so $\sqrt{2(4\lambda)}=2s$),
\begin{align*}
    \log\Lb(\lambda)-\log\Lb(4\lambda) \;=\; bs-\frac{bC_2}{4C_0 s}+O(s^{-2}),
\end{align*}
and dividing by $\sqrt{2\lambda}=s$,
\begin{align*}
    \frac{\log\Lb(\lambda)-\log\Lb(4\lambda)}{\sqrt{2\lambda}} \;=\; b-\frac{bC_2}{8C_0\lambda}+O(\lambda^{-3/2}).
\end{align*}
For comparison, $-\log\Lb(\lambda)/\sqrt{2\lambda}=b-\log(2C_0)/\sqrt{2\lambda}+O(\lambda^{-1})$, so the one-scale bias is $O(\lambda^{-1/2})$ and the Richardson-corrected bias is $O(\lambda^{-1})$.

\paragraph{Step 3: The empirical estimator.}
Note that the sequence $\{4 \lambda_n\}$ also satisfies \eqref{eq:lambda-rate}, and hence, by \Cref{thm:one-scale},  
\begin{equation}
-\frac{\log\Lhat(\lambda_n)}{\sqrt{2\lambda_n}} \xrightarrow{P}b, \quad
-\frac{\log\Lhat(4\lambda_n)}{2\sqrt{2\lambda_n}} \xrightarrow{P} b
\end{equation}
This immediately implies 
\begin{equation}
\frac{\log\Lhat(\lambda_n) - \log\Lhat(4\lambda_n)}{\sqrt{2\lambda_n}} \xrightarrow{P} b
\end{equation}
which completes the proof.
\qed


\subsection{Proof of \Cref{thm:plugin-scalar}}
\label{app:proof-plugin-scalar}

Fix a realized boundary $b>0$ and write $\Prob_b,\E_b$ for the conditional probability and expectation. The proof relies on a uniform envelope for the family $\{w_\beta\}$, stated next, and whose proof we give at the end of this subsection.

\begin{lemma}[Uniform envelope for the weight family] \label{lem:envelope}
Fix a compact interval $K=[\underline{b},\bar{b}]\subset(0,\infty)$. Then:
\begin{enumerate}
    \item For every $\tau>0$, $\beta\mapsto w_\beta(\tau)$ is continuous on $K$.
    \item There exist constants $0<\tau_0\le 1$, $\tau_1\ge 1$, $C_K<\infty$, and $c_K>0$, all depending only on $K$ such that, for all $\beta\in K$, we have
    \begin{align*}
        |w_\beta(\tau)| &\le C_K(1+\tau^{-1}) & &\text{for } 0<\tau\le \tau_0,\\
        |w_\beta(\tau)| &\le C_K & &\text{for } \tau_0\le \tau\le \tau_1,\\
        |w_\beta(\tau)| &\le C_K e^{-c_K \tau} & &\text{for } \tau\ge \tau_1.
    \end{align*}
\end{enumerate}
Consequently, $H_K(\tau):=C_K(1+\tau^{-1})\indic_{(0,\tau_0]}(\tau)+C_K\indic_{(\tau_0,\tau_1]}(\tau)+C_K e^{-c_K\tau}\indic_{[\tau_1,\infty)}(\tau)$ satisfies $\sup_{\beta\in K}|w_\beta(\tau)|\le H_K(\tau)$, and $\E_b[H_K(T)]<\infty$.
\end{lemma}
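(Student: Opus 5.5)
The plan is to prove the three bounds of part 2 separately in the three regimes of $\tau$, using two different representations of the numerator and denominator of $w_\beta$. Continuity (part 1) and the integrability claim then follow quickly.

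Write $N_\beta(\tau)$ and $D_\beta(\tau)$ for the numerator and denominator series in \eqref{eq:wb-def}. By Stage 3 of the proof of \Cref{thm:unbiased},
\[
\sqrt{2/\pi}\,\tau^{-3/2}D_\beta(\tau)=f_0^{(\beta)}(\tau)
\quad\text{and}\quad
\sqrt{2/\pi}\,\tau^{-3/2}N_\beta(\tau)=q^{(\beta)}(\tau),
\]
where $f_0^{(\beta)}$ is the zero-drift exit density at boundary $\beta$ and $q^{(\beta)}=\mathcal{L}^{-1}[\sqrt{2s}/\sinh(\beta\sqrt{2s})]$. Hence $w_\beta=q^{(\beta)}/f_0^{(\beta)}$, and $D_\beta(\tau)>0$ for every $\tau>0$ because $f_0^{(\beta)}$ is a density supported on $(0,\infty)$.

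\emph{Part 1 and the middle range.} For fixed $\tau$ and $\beta\in K$, the $k$-th terms of both series are dominated by $C(1+k^2)e^{-(2k+1)^2\underline b^2/(2\tau)}$. This bound is summable and locally uniform in $(\beta,\tau)\in K\times(0,\infty)$, so both series are jointly continuous. Since $D_\beta>0$, $w_\beta(\tau)$ is jointly continuous on $K\times(0,\infty)$. This gives part 1, and on the compact set $K\times[\tau_0,\tau_1]$ it gives the constant bound $C_K$.

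\emph{Small $\tau$.} Isolate the $k=0$ terms. Divide $N_\beta$ and $D_\beta$ by $\beta e^{-\beta^2/(2\tau)}$. The $k\ge1$ contributions then carry factors $e^{-(c_k^2-\beta^2)/(2\tau)}\le e^{-4k\underline b^2/\tau}$, multiplied by polynomial factors $c_k^2/\tau$. Their total is at most $C\tau^{-1}e^{-4\underline b^2/\tau}$, which tends to $0$ as $\tau\downarrow0$ uniformly in $\beta\in K$. So for $\tau\le\tau_0$ the denominator is at least $\tfrac12$ after normalization, and the numerator is $\beta/\tau-1/\beta+o(1)$. This yields $|w_\beta(\tau)|\le C_K(1+\tau^{-1})$.

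\emph{Large $\tau$ (the main obstacle).} The image-series representation is useless here, so I would switch to spectral (Poisson/Jacobi-transformed) representations.

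For the denominator, use the classical eigenfunction expansion
\[
f_0^{(\beta)}(\tau)=\frac{\pi}{2\beta^2}\sum_{m\ge0}(-1)^m(2m+1)\,e^{-(2m+1)^2\pi^2\tau/(8\beta^2)}.
\]
For $\tau\ge\tau_1$ the $m=0$ term dominates uniformly on $K$. This gives $f_0^{(\beta)}(\tau)\ge c\,e^{-\pi^2\tau/(8\beta^2)}$.

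For the numerator, I would compute $q^{(\beta)}$ by residues. The function $\sqrt{2s}/\sinh(\beta\sqrt{2s})$ is meromorphic in $s$: it is even in $\sqrt{2s}$ and has no branch point. Its simple poles are at $s_m=-\pi^2m^2/(2\beta^2)$, $m\ge1$, and the value at $s=0$ is regular. Equivalently, use the Mittag-Leffler expansion of $x/\sinh x$. Either route gives
\[
q^{(\beta)}(\tau)=\sum_{m\ge1}a_m(\beta)\,e^{-\pi^2m^2\tau/(2\beta^2)},\qquad |a_m(\beta)|\le Cm^2/\beta^3.
\]
Hence $|q^{(\beta)}(\tau)|\le C_K e^{-\pi^2\tau/(2\beta^2)}$ for $\tau\ge1$. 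Dividing the two bounds gives
\[
|w_\beta(\tau)|\le C_K e^{-3\pi^2\tau/(8\beta^2)}\le C_K e^{-c_K\tau},\qquad c_K=\frac{3\pi^2}{8\bar b^2}.
\]
The delicate points are justifying the contour/residue inversion and showing that both tail sums are uniformly dominated for $\beta\in K$. As a consistency check, I would verify that the two representations of $q$ agree, via Poisson summation.

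\emph{Integrability.} The envelope bound $\sup_{\beta\in K}|w_\beta|\le H_K$ is immediate from part 2. For $\E_b[H_K(T)]<\infty$, use the tilt \eqref{eq:app-tilt}: the density of $T$ is at most $\E[\cosh(bV)]\,f_0^{(b)}(\tau)\le e^{bM}f_0^{(b)}(\tau)$. Near $0$, $f_0^{(b)}(\tau)\lesssim\tau^{-3/2}e^{-b^2/(2\tau)}$, so $\int_0^{\tau_0}\tau^{-1}f_0^{(b)}\,d\tau<\infty$. On the other two pieces $H_K$ is bounded, and $T$ has a probability density, so those pieces are finite. Therefore $\E_b[H_K(T)]<\infty$.
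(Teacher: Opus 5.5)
Your proposal is correct and matches the paper's proof essentially step for step. Continuity comes from uniform convergence of the image series together with $D_\beta>0$, and the small-$\tau$ bound comes from factoring out the $k=0$ term. The middle range follows by compactness. The large-$\tau$ bound comes from the spectral series, $f_{0,\beta}\ge\frac{\pi}{4\beta^2}e^{-\pi^2\tau/(8\beta^2)}$ and $|q_\beta|\le C_Ke^{-\pi^2\tau/(2\beta^2)}$, which give the same $c_K=3\pi^2/(8\bar b^2)$. Integrability uses the tilt bound $g_b\le\cosh(bM)f_{0,b}$.

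The only difference is how you obtain the spectral series for $q_\beta$. The paper does not use residues; it writes $N_\beta=-\partial_\beta(\beta S_\beta)$ with $S_\beta(\tau)=\sum_k e^{-c_k(\beta)^2/(2\tau)}$ and applies Poisson summation, which is the route you list only as a consistency check. That route avoids the contour justification you flag. It is also safer than termwise inversion of the Mittag-Leffler series of $x/\sinh x$, because the terms of that series are not pure poles.
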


\paragraph{Main argument.}
Decompose
\begin{align*}
    \muhatpi-\mu
    &=\underbrace{\left(\frac{1}{n}\sum_{i=1}^n z_iw_b(t_i)-\mu\right)}_{A_n}
    + \underbrace{\frac{1}{n}\sum_{i=1}^n z_i\bigl(w_{\btil}(t_i)-w_b(t_i)\bigr)}_{R_n}.
\end{align*}
By Theorem~\ref{thm:unbiased}, we know that $\E_b[z_i w_b(t_i)]=\mu$. Furthermore,  \Cref{lem:envelope} gives $\E_b[|z_i w_b(t_i)|]\le\E_b [H_K(t_i)]<\infty$ for any compact $K\ni b$. Hence, by the weak law of large numbers, we have $A_n\xrightarrow{\Prob_b}0$.

For $R_n$, pick the compact set $K=[b/2,3b/2]$ and, for $0<\delta\le b/2$, define
\begin{equation}
r_\delta(\tau):=\sup_{|\beta-b|\le\delta}|w_\beta(\tau)-w_b(\tau)|.    
\end{equation}
\Cref{lem:envelope} implies that (i) $r_\delta(\tau)\downarrow 0$ pointwise as $\delta\downarrow 0$, and (ii) $r_\delta(\tau)\le 2H_K(\tau)$ where $H_K$ is integrable. Hence, by dominated convergence: 
\begin{equation}
\lim_{\delta \to 0} \E_b[r_\delta(T)] = 0 .  
\end{equation}
Fix $\varepsilon>0$ and pick $\delta\in(0,b/2]$ with $\E_b[r_\delta(T)]<\varepsilon/2$. On the event $\{|\btil-b|\le\delta\}$, we have
\begin{equation*}
|R_n|\le\frac{1}{n}\sum_i r_\delta(\tau_i).  
\end{equation*}
Therefore, 
\begin{align*}
    \Prob_b(|R_n|>\varepsilon)
    \;\le\; \Prob_b(|\btil-b|>\delta) + \Prob_b \left(~\frac{1}{n}\sum_i r_\delta(\tau_i)>\varepsilon \right).
\end{align*}
The first term vanishes by \Cref{thm:two-scale} and the second by the weak law of large numbers (the sample mean tends to $\E_b[r_\delta(\tau_i)]<\varepsilon/2$). So $R_n\xrightarrow{\Prob_b}0$, and $\muhatpi-\mu=A_n+R_n\xrightarrow{\Prob_b}0$. This completes the proof. \qed

\subsubsection{Proof of \Cref{lem:envelope}.}
\paragraph{Continuity.} Recall that $w_\beta(t)$ is equal to $\Nbeta(t)/\Dbeta(t)$, with
$$\Nbeta(t):=\sum_{k\ge0}\exp \left(\frac{-\ck(\beta)^2}{2t} \right) \left(\frac{\ck(\beta)^2}{t}-1 \right)$$ 
and
$$\Dbeta(t):=\sum_{k\ge0}(-1)^k\ck(\beta)\exp \left(\frac{-\ck(\beta)^2}{2t} \right).$$
For a fixed $t>0$, both of these series converge absolutely and uniformly on $K$ because, for $m\in\{1,2\}$, 
$$\sum_{k\ge0}(2k+1)^m e^{-(2k+1)^2{\underline{b}}^2/(2t)}<\infty.$$ 
Using the known identity $\Dbeta(t)=\sqrt{\pi/2}\,t^{3/2}f_{0,\beta}(t)$ with $f_{0,\beta}$ being the first-exit density for boundary $\beta$ and zero drift  (see Eq.~\eqref{eq:app-f0}), we have $\Dbeta(t)>0$ for every $t>0$, so $\wbeta(t)=\Nbeta(t)/\Dbeta(t)$ is continuous in $\beta$ on $K$.

\paragraph{Small-$t$ bound.} 
Factor 
$$\Dbeta(t)=\beta e^{-\beta^2/(2t)}\bigl[1+\sum_{k\ge 1}(-1)^k(2k+1)e^{-((2k+1)^2-1)\beta^2/(2t)}\bigr].$$ 
Since $((2k+1)^2-1)\beta^2/(2t)\ge 4\underline{b}^2/t$ for $k\ge 1$, we can choose $\tau_0>0$ so that 
$$\sup_{\beta\in K}\sum_{k\ge 1}(2k+1)e^{-((2k+1)^2-1)\beta^2/(2t)}\le 1/2$$
on $(0,\tau_0]$. Then $\Dbeta(t)\ge\frac12\beta e^{-\beta^2/(2t)}\ge\frac12 \underline{b}e^{-\beta^2/(2t)}$. 

Now control the numerator. Factor out $e^{-\beta^2/(2t)}$:
\[
N_\beta(t)
=
e^{-\beta^2/(2t)}
\left[
\frac{\beta^2}{t}
-1
+
\sum_{k=1}^{\infty}
e^{-((2k+1)^2-1)\beta^2/(2t)}
\left(
\frac{(2k+1)^2\beta^2}{t}
-1
\right)
\right].
\]

The first term satisfies
$
\left|
\frac{\beta^2}{t}
-1
\right|
\leq C_K(1+t^{-1})
$.
The tail satisfies
\[
\sum_{k=1}^{\infty}
e^{-((2k+1)^2-1)\beta^2/(2t)}
\left|
\frac{(2k+1)^2\beta^2}{t}
-1
\right|
\leq C_K(1+t^{-1}),
\]
because the sums
\[
\sum_{k=1}^{\infty}
(2k+1)^2
e^{-((2k+1)^2-1)b^2/(2t)}
\]
are uniformly bounded for $0<t\leq \tau_0$.

Therefore, $|\Nbeta(t)|\le C_K e^{-\beta^2/(2t)}(1+t^{-1})$, so $|\wbeta(t)|\le C_K(1+t^{-1})$.

\paragraph{Middle range.} 
Continuity of $(\beta,\tau)\mapsto\wbeta(\tau)$ on the compact $K\times[\tau_0,\tau_1]$ yields $\sup|\wbeta(\tau)|<\infty$.

\paragraph{Large-$t$ bound.} 
Note that
\begin{equation*}
w_\beta(t) = \frac{\Nbeta(t)}{\Dbeta(t)} = \frac{\sqrt{\frac{2}{\pi}}t^{-3/2}\Nbeta(t)}{f_{0,\beta}(t)}.   
\end{equation*}
We next make two claims:
\begin{lemma} \label{lemma:f_0}
Recall that $f_{0,\beta}(t)$ is given by
\begin{equation} \label{eqn:f_0_1}
f_{0,\beta}(t) = \sqrt{\frac{2}{\pi}}\,t^{-3/2}\sum_{k=0}^\infty (-1)^k \ck(\beta)\exp\!\left(-\frac{\ck(\beta)^2}{2t}\right). 
\end{equation}
This function also admits the following characterization:
\begin{equation} \label{eqn:f_0_2}
f_{0,\beta}(t) = \frac{\pi}{2\beta^2}\sum_{m\ge 0}(-1)^m(2m+1)\exp \left(-\frac{(2m+1)^2\pi^2t}{8\beta^2}\right).    
\end{equation}    
\end{lemma}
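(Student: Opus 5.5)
The plan is to prove the equivalence of the two series through their common Laplace transform, and then invoke uniqueness of the inverse Laplace transform. This is the classical theta-function duality between the method-of-images series and the eigenfunction expansion. Both sides are continuous in $t$ on $(0,\infty)$, so equality of Laplace transforms for all $s>0$ forces pointwise equality.

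\emph{First transform.} By Eq.~\eqref{eqn:laplace_transform_drift} at $v=0$ (equivalently, Stage 3 of the proof of \Cref{thm:unbiased}), the image series \eqref{eqn:f_0_1} has Laplace transform $1/\cosh(\beta\sqrt{2s})$. To verify this directly, use $\mathcal{L}^{-1}[e^{-c\sqrt{2s}}](t)=\frac{c}{\sqrt{2\pi}}t^{-3/2}e^{-c^2/(2t)}$, obtained from the stated formula for $\mathcal{L}^{-1}[e^{-c\sqrt{s}}]$ after rescaling. Combine this with the expansion
$$1/\cosh(y)=2\sum_{k\ge0}(-1)^k e^{-(2k+1)y},$$
and exchange sum and transform. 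The exchange is justified by Tonelli applied to the absolutely convergent series of nonnegative terms $\sum_k c_k t^{-3/2}e^{-c_k^2/(2t)}e^{-st}$.

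\emph{Second transform.} Integrate the eigenfunction series \eqref{eqn:f_0_2} termwise. Term $m$ contributes
$$\frac{\pi}{2\beta^2}\,\frac{(-1)^m(2m+1)}{s+(2m+1)^2\pi^2/(8\beta^2)}.$$
Here the exchange needs care. The terms are alternating with coefficients growing like $2m+1$, but each carries the factor $e^{-(2m+1)^2\pi^2t/(8\beta^2)}$. For every $t\ge\epsilon$ the series is therefore dominated by $\sum_m(2m+1)e^{-(2m+1)^2\pi^2\epsilon/(8\beta^2)}$, which converges, so the integral over $[\epsilon,\infty)$ may be exchanged with the sum. We then let $\epsilon\to0$. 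On $(0,\epsilon)$ the sum is justified by Abel summation, or more simply by grouping consecutive terms in pairs. The pairing gives a series that converges in $L^1$ with the transformed series converging like $\sum 1/m^2$.

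\emph{Identification.} With $z=\beta\sqrt{2s}$, the resulting sum becomes
$$\sum_{m\ge0}\frac{(-1)^m\pi(2m+1)}{z^2+(2m+1)^2\pi^2/4}.$$
This is exactly the Mittag-Leffler partial-fraction expansion of $\operatorname{sech} z$:
$$\operatorname{sech}(z)=\pi\sum_{m\ge0}\frac{(-1)^m(2m+1)}{(m+\tfrac12)^2\pi^2+z^2}.$$
So both transforms equal $1/\cosh(\beta\sqrt{2s})$, and Laplace uniqueness concludes the proof.

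\emph{Alternative route.} Apply Poisson summation to the $4\beta$-periodic odd-symmetric image sum, writing $\Dbeta(t)$ as a derivative of a Jacobi theta function, and then use the theta transformation formula $\vartheta(x,t)\leftrightarrow\vartheta(x/t,1/t)$.

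\emph{Main obstacle.} I expect the main obstacle to be justifying termwise integration of \eqref{eqn:f_0_2} near $t=0$, where that series is not absolutely convergent uniformly. I would handle it via the truncation at $\epsilon$ plus dominated convergence in $s$ on the transform side, as described above. Citing the standard Mittag-Leffler expansion of $\operatorname{sech}$ settles the identification.
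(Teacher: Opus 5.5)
Your proposal is correct and follows essentially the same route as the paper. Both arguments compute the Laplace transform of the image series termwise to get $\operatorname{sech}(\beta\sqrt{2s})$, expand it with the Mittag-Leffler partial fractions of $\operatorname{sech}$, and match each term $\frac{1}{s+(2m+1)^2\pi^2/(8\beta^2)}$ with an exponential of the eigenfunction series. The only difference is that you transform both series and invoke Laplace uniqueness rather than inverting termwise. This lets you justify the interchange near $t=0$ (truncation at $\epsilon$ plus an Abel limit, or pairing consecutive terms, whose $L^1$ norms are indeed $O(m^{-2})$), which the paper's termwise inversion leaves implicit.
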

\begin{lemma} \label{lemma:q_beta}
Define $q_\beta(t)$ by  
\begin{equation} \label{eqn:q_beta_1}
q_\beta(t) := \sqrt{\frac{2}{\pi}}t^{-3/2}\Nbeta(t) = 
\sqrt{\frac{2}{\pi}}t^{-3/2} \sum_{k\ge0}\exp \left(\frac{-\ck(\beta)^2}{2t} \right) \left(\frac{\ck(\beta)^2}{t}-1 \right).
\end{equation}
This function also admits the following characterization:
\begin{equation} \label{eqn:q_beta_2}
q_\beta(t) = \frac{\pi^2}{\beta^3}\sum_{m\ge 1}(-1)^{m+1}m^2\exp \left(-\frac{m^2\pi^2 t}{2\beta^2}\right)   
\end{equation}
\end{lemma}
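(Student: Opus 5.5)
The plan is to prove that the two expressions for $q_\beta$ agree by showing that they have the same Laplace transform, and then to invoke injectivity of the Laplace transform together with continuity in $t$. From Stage~3 of the proof of \Cref{thm:unbiased} (with $b$ replaced by $\beta$), the series form \eqref{eqn:q_beta_1} is already known to satisfy
\begin{equation*}
\int_0^\infty q_\beta(t)e^{-st}\,dt=\frac{\sqrt{2s}}{\sinh(\beta\sqrt{2s})},\qquad s>0 .
\end{equation*}
So it remains to compute the Laplace transform of the right-hand side of \eqref{eqn:q_beta_2} and check that it equals the same function.

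\textbf{Step 1 (term-by-term transform).} For $s>0$ I would integrate \eqref{eqn:q_beta_2} term by term. Each term gives
\begin{equation*}
\frac{\pi^2}{\beta^3}(-1)^{m+1}\,\frac{m^2}{s+m^2\pi^2/(2\beta^2)}.
\end{equation*}
Term-by-term integration is not automatically justified, because the series is alternating and the coefficients grow like $m^2$. I would handle this in one of two ways. Either I take $s$ large and first work with the partial sums in the $t$-variable with an Abel factor, or, more cleanly, I establish the identity for the antiderivative $\int_0^t$ (or a twice-integrated version), where the terms acquire $1/m^2$ decay and the series converges absolutely. The transform then picks up a factor $1/s$ (respectively $1/s^2$), which can be removed at the end.

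\textbf{Step 2 (Mittag-Leffler expansion).} I would recognise the resulting series as a partial-fraction expansion. Write $x=\beta\sqrt{2s}$, so that $s=x^2/(2\beta^2)$. The $m$-th term becomes
\begin{equation*}
\frac{2\pi^2}{\beta}\,(-1)^{m+1}\,\frac{m^2}{x^2+m^2\pi^2}.
\end{equation*}
The classical expansion
\begin{equation*}
\frac{1}{\sinh x}=\frac1x+2x\sum_{m\ge1}\frac{(-1)^m}{x^2+m^2\pi^2}
\end{equation*}
is the relevant input. Differentiating a suitable version of it in $x$, or using $\frac{m^2\pi^2}{x^2+m^2\pi^2}=1-\frac{x^2}{x^2+m^2\pi^2}$ together with Abel summation of $\sum(-1)^{m+1}$ to $1/2$, should give $\sqrt{2s}/\sinh(\beta\sqrt{2s})$. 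This is the bookkeeping I expect to be the main obstacle, since the naive series is only conditionally (Abel) summable. For that reason I would do the computation in the integrated form from Step~1, where every series converges absolutely.

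\textbf{Step 3 (conclusion and a consistency check).} Both expressions define functions that are continuous on $(0,\infty)$. The theta-series is continuous for $t>0$ because of its Gaussian decay in $m$, and it is bounded near $t=0$ after integration. Since their Laplace transforms agree for all large $s$, Lerch's theorem gives equality for every $t>0$.

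As an alternative and cross-check, I would also derive \eqref{eqn:q_beta_2} directly from \Cref{lemma:f_0}. Observe that $q_\beta(t)=-\partial_\beta$ of the cumulative-type transform, since $\partial_c\bigl[c\,e^{-c^2/2t}\bigr]$ structure matches. Alternatively, apply Poisson summation (the Jacobi theta transformation) to $\sum_{k\in\mathbb Z}e^{-(2k+1)^2\beta^2/(2t)}$. This yields $\frac{\sqrt{2\pi t}}{2\beta}\sum_{m\in\mathbb Z}(-1)^m e^{-m^2\pi^2 t/(2\beta^2)}$. Differentiating this identity with respect to $\beta$ and collecting terms reproduces \eqref{eqn:q_beta_1} on one side and \eqref{eqn:q_beta_2} on the other. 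That would be the preferred fully rigorous route, because Poisson summation for Gaussians needs no convergence caveats.
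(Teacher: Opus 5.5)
Your fallback route is exactly the paper's proof. You write $N_\beta(t)=-\partial_\beta\bigl(\beta S_\beta(t)\bigr)$ with $S_\beta(t)=\sum_{k\ge0}e^{-(2k+1)^2\beta^2/(2t)}$, apply Poisson summation to $2S_\beta(t)=\sum_{k\in\mathbb Z}e^{-(2k+1)^2\beta^2/(2t)}$, and differentiate in $\beta$. Your constant $\sqrt{2\pi t}/(2\beta)$ agrees with the paper's $\sqrt{\pi t/(2\beta^2)}$. Your ``collecting terms'' must mean forming $-S_\beta-\beta\,\partial_\beta S_\beta$, i.e.\ differentiating $\beta S_\beta$ rather than $S_\beta$.

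Your primary route is genuinely different. It identifies the theta series through its Laplace transform, combining the Stage~3 transform $\sqrt{2s}/\sinh(\beta\sqrt{2s})$, the partial-fraction expansion of $1/\sinh$, and Lerch's uniqueness theorem. What it buys is that the exponents $m^2\pi^2/(2\beta^2)$ appear naturally as the poles of $1/\sinh(\beta\sqrt{2s})$, and the argument parallels the paper's proof of Lemma~\ref{lemma:f_0}. What it costs is a transform-inversion step and a summability issue, both of which the paper's pointwise Poisson argument avoids.

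On that summability issue, your prescription ``integrate from $0$'' does not work as stated. The termwise transform of the theta series has terms tending to $(-1)^{m+1}\,2/\beta$, so it diverges. Also, bounding termwise only gives $O(t^{-3/2})$ at $0$, so without the identity you do not even know the series is Laplace-transformable. Integrating from $0$ once (respectively twice) leaves non-decaying pieces $(-1)^{m+1}$ (respectively $(-1)^{m+1}t$), so there is no $1/m^2$ decay.

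The fix is to use the antiderivative that vanishes at $+\infty$:
\begin{equation*}
g(t):=\frac{2}{\beta}\sum_{m\ge1}(-1)^{m+1}e^{-m^2\pi^2t/(2\beta^2)}.
\end{equation*}
Its derivative is minus the right-hand side of \eqref{eqn:q_beta_2}, and the alternating-series bound gives $0\le g\le 2/\beta$. Tonelli applies because $\sum_m 1/\bigl(s+m^2\pi^2/(2\beta^2)\bigr)<\infty$. With $x=\beta\sqrt{2s}$, the absolutely convergent expansion $1/\sinh x=1/x+2x\sum_{m\ge1}(-1)^m/(x^2+m^2\pi^2)$ gives
\begin{equation*}
\widehat g(s)=\frac{1}{s}\Bigl(\frac{1}{\beta}-\frac{\sqrt{2s}}{\sinh(\beta\sqrt{2s})}\Bigr),
\end{equation*}
which is precisely the transform of $\frac{1}{\beta}-\int_0^t q_\beta(u)\,du$. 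Lerch's theorem and continuity then give $g(t)=\frac1\beta-\int_0^t q_\beta(u)\,du$ for all $t>0$. Differentiating finishes the proof, and no Abel summation is needed.
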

We defer the proof of these two claims to the end of this subsection. First let us complete the proof of \Cref{lem:envelope}, assuming these two results hold. 

Note that, for $t\ge \tau_1$ large enough, the $m=0$ term of $f_{0,\beta}$ and the $m=1$ term of $q_\beta$ dominate, giving $f_{0,\beta}(t)\ge\frac{\pi}{4\beta^2}e^{-\pi^2 t/(8\beta^2)}$ and $|q_\beta(t)|\le C_K e^{-\pi^2 t/(2\beta^2)}$. Hence
\begin{align*}
    |\wbeta(t)| \;=\; \frac{|q_\beta(t)|}{f_{0,\beta}(t)} \;\le\; C_K\exp \bigl(-\frac{3\pi^2 t}{8\beta^2}\bigr) \;\le\; C_K e^{-c_K t},
    \qquad c_K:=\frac{3\pi^2}{8\bar{b}^2}.
\end{align*}
Combining the three regions gives the envelope.
\paragraph{Integrability of $H_K$.} 
Let $g_b$ be the marginal density of $T$ under $\Prob_b$. Summing \eqref{eq:app-tilt} over $z\in\{\pm 1\}$ gives $g_b(t)=\E[\cosh(bV)e^{-V^2t/2}]f_{0,b}(t)\le\cosh(bM)f_{0,b}(t)$, and for small $t$, the formula for $f_{0,b}$ yields $f_{0,b}(t)\le Ct^{-3/2}e^{-b^2/(2t)}$. Hence
\begin{align*}
    \int_0^{\tau_0}(1+\tau^{-1})g_b(\tau)\,d\tau \;\le\; C\int_0^{\tau_0}(\tau^{-3/2}+\tau^{-5/2})e^{-b^2/(2\tau)}\,d\tau \;<\;\infty,
\end{align*}
because $e^{-b^2/(2\tau)}$ decays faster than any inverse power as $\tau \downarrow 0$. On $[\tau_0,\infty)$, $H_K$ is bounded, so its integral against $g_b$ is finite. Thus $\E_b[H_K(t_i)]<\infty$. \qed
\subsubsection{Proof of \Cref{lemma:f_0}}
First note that, the Laplace transform of each term of \eqref{eqn:f_0_1} is given by
\begin{equation}
\mathcal{L} \left[ \sqrt{\frac{2}{\pi}} t^{-3/2}~a~e^{-a^2/(2t)}\right] (\lambda) =2e^{-a\sqrt{2\lambda}}, \qquad a>0,\ \lambda>0.    
\end{equation}
This implies that, the Laplace transform of $f_{0,\beta}(t)$ in \eqref{eqn:f_0_1} is given by
\begin{equation}
\mathcal{L}[f](\lambda) = 
\sum_{k=0}^\infty (-1)^k ~ 2e^{-(2k+1)\beta\sqrt{2\lambda}}
= \operatorname{sech}(\beta\sqrt{2\lambda}).
\end{equation}
Now, the Mittag–Leffler expansion for the $\operatorname{sech}(\cdot)$ gives us
\begin{equation}
\mathcal{L}[f](\lambda) = 
\pi\sum_{m=0}^\infty~ (-1)^m~\frac{2m+1}{2\beta^2\lambda+\frac{\pi^2}{4}(2m+1)^2}
= \frac{\pi}{2\beta^2}~\sum_{m=0}^\infty~(-1)^m(2m+1)~\frac{1}{\lambda+\frac{(2m+1)^2\pi^2}{8\beta^2}}.
\end{equation}
Taking an inverse Laplace transform and using the basic fact that
\begin{equation*}
\mathcal L^{-1} \left[ \frac{1}{\lambda+\alpha}\right] (t) =e^{-\alpha t},\qquad \alpha>0, 
\end{equation*}
gives us the desired expression \eqref{eqn:f_0_2}. \qed 
\subsubsection{Proof of \Cref{lemma:q_beta}}
Note that $\Nbeta(t)$ can be cast as
\begin{equation} \label{eq:Nbeta-as-derivative}
\Nbeta(t) = -\partial_\beta\bigl(\beta S_\beta(t)\bigr)
\end{equation}
with
\begin{equation}
S_\beta(t)=\sum_{k=0}^\infty e^{-((2k+1)^2\beta^2)/(2t)}.    
\end{equation}
It is worth noting that we are allowed to use termwise differentiation given $\beta \geq \underline{b} >0$. We next compute $S_\beta(t)$ by the Jacobi theta transformation, equivalently Poisson summation. Since
\begin{align}
2S_\beta(t)
&= \sum_{n\in \mathbb{Z}}
\exp\!\left(-\frac{(2n+1)^2\beta^2}{2t}\right)
= \sum_{n\in \mathbb{Z}}
\exp\!\left(-\frac{2\beta^2}{t}\left(n+\frac12\right)^2\right),
\label{eq:double-Sbeta-shifted}
\end{align}
we apply the shifted Poisson summation formula
\begin{align}
\sum_{n\in\mathbb{Z}} f(n+\alpha)
=
\sum_{m\in\mathbb{Z}} e^{2\pi i m \alpha}\,\widehat f(m),
\label{eq:shifted-poisson}
\end{align}
with
\begin{align}
f(x) := \exp\!\left(-\frac{2\beta^2}{t}x^2\right),
\qquad
\alpha := \frac12.
\label{eq:f-and-alpha}
\end{align}
Using the Fourier transform convention
\begin{align}
\widehat f(\xi) := \int_{\mathbb{R}} f(x)e^{-2\pi i x\xi}\,dx,
\label{eq:fourier-convention}
\end{align}
the Gaussian transform is
\begin{align}
\widehat f(\xi)
=
\sqrt{\frac{\pi t}{2\beta^2}}\,
\exp\!\left(-\frac{\pi^2 t}{2\beta^2}\xi^2\right).
\label{eq:gaussian-transform}
\end{align}
Substituting \eqref{eq:f-and-alpha} and \eqref{eq:gaussian-transform} into \eqref{eq:shifted-poisson}, and then using \eqref{eq:double-Sbeta-shifted}, gives
\begin{align}
2S_\beta(t)
&=
\sqrt{\frac{\pi t}{2\beta^2}}
\sum_{m\in\mathbb{Z}} e^{2\pi i m/2}
\exp\!\left(-\frac{\pi^2 t}{2\beta^2}m^2\right)
\notag \\
&=
\sqrt{\frac{\pi t}{2\beta^2}}
\sum_{m\in\mathbb{Z}} (-1)^m
\exp\!\left(-\frac{\pi^2 t}{2\beta^2}m^2\right).
\label{eq:poisson-with-sign}
\end{align}
Since the summand is even in $m$,
\begin{align}
\sum_{m\in\mathbb{Z}} (-1)^m e^{-a m^2}
=
1 + 2\sum_{m=1}^{\infty} (-1)^m e^{-a m^2},
\qquad a>0.
\label{eq:even-sum-splitting}
\end{align}
Applying \eqref{eq:even-sum-splitting} with
\begin{align}
a = \frac{\pi^2 t}{2\beta^2},
\label{eq:a-parameter}
\end{align}
we find
\begin{align}
S_\beta(t)
&=
\frac12 \sqrt{\frac{\pi t}{2\beta^2}}
\left(
1 + 2\sum_{m=1}^{\infty} (-1)^m
\exp\!\left(-\frac{m^2\pi^2 t}{2\beta^2}\right)
\right) \notag \\
&=
\frac{\sqrt{\pi t/2}}{2\beta}
\left(
1 + 2\sum_{m=1}^{\infty} (-1)^m
\exp\!\left(-\frac{m^2\pi^2 t}{2\beta^2}\right)
\right),
\label{eq:Sbeta-after-poisson-2}
\end{align}
because $\beta>0$.

Multiplying \eqref{eq:Sbeta-after-poisson-2} by $\beta$ yields
\begin{align}
\beta S_\beta(t)
=
\frac{\sqrt{\pi t/2}}{2}
\left(
1 + 2\sum_{m=1}^{\infty} (-1)^m
\exp\!\left(-\frac{m^2\pi^2 t}{2\beta^2}\right)
\right).
\label{eq:beta-Sbeta}
\end{align}
We now differentiate \eqref{eq:beta-Sbeta} with respect to $\beta$. For each $m\geq 1$, define
\begin{align}
A_m := \frac{m^2\pi^2 t}{2}.
\label{eq:Am-definition}
\end{align}
Then
\begin{align}
\exp\!\left(-\frac{m^2\pi^2 t}{2\beta^2}\right)
=
e^{-A_m/\beta^2},
\label{eq:exp-Am}
\end{align}
and therefore
\begin{align}
\frac{d}{d\beta} e^{-A_m/\beta^2}
&=
e^{-A_m/\beta^2}\,\frac{d}{d\beta}\!\left(-A_m\beta^{-2}\right) \notag \\
&=
e^{-A_m/\beta^2}\,\left(2A_m\beta^{-3}\right) \notag \\
&=
\frac{m^2\pi^2 t}{\beta^3}
\exp\!\left(-\frac{m^2\pi^2 t}{2\beta^2}\right).
\label{eq:differentiate-exp-3}
\end{align}
Differentiating \eqref{eq:beta-Sbeta} term-by-term, we obtain
\begin{align}
\frac{\partial}{\partial \beta}\!\left(\beta S_\beta(t)\right)
&=
\frac{\sqrt{\pi t/2}}{2}
\cdot
2\sum_{m=1}^{\infty} (-1)^m
\frac{m^2\pi^2 t}{\beta^3}
\exp\!\left(-\frac{m^2\pi^2 t}{2\beta^2}\right)
 \notag \\
&=
\sqrt{\frac{\pi t}{2}}\,
\frac{\pi^2 t}{\beta^3}
\sum_{m=1}^{\infty} (-1)^m m^2
\exp\!\left(-\frac{m^2\pi^2 t}{2\beta^2}\right).
\label{eq:derivative-beta-Sbeta-2}
\end{align}
Using \eqref{eq:Nbeta-as-derivative}, we conclude that
\begin{align}
N_\beta(t)
=
\sqrt{\frac{\pi t}{2}}\,
\frac{\pi^2 t}{\beta^3}
\sum_{m=1}^{\infty} (-1)^{m+1} m^2
\exp\!\left(-\frac{m^2\pi^2 t}{2\beta^2}\right).
\label{eq:Nbeta-final-before-prefactor}
\end{align}

Finally, multiply \eqref{eq:Nbeta-final-before-prefactor} by $\sqrt{2/\pi}\,t^{-3/2}$:
\begin{align}
\sqrt{\frac{2}{\pi}}\,t^{-3/2}N_\beta(t)
&=
\sqrt{\frac{2}{\pi}}\,t^{-3/2}
\cdot
\sqrt{\frac{\pi t}{2}}
\cdot
\frac{\pi^2 t}{\beta^3}
\sum_{m=1}^{\infty} (-1)^{m+1} m^2
\exp\!\left(-\frac{m^2\pi^2 t}{2\beta^2}\right)
 \notag \\
&=
\left(
\sqrt{\frac{2}{\pi}}
\sqrt{\frac{\pi t}{2}}
t^{-3/2}
\pi^2 t
\right)
\frac{1}{\beta^3}
\sum_{m=1}^{\infty} (-1)^{m+1} m^2
\exp\!\left(-\frac{m^2\pi^2 t}{2\beta^2}\right)
 \notag \\
&=
\left(
t^{1/2} t^{-3/2} \pi^2 t
\right)
\frac{1}{\beta^3}
\sum_{m=1}^{\infty} (-1)^{m+1} m^2
\exp\!\left(-\frac{m^2\pi^2 t}{2\beta^2}\right)
 \notag \\
&=
\frac{\pi^2}{\beta^3}
\sum_{m=1}^{\infty} (-1)^{m+1} m^2
\exp\!\left(-\frac{m^2\pi^2 t}{2\beta^2}\right).
\label{eq:claimed-identity}
\end{align}
This is exactly the claimed identity \eqref{eqn:q_beta_2}. \qed


\subsection{Proof of \Cref{thm:linear-main}}
\label{app:proof-linear-main}

Throughout this proof, fix $b>0$ and define $\mathbb{P}_b$ and $\mathbb{E}_b$
as in Appendix~A.4.

\paragraph{Step 1: Boundary consistency.} Because $\|\psi_i\|\le L$ and $\|\theta_i\|\le R$, $|v_i|=|\psi_i^\top\theta_i|\le LR$ almost surely, so the scalar drifts $\{v_i\}$ satisfy the bounded-drift hypothesis of \Cref{thm:two-scale}. Hence $\btil\xrightarrow{\Prob_b}b$.

\paragraph{Step 2: Covariance matrix.} Since $\E\|\psi_i\psi_i^\top\|_{\mathrm{op}}\le L^2<\infty$, the weak law of large numbers gives $\widehat Q_n\xrightarrow{\Prob_b}Q$. 

\paragraph{Step 3: Moment convergence.} Decompose $\widehat m_n=S_n+R_n$ with $S_n :=\frac{1}{n}\sum_i\psi_i z_iw_b(t_i)$ and $R_n :=\frac{1}{n}\sum_i\psi_i z_i(w_{\btil}(t_i)-w_b(t_i))$.

\textbf{Analysis of $S_n$.} Conditional on $(\psi_i,\theta_i)$, the drift is the fixed scalar $v_i=\psi_i^\top\theta_i$, so Theorem~\ref{thm:unbiased} gives $\E_b[z_iw_b(t_i)\mid\psi_i,\theta_i]=\psi_i^\top\theta_i$. Multiplying by $\psi_i$ and applying the exogeneity condition $\E_b[\theta_i\mid\psi_i]=\theta^\star$,
\begin{align*}
    \E_b[\psi_iz_iw_b(t_i)] \;=\; \E_b[\psi_i\psi_i^\top\theta_i] \;=\; \E_b[\psi_i\psi_i^\top\E_b[\theta_i\mid\psi_i]] \;=\; Q\theta^\star.
\end{align*}
Also $\|\psi_iz_iw_b(t_i)\|\le L|w_b(t_i)|\le LH_K(t_i)$ for $K=[b/2,3b/2]$, which is integrable by \Cref{lem:envelope}. Hence $S_n\xrightarrow{\Prob_b}Q\theta^\star$ by the weak law of large numbers.

\textbf{Analysis of $R_n$.} For $\delta\in(0,b/2]$, define $r_\delta$ as in the scalar proof. On $\{|\btil-b|\le\delta\}$,
$\|R_n\|\le L\frac{1}{n}\sum_i r_\delta(t_i)$. Choose $\delta$ with $L\,\E_b[r_\delta(t_i)]<\varepsilon/2$. Then
\begin{align*}
    \Prob_b(\|R_n\|>\varepsilon)
    \;\le\; \Prob_b(|\btil-b|>\delta)+\Prob_b(\frac{L}{n}\sum_i r_\delta(t_i)>\varepsilon),
\end{align*}
and both terms vanish by Theorem~\ref{thm:two-scale} and the weak law of large numbers respectively. So $R_n\xrightarrow{\Prob_b}0$.

\paragraph{Step 4: Continuous mapping.} Combining $\widehat Q_n\xrightarrow{\Prob_b}Q$ with $\widehat m_n=S_n+R_n\xrightarrow{\Prob_b}Q\theta^\star$ and applying the continuous mapping theorem to $(\widehat Q_n,\widehat m_n)\mapsto\widehat Q_n^{-1}\widehat m_n$ on the open set $\{Q:\det Q>0\}$,
\begin{align*}
    \thetahat \;=\; \widehat Q_n^{-1}\widehat m_n \;\xrightarrow{\Prob_b}\; Q^{-1}(Q\theta^\star) \;=\; \theta^\star.
\end{align*}
For every fixed $x$, $\thetahat^\top\phi(x)\xrightarrow{\Prob_b}{\theta^\star}^\top\phi(x)$ by the continuous-mapping theorem. \qed

\section{Additional Results and Discussion} \label{app:additional_results}

In this section, we provide additional results and further discussion, mainly related to some of the assumptions made in the paper.
\subsection{Non-decision time} \label{app:no_decision_time}
In some applications, it is natural to assume that the decision-maker takes some time to parse the two alternatives before starting the evidence accumulation that is modeled by means of the DDM. In particular, the classical \textbf{extended} \ddm{} \citep{ratcliff2002estimating} models such non-decision time as an additive term. More specifically, the observed response time, now denoted by $S$, is given by 
\begin{align}
S=T+A,
\label{eq:observed-time-with-ndt}
\end{align}
where $T$ is the latent DDM first-passage time and $A\geq 0$ denotes the non-decision time. It is worth noting that, here, $A$ is not assumed to be a function of the alternatives; rather, it may be either deterministic or stochastic. 

In this subsection, we show how our model can be adjusted to handle a deterministic additive non-decision time. When $A$ is allowed to be random or heterogeneous, additional complexity arises since, in that case, one must learn the boundary, the heterogeneous drift, and the random non-decision response time jointly, while potentially having only one data point per labeler. One can use our results to derive bounds on the estimation error in that case (see Section~\ref{app:boundary_heterogeneity}); we leave this direction for future work.

Before stating our results, it is useful to highlight that, in settings such as LLM responses, longer responses are typically harder to parse and compare. In this case, one can take a different approach than modeling the non-decision response time as an additive term and instead incorporate it into the drift variable. Our linear model in \Cref{sec:linear} already handles this scenario: it suffices to add one additional coordinate to $\psi_i$, whose corresponding coordinate in $\theta_i$ models this effect.

\subsubsection{Extended \ddm{} with non-decision time}
Throughout this subsection, we assume $A=a>0$ is deterministic. Let
\begin{align*}
L_T(\lambda):=\mathbb{E}\!\left[e^{-\lambda T}\right],
\qquad
L_S(\lambda):=\mathbb{E}\!\left[e^{-\lambda S}\right] = e^{-a\lambda}L_T(\lambda).
\end{align*}
First, we argue why we need to adjust the estimator of the boundary, and why the two-scale estimator \eqref{eq:btil-def} fails. Note that, the population version of this estimator satisfies
\begin{align*}
\frac{\log L_S(\lambda)-\log L_S(4\lambda)}
{\sqrt{2\lambda}}
=
\frac{3a}{\sqrt{2}}\sqrt{\lambda}
+
\frac{\log L_T(\lambda)-\log L_T(4\lambda)}
{\sqrt{2\lambda}},
\end{align*}
and even though we know the second term converges to $b$ as $\lambda \to \infty$, the whole estimator would diverge to $+\infty$ at rate $\sqrt{\lambda}$. This observation motivates a three-scale estimator that cancels the linear non-decision-time term.
\begin{proposition}[Three-scale boundary and non-decision-time estimators]
\label{prop:three-scale-ndt-estimator}
Assume that $|V|\le M$ almost surely and that $\{\lambda_n\}_n$ satisfies \eqref{eq:lambda-rate}. Moreover, assume that $S=T+a$, where $a\geq 0$ is deterministic.
Let
\begin{align*}
\widehat L_n^S(\lambda)
:=
\frac{1}{n}\sum_{i=1}^n e^{-\lambda S_i},
\qquad
\widehat F_n(\lambda):=\log \widehat L_n^S(\lambda).
\end{align*}
Define
\begin{align*}
\widehat b_n^{\mathrm{NDT}}
:=
\frac{
\frac{5}{2}\widehat F_n(\lambda_n)
-
4\widehat F_n(4\lambda_n)
+
\frac{3}{2}\widehat F_n(9\lambda_n)
}
{\sqrt{2\lambda_n}},
\end{align*}
and
\begin{align*}
\widehat a_n
:=
\frac{
-\frac{1}{2}\widehat F_n(\lambda_n)
+
\widehat F_n(4\lambda_n)
-
\frac{1}{2}\widehat F_n(9\lambda_n)
}
{\lambda_n}.
\end{align*}
Then,
\begin{align*}
\widehat b_n^{\mathrm{NDT}}
\xrightarrow{{P}}
b,
\qquad
\widehat a_n
\xrightarrow{{P}}
a.
\end{align*}
\end{proposition}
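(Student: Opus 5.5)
Write $F(\lambda):=\log L_S(\lambda)=-a\lambda+\log L_T(\lambda)$ and $s:=\sqrt{2\lambda}$. The plan has three parts. First, I verify the estimators algebraically at the population level. Second, I transfer the argument to the empirical estimators using \Cref{thm:one-scale}. Third, I control the possible amplification of the stochastic error by $\lambda_n$ in $\widehat a_n$.

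\textbf{Population level.} By \eqref{eq:laplace-expansion}, the first-passage time $T$ satisfies $\log L_T(\lambda)=-bs+\log(2C_0)+O(s^{-1})$. At the three scales $\lambda,4\lambda,9\lambda$ we have $\sqrt{2\cdot j^2\lambda}=js$, so
\[
F(j^2\lambda)=-aj^2\lambda-bjs+\log(2C_0)+O(s^{-1}),\qquad j=1,2,3.
\]
The coefficients $(\tfrac52,-4,\tfrac32)$ satisfy $\tfrac52-4+\tfrac32=0$ and $\tfrac52-16+\tfrac{27}{2}=0$. They therefore kill both the constant $\log(2C_0)$ and the linear term $-a\lambda$. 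The $s$-coefficient is $-b(\tfrac52-8+\tfrac92)=b$. Dividing by $s$ leaves $b+O(s^{-2})$.

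Similarly, the coefficients $(-\tfrac12,1,-\tfrac12)$ sum to zero. They annihilate the $s$ term, since $-\tfrac12+2-\tfrac32=0$. Their $\lambda$-coefficient is $-a(-\tfrac12+4-\tfrac92)=a$. Dividing by $\lambda$ gives $a+O(\lambda^{-3/2})$. So both population versions converge, and are in fact biased only at lower order.

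\textbf{Transfer to the empirical transform.} Since $e^{-\lambda S_i}=e^{-a\lambda}e^{-\lambda T_i}$, we have $\widehat L_n^S(\lambda)/L_S(\lambda)=\Lhat(\lambda)/\Lb(\lambda)$, where $\Lhat$ is built from the latent $T_i$. The proof of \Cref{thm:one-scale} shows $\Lhat(\lambda_n)/\Lb(\lambda_n)\xrightarrow{P}1$ for any sequence satisfying \eqref{eq:lambda-rate}. The sequences $\{4\lambda_n\}$ and $\{9\lambda_n\}$ also satisfy \eqref{eq:lambda-rate}. Hence $\varepsilon_j:=\widehat F_n(j^2\lambda_n)-F(j^2\lambda_n)\xrightarrow{P}0$ for $j=1,2,3$.

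For $\widehat b_n^{\mathrm{NDT}}$, the error is a fixed linear combination of the $\varepsilon_j$ divided by $\sqrt{2\lambda_n}\to\infty$. It therefore vanishes in probability, and combined with the population step this gives consistency. For $\widehat a_n$, the error is the same kind of combination divided by $\lambda_n$, which is even smaller. So consistency of both estimators follows as soon as $\varepsilon_j\xrightarrow{P}0$ holds.

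\textbf{Main obstacle.} The one step that needs care is the claim that $\varepsilon_j\to0$, rather than merely being bounded, under the rate condition. This is exactly what the Lindeberg–Feller argument in the proof of \Cref{thm:one-scale} provides. It uses $n\Lb(\lambda_n)\to\infty$, via \Cref{lemma:nVar_infty}. I will check that lemma at the scale $9\lambda_n$: the lower bound there is of order $e^{-6b\sqrt{\lambda_n}}$, and $n e^{-6b\sqrt{\lambda_n}}\to\infty$ still holds because $\sqrt{\lambda_n}=o(\log n)$.

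I will also check the case $a=0$, which the hypothesis $a\ge 0$ allows. Then $S=T$ and nothing in the argument changes.
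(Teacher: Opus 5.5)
Your proposal is correct and matches the paper's argument: verify the population versions from the expansion \eqref{eq:laplace-expansion} (your coefficient checks are right), then transfer to the empirical transform. For the transfer, you apply the one-scale argument at $\lambda_n,4\lambda_n,9\lambda_n$, using that the deterministic factor $e^{-a\lambda}$ cancels exactly in $\widehat L_n^S/L_S$. Your ``main obstacle'' is milder than you state: the errors $\varepsilon_j$ are divided by $\sqrt{2\lambda_n}$ or $\lambda_n$, so $\varepsilon_j=O_P(1)$ would already suffice.
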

\begin{proof}
The proof follows the logic of the proof of \Cref{thm:two-scale} in Appendix~\ref{app:proof-two-scale}. Using the expansion of $\Lb(\lambda)$ developed there, it is straightforward to verify that the population versions of $\widehat b_n^{\mathrm{NDT}}$ and $\widehat a_n$ converge to the corresponding parameters as $\lambda_n$ grows. To obtain convergence in population, since $A=a$ is deterministic, we can again use the result developed in \Cref{thm:one-scale}, as in the proof of \Cref{thm:two-scale}.
\end{proof}
We conclude this subsection by deriving an analogue of \Cref{thm:plugin-scalar} and designing a consistent estimator of $\mu$ while accounting for the non-decision time. The main idea is to remove the effect of the non-decision time by using the consistent estimator $\widehat a_n$ developed above. However, since $\widehat a_n$ is itself an estimator, it may even exceed some observed response times. In particular, the next result shows how to address this difficulty carefully.
\begin{theorem} \label{thm:consistent_estimator_NDT}
Under the premise of \Cref{prop:three-scale-ndt-estimator}, let $\varepsilon_n\downarrow 0$ be any deterministic sequence satisfying
\begin{align}
\text{Pr}\!\left(|\widehat a_n-a|>\varepsilon_n/2\right)\to 0.
\label{eq:ndt-trimming-sequence-condition}
\end{align}
Define
\begin{align}
\widehat \mu_{n,\varepsilon}^{\mathrm{NDT}}
:=
\frac{1}{n}
\sum_{i=1}^n
Z_i
w_{\widehat b_n^{\mathrm{NDT}}}(S_i-\widehat a_n)
\mathbf{1}\{S_i-\widehat a_n\geq \varepsilon_n\}.
\label{eq:ndt-corrected-drift-estimator}
\end{align}
Then,
$
\widehat \mu_{n,\varepsilon}^{\mathrm{NDT}}
\xrightarrow{{P}}
\mu =\mathbb{E}[V].
$   
\end{theorem}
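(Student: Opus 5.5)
We reduce the statement to the argument used for \Cref{thm:plugin-scalar}. Write $T_i=S_i-a$ for the latent first-passage times, write $\hat\beta:=\widehat b_n^{\mathrm{NDT}}$, and set $\Delta_n:=\widehat a_n-a$, so that $S_i-\widehat a_n=T_i-\Delta_n$. We decompose
\begin{align*}
\widehat \mu_{n,\varepsilon}^{\mathrm{NDT}}-\mu
= \Bigl(\tfrac1n\textstyle\sum_i Z_i w_b(T_i)-\mu\Bigr)
+ \tfrac1n\textstyle\sum_i Z_i\bigl[w_{\hat\beta}(T_i-\Delta_n)\mathbf{1}\{T_i-\Delta_n\ge\varepsilon_n\}-w_b(T_i)\bigr].
\end{align*}
The first term vanishes in probability by \Cref{thm:unbiased}, \Cref{lem:envelope} (integrability of $H_K$) and the weak law of large numbers. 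We therefore concentrate on the second term, $R_n$.

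Fix $\eta>0$ and work on the event
\[
E_n:=\{|\hat\beta-b|\le\delta\}\cap\{|\Delta_n|\le\varepsilon_n/2\}.
\]
By \Cref{prop:three-scale-ndt-estimator} and condition \eqref{eq:ndt-trimming-sequence-condition}, $\Pr(E_n^c)\to0$. On $E_n$ the indicator $\mathbf{1}\{T_i-\Delta_n\ge\varepsilon_n\}$ equals one whenever $T_i\ge 3\varepsilon_n/2$, and equals zero whenever $T_i<\varepsilon_n/2$. Whenever it equals one, the shifted argument satisfies $u:=T_i-\Delta_n\ge\varepsilon_n>0$ and $|u-T_i|\le\varepsilon_n/2\le u/2$, so $u\in[\tfrac23T_i,2T_i]$. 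Hence on $E_n$ we have
\[
|R_n|\le \tfrac1n\textstyle\sum_i \rho_{\delta,\varepsilon_n}(T_i)+\tfrac1n\sum_i |w_b(T_i)|\mathbf{1}\{T_i<3\varepsilon_n/2\},
\]
where
\[
\rho_{\delta,\epsilon}(t):=\sup\bigl\{|w_\beta(u)-w_b(t)|:|\beta-b|\le\delta,\ |u-t|\le\epsilon/2,\ u\ge\epsilon\bigr\}.
\]
The second sum is dominated, for any fixed $\epsilon_0\ge\varepsilon_n$, by $\frac1n\sum_i H_K(T_i)\mathbf{1}\{T_i<3\epsilon_0/2\}$. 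Its mean is small once $\epsilon_0$ is small, by integrability of $H_K$, so the law of large numbers handles it.

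The main obstacle is dominating $\rho_{\delta,\epsilon}$ uniformly in the shift, because the envelope of \Cref{lem:envelope} is stated only at the unshifted argument. For the first piece of the envelope I will use $u\ge\tfrac23 t$, which gives $C_K(1+u^{-1})\le \tfrac32 C_K(1+t^{-1})$. For the exponential tail I will use $u\ge\tfrac23t$ again, which gives $C_Ke^{-c_Ku}\le C_Ke^{-\frac23c_Kt}$. The middle range stays bounded. Together these give a single integrable dominating function $\widetilde H_K(t)\gtrsim\sup_{u\in[2t/3,2t]}H_K(u)$. Its integrability follows exactly as in \Cref{lem:envelope}, since $g_b(t)\le Ct^{-3/2}e^{-b^2/(2t)}$ near zero.

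Next, joint continuity of $(\beta,\tau)\mapsto w_\beta(\tau)$ on $K\times(0,\infty)$ (part 1 of \Cref{lem:envelope}, upgraded to joint continuity by the locally uniform convergence of both series) gives $\rho_{\delta,\epsilon}(t)\to0$ pointwise as $\delta,\epsilon\downarrow0$. Dominated convergence then yields $\E_b[\rho_{\delta,\epsilon_0}(T)]<\eta/2$ for some small fixed $\delta,\epsilon_0$. The quantity $\rho_{\delta,\epsilon}$ is monotone in $\epsilon$, and once $\varepsilon_n\le\epsilon_0$ we can bound $\rho_{\delta,\varepsilon_n}\le\rho_{\delta,\epsilon_0}$. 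With this fixed $(\delta,\epsilon_0)$ the weak law of large numbers applies, and $\Pr(|R_n|>\eta)\le\Pr(E_n^c)+o(1)$, which completes the proof.
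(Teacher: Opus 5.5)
Your argument follows the paper's proof closely. It uses the same split into $\frac1n\sum_i Z_iw_b(T_i)-\mu$ plus a remainder, the same good event, and the same two-term bound on that event. It also closes the same way, through joint continuity of $(\beta,\tau)\mapsto w_\beta(\tau)$ and dominated convergence.

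Your observation that $u\ge\epsilon$ and $|u-t|\le\epsilon/2$ force $u\in[\tfrac23t,2t]$ is a genuine simplification. It gives a single $\epsilon$-free envelope, $\sup_{u\in[2t/3,2t]}H_K(u)+H_K(t)$, which is bounded by a constant multiple of $1+t^{-1}$. That bound is integrable because $g_b(t)\le Ct^{-3/2}e^{-b^2/(2t)}$ near $0$. You therefore do not need the paper's separate treatment of $\{T\le2\varepsilon\}$ via $C_K(1+\varepsilon^{-1})\Prob_b(T\le2\varepsilon)\to0$.

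There is one step that fails as written: the claim that $\rho_{\delta,\epsilon}$ is monotone in $\epsilon$, which you use to replace $\rho_{\delta,\varepsilon_n}$ by $\rho_{\delta,\epsilon_0}$. The feasible set of $u$ is $[\max(t-\epsilon/2,\epsilon),\,t+\epsilon/2]$. It is not nested in $\epsilon$, because the constraint $u\ge\epsilon$ loosens as $\epsilon$ shrinks. Two cases show this:
\begin{itemize}
\item For $t=\epsilon_0$, the point $u=\tfrac34\epsilon_0$ is admissible at $\epsilon=\epsilon_0/2$ but not at $\epsilon_0$.
\item For $\tfrac{\epsilon}{2}\le t<\tfrac{\epsilon_0}{2}$, the set at $\epsilon_0$ is empty while the set at $\epsilon$ is not.
\end{itemize}
So $\rho_{\delta,\varepsilon_n}\le\rho_{\delta,\epsilon_0}$ can fail precisely in the small-$t$ region, where the weights are largest.

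The repair uses only what you already proved: replace the constraint $u\ge\epsilon$ by its consequence $u\in[\tfrac23t,2t]$. That is, work with
\[
\bar\rho_{\delta,\epsilon}(t):=\sup\bigl\{|w_\beta(u)-w_b(t)|:\ |\beta-b|\le\delta,\ |u-t|\le\epsilon/2,\ \tfrac23t\le u\le2t\bigr\}.
\]
This function has the properties your argument needs:
\begin{itemize}
\item it dominates $\rho_{\delta,\epsilon}$;
\item it is nondecreasing in $\epsilon$;
\item it has the same integrable envelope;
\item it still tends to $0$ pointwise as $\delta,\epsilon\downarrow0$, by joint continuity.
\end{itemize}
With it, your fixed-$(\delta,\epsilon_0)$ law-of-large-numbers argument goes through.

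Alternatively, do what the paper does and avoid monotonicity altogether. Bound $\Prob\bigl(\tfrac1n\sum_i\rho_{\delta,\varepsilon_n}(T_i)>\eta/2\bigr)$ by Markov's inequality. Then use $\lim_n\E_b[\rho_{\delta,\varepsilon_n}(T)]=\E_b[r_\delta(T)]$, where $r_\delta(t):=\sup_{|\beta-b|\le\delta}|w_\beta(t)-w_b(t)|$; this follows by dominated convergence along $\varepsilon_n\to0$. Finally, send $\delta\downarrow0$.
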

\begin{proof}
In this proof, we make use of the following lemma.
\begin{lemma} \label{lemma:time_shift_envelop}
Let $T$ be the first passage time of \ddm{}, i.e., the response time under zero non-decision time. Then, we have
\begin{align}
& \lim_{\varepsilon\downarrow 0}
\mathbb{E}
\!\left[
|w_b(T)|\mathbf{1}\{T\leq 2\varepsilon\}
\right]
= 0
\label{eq:small-time-truncation-envelope}\\
& \lim_{\rho\downarrow 0}
\limsup_{\varepsilon\downarrow 0}
\mathbb{E}
\!\left[
\sup_{\substack{|\alpha-b|\leq \rho\\ |u|\leq \varepsilon\\ T+u \geq \varepsilon}}
\left|
w_{\alpha}(T+u)-w_b(T)
\right|
\right]
= 0.
\label{eq:time-shift-continuity-envelope}
\end{align}
\end{lemma}
We first show how to prove \Cref{thm:consistent_estimator_NDT} using this lemma, and then prove the lemma itself. 
Define
$
u_n:=a-\widehat a_n
$
as the error of the non-decision time estimator $a$ at time $n$. Then, we have
\begin{align*}
S_i-\widehat a_n
=
T_i+a-\widehat a_n
=
T_i+u_n.
\end{align*}
Let $\rho>0$ be fixed and define the good event
\begin{align}
E_{n,\rho}
:=
\left\{
|\widehat b_n^{\mathrm{NDT}}-b|\leq \rho,
\quad
|\widehat a_n-a|\leq \varepsilon_n/2
\right\}.
\label{eq:ndt-corrected-good-event}
\end{align}
By $\widehat b_n^{\mathrm{NDT}}
\xrightarrow{{P}} b$ and \eqref{eq:ndt-trimming-sequence-condition},
\begin{align*}
\text{Pr}(E_{n,\rho})\to 1
\qquad
\text{for every fixed } \rho>0.
\end{align*}
Decompose
\begin{align*}
\widehat \mu_{n,\varepsilon}^{\mathrm{NDT}}-\mu
=
A_n+B_n,
\end{align*}
where
\begin{align*}
A_n
:=
\frac{1}{n}\sum_{i=1}^n Z_iw_b(T_i)-\mu
\end{align*}
and
\begin{align*}
B_n
:=
\frac{1}{n}
\sum_{i=1}^n Z_i
\left[
w_{\widehat b_n^{\mathrm{NDT}}}(T_i+u_n)
\mathbf{1}\{T_i+u_n\geq \varepsilon_n\}
-
w_b(T_i)
\right].
\end{align*}
We already know $A_n\xrightarrow{{P}}0$. Hence, it suffices to show $B_n\xrightarrow{{P}}0$. 

Notice that, on the event $E_{n,\rho}$, if $T_i+u_n<\varepsilon_n$, then $T_i<3\varepsilon_n/2$. Hence, using $|Z_i|=1$, on the event $E_{n,\rho}$, we have
\begin{align*}
|B_n|
&\leq
\frac{1}{n}
\sum_{i=1}^n
\sup_{\substack{|\alpha-b|\leq \rho\\ |u|\leq \varepsilon_n/2\\ T_i+u \geq \varepsilon_n 0}}
\left|
w_{\alpha}(T_i+u)-w_b(T_i)
\right|
+
\frac{1}{n}
\sum_{i=1}^n
|w_b(T_i)|
\mathbf{1}\{T_i<3\varepsilon_n/2\}.
\end{align*}
As a result, for any $\rho$, we have
\begin{align}
\text{Pr}(|B_n|>\delta)&  \leq 
\text{Pr}(E_{n,\rho}) + 
\text{Pr} \left( 
\frac{1}{n}
\sum_{i=1}^n
\sup_{\substack{|\alpha-b|\leq \rho\\ |u|\leq \varepsilon_n/2\\ T_i+u \geq \varepsilon_n 0}}
\left|
w_{\alpha}(T_i+u)-w_b(T_i)
\right| \geq \frac{\delta}{2}
\right) \nonumber \\
& + 
\text{Pr} \left( 
\frac{1}{n}
\sum_{i=1}^n
|w_b(T_i)|
\mathbf{1}\{T_i<3\varepsilon_n/2\} \geq \frac{\delta}{2}
\right) \nonumber  \\
& \overset{\text{(Markov's inequality)}}{\leq} \text{Pr}(E_{n,\rho}) + \frac{2}{\delta} \mathbb{E} \left[ 
\sup_{\substack{|\alpha-b|\leq \rho\\ |u|\leq \varepsilon_n/2\\ T_i+u \geq \varepsilon_n 0}}
\left|
w_{\alpha}(T+u)-w_b(T)
\right|
\right ] \nonumber \\
& + \frac{2}{\delta}  \mathbb{E} \left[ 
|w_b(T)|
\mathbf{1}\{T<3\varepsilon_n/2\} 
\right]. \label{eqn:bound_prob_B_n}
\end{align}
Now, we claim that this implies $\Pr(|B_n|>\delta)$ goes to zero as $n\to\infty$. Suppose not. Then there exist $\zeta>0$ and an infinite subsequence of values of $n$ such that  $\text{Pr}(|B_n|>\delta) > \zeta$.  

Notice that, by \Cref{lemma:time_shift_envelop}, the last term in \eqref{eqn:bound_prob_B_n} goes to zero as $n\to\infty$. Also, again by \Cref{lemma:time_shift_envelop}, one can choose $\rho$ small enough to make the second term arbitrarily small. Finally, for any fixed $\rho$, the first term goes to zero as $n\to\infty$. Taken together, these facts yield a contradiction and hence complete the proof of \Cref{thm:consistent_estimator_NDT}. It remains to prove \Cref{lemma:time_shift_envelop}.

\paragraph{Proof of \Cref{lemma:time_shift_envelop}}
Let us define
\[
\Delta_{\rho,\varepsilon}(t)
:=
\sup_{\substack{|\alpha-b|\le \rho\\ |u|\le \varepsilon\\ t+u\ge \varepsilon}}
\left|w_\alpha(t+u)-w_b(t)\right|.
\]
Let $H_K$ be the envelope from \Cref{lem:envelope}, so that
\[
\sup_{\beta\in K}|w_\beta(t)|\le H_K(t),
\qquad
\mathbb E[H_K(T)]<\infty.
\]
The limit \eqref{eq:small-time-truncation-envelope} follows immediately from dominated convergence:
\[
|w_b(T)|\mathbf 1\{T\le 2\varepsilon\}
\le
H_K(T),
\]
and
\[
H_K(T)\mathbf 1\{T\le 2\varepsilon\}
\to 0
\qquad
\text{almost surely as }\varepsilon\downarrow0.
\]
Since $H_K(T)$ is integrable, dominated convergence gives
\[
\lim_{\varepsilon\downarrow0}
\mathbb E[
|w_b(T)|\mathbf 1\{T\le 2\varepsilon\}
]
=
0.
\]

We now prove \eqref{eq:time-shift-continuity-envelope}. Decompose
\[
\mathbb E[\Delta_{\rho,\varepsilon}(T)]
=
\mathbb E[
\Delta_{\rho,\varepsilon}(T)\mathbf 1\{T>2\varepsilon\}
]
+
\mathbb E[
\Delta_{\rho,\varepsilon}(T)\mathbf 1\{T\le 2\varepsilon\}
].
\]

First consider the event $\{T>2\varepsilon\}$. On this event, for every $|u|\le \varepsilon$,
\[
T+u\ge T-\varepsilon>T/2.
\]
Assume also $\varepsilon\le 1$. Then
\[
T+u\le T+1.
\]
Define
\[
\widetilde H_K(t)
:=
H_K(t)+\sup_{s\in[t/2,t+1]}H_K(s).
\]
By \Cref{lem:envelope}, $\widetilde H_K(T)$ is integrable. Indeed, for small $t$,
\[
\sup_{s\in[t/2,t+1]}H_K(s)
\le
C(1+t^{-1}),
\]
and the small-time integrability follows from \Cref{lem:envelope}. For $t$ bounded away from zero,
$H_K$ is bounded by the middle-region and large-time pieces of \Cref{lem:envelope}.
Therefore, on $\{T>2\varepsilon\}$,
\[
\Delta_{\rho,\varepsilon}(T)
\le
\widetilde H_K(T).
\]
For each fixed $t>0$,
\[
\lim_{\varepsilon\downarrow0}
\Delta_{\rho,\varepsilon}(t)\mathbf 1\{t>2\varepsilon\}
=
r_\rho(t),
\]
where
\[
r_\rho(t):=
\sup_{|\alpha-b|\le \rho}|w_\alpha(t)-w_b(t)|.
\]
The pointwise convergence follows from joint continuity of $(\alpha,t)\mapsto w_\alpha(t)$ on compact subsets of $K\times(0,\infty)$.
Thus dominated convergence gives
\[
\lim_{\varepsilon\downarrow0}
\mathbb E[
\Delta_{\rho,\varepsilon}(T)\mathbf 1\{T>2\varepsilon\}
]
=
\mathbb E[r_\rho(T)].
\]
By \Cref{lem:envelope},
\[
r_\rho(T)\le 2H_K(T),
\]
and
\[
r_\rho(T)\to0
\qquad
\text{almost surely as }\rho\downarrow0.
\]
Another application of dominated convergence gives
\[
\lim_{\rho\downarrow0}\mathbb E[r_\rho(T)]=0.
\]

It remains to control the event $\{T\le 2\varepsilon\}$. On this event, if
$|u|\le\varepsilon$ and $T+u\ge\varepsilon$, then
\[
\varepsilon\le T+u\le 3\varepsilon.
\]
For $\varepsilon$ small enough that $3\varepsilon\le\tau_0$, \Cref{lem:envelope} gives
\[
|w_\alpha(T+u)|
\le
C_K(1+\varepsilon^{-1})
\qquad
\text{for all }\alpha\in K.
\]
Therefore
\[
\Delta_{\rho,\varepsilon}(T)\mathbf 1\{T\le2\varepsilon\}
\le
C_K(1+\varepsilon^{-1})\mathbf 1\{T\le2\varepsilon\}
+
H_K(T)\mathbf 1\{T\le2\varepsilon\}.
\]
Taking expectations,
\[
\mathbb E[
\Delta_{\rho,\varepsilon}(T)\mathbf 1\{T\le2\varepsilon\}
]
\le
C_K(1+\varepsilon^{-1})\mathbb P_b(T\le2\varepsilon)
+
\mathbb E[
H_K(T)\mathbf 1\{T\le2\varepsilon\}
].
\]
The second term converges to zero by dominated convergence.

For the first term, let $g_b$ be the marginal density of $T$ under boundary $b$. The proof of \Cref{lem:envelope} gives, for small $t$,
\[
g_b(t)
\le
C t^{-3/2}\exp\!\left(-\frac{b^2}{2t}\right).
\]
Hence
\[
\mathbb P_b(T\le2\varepsilon)
\le
C\int_0^{2\varepsilon}
t^{-3/2}\exp\!\left(-\frac{b^2}{2t}\right)\,dt.
\]
Using the change of variables
\[
x=\frac{b^2}{2t},
\qquad
t=\frac{b^2}{2x},
\qquad
dt=-\frac{b^2}{2x^2}\,dx,
\]
we get
\[
\int_0^{2\varepsilon}
t^{-3/2}\exp\!\left(-\frac{b^2}{2t}\right)\,dt
=
C_b
\int_{b^2/(4\varepsilon)}^\infty
x^{-1/2}e^{-x}\,dx.
\]
For large $a$,
\[
\int_a^\infty x^{-1/2}e^{-x}\,dx
=
O(a^{-1/2}e^{-a}).
\]
Therefore
\[
\mathbb P_b(T\le2\varepsilon)
=
O\!\left(
\varepsilon^{1/2}
\exp\!\left(-\frac{b^2}{4\varepsilon}\right)
\right).
\]
Thus
\[
(1+\varepsilon^{-1})\mathbb P_b(T\le2\varepsilon)
=
O\!\left(
\varepsilon^{-1/2}
\exp\!\left(-\frac{b^2}{4\varepsilon}\right)
\right)
\to0.
\]
Hence
\[
\lim_{\varepsilon\downarrow0}
\mathbb E[
\Delta_{\rho,\varepsilon}(T)\mathbf 1\{T\le2\varepsilon\}
]
=
0
\]
uniformly over $\rho\in(0,b/2]$.

Combining the two regions gives
\[
\limsup_{\varepsilon\downarrow0}
\mathbb E[\Delta_{\rho,\varepsilon}(T)]
\le
\mathbb E[r_\rho(T)].
\]
Letting $\rho\downarrow0$ yields
\[
\lim_{\rho\downarrow0}\limsup_{\varepsilon\downarrow0}
\mathbb E[\Delta_{\rho,\varepsilon}(T)]
=
0.
\]
This completes the proof of \Cref{lemma:time_shift_envelop} and hence the proof of \Cref{thm:consistent_estimator_NDT}.
\end{proof}

\subsection{Robustness to boundary heterogeneity} \label{app:boundary_heterogeneity}
While we allow for heterogeneous drifts, the common-boundary assumption allows us to learn a single boundary from the aggregated data and then use it to estimate the average drift. However, if we allow for both heterogeneous drifts and arbitrary heterogeneous boundaries, learning the boundary from one observation per labeler is not possible. That said, one may still ask what our estimator converges to when the common-boundary assumption fails. This is important because it can be viewed as a robustness question. The next result shows that, in this case, the estimator actually estimates the lower endpoint of the boundary distribution. Throughout this section, we do not consider the non-decision response time. 
\begin{proposition}
\label{prop:unobserved-boundary-lower-endpoint}
Let $(B,V)$ be an arbitrary random pair of boundary and drift, satisfying
\begin{align*}
0<b_- \leq B\leq b_+<\infty,
\qquad
|V|\leq M<\infty
\qquad
\text{almost surely},
\end{align*}
where $b_-:=\operatorname{ess\,inf} B$.
Let $T$ be the \ddm{} response time conditionally on $(B,V)$. Define the population Laplace transform $L(\lambda):=\mathbb{E}\!\left[e^{-\lambda T}\right]$.
Then
\begin{align}
-\frac{\log L(\lambda)}{\sqrt{2\lambda}}
\longrightarrow b_-
\qquad
\text{as } \lambda\to\infty.
\label{eq:heterogeneous-boundary-lower-endpoint-limit}
\end{align}
\end{proposition}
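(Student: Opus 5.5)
Conditioning on $(B,V)$ and using \eqref{eqn:laplace_transform_drift} gives
\begin{align*}
L(\lambda)=\E\!\left[\frac{\cosh(BV)}{\cosh\bigl(B\sqrt{V^2+2\lambda}\bigr)}\right].
\end{align*}
The plan is to prove matching upper and lower bounds on $-\log L(\lambda)/\sqrt{2\lambda}$, in the spirit of Step~1 of the proof of \Cref{thm:one-scale}. The difference here is that the boundary is random. The upper bound on $L$ only uses $B\ge b_-$. The lower bound must exploit the mass that $B$ places near $b_-$.

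\textbf{Upper bound on $L$.} Since $\cosh(BV)\le e^{b_+M}$, $\cosh(x)\ge e^x/2$, and $B\sqrt{V^2+2\lambda}\ge b_-\sqrt{2\lambda}$ almost surely, we get
\begin{align*}
L(\lambda)\le 2e^{b_+M}e^{-b_-\sqrt{2\lambda}}.
\end{align*}
Hence
\begin{align*}
-\frac{\log L(\lambda)}{\sqrt{2\lambda}}\ge b_- - \frac{\log 2+b_+M}{\sqrt{2\lambda}},
\end{align*}
so the $\liminf$ is at least $b_-$.

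\textbf{Lower bound on $L$.} Fix $\eta>0$ and let $p_\eta:=\Prob(B\le b_-+\eta)$. By the definition of the essential infimum, $p_\eta>0$. On the event $\{B\le b_-+\eta\}$ we have $\cosh(BV)\ge 1$ and $\cosh(x)\le e^x$, so the integrand is at least $e^{-(b_-+\eta)\sqrt{M^2+2\lambda}}$. Dropping the nonnegative contribution from the complement gives
\begin{align*}
L(\lambda)\ge p_\eta\, e^{-(b_-+\eta)\sqrt{M^2+2\lambda}}.
\end{align*}
Therefore
\begin{align*}
-\frac{\log L(\lambda)}{\sqrt{2\lambda}}\le (b_-+\eta)\frac{\sqrt{M^2+2\lambda}}{\sqrt{2\lambda}}-\frac{\log p_\eta}{\sqrt{2\lambda}}.
\end{align*}
Because $p_\eta$ is fixed, the right-hand side tends to $b_-+\eta$, so the $\limsup$ is at most $b_-+\eta$. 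Letting $\eta\downarrow0$ and combining with the upper bound on $L$ yields \eqref{eq:heterogeneous-boundary-lower-endpoint-limit}.

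\textbf{Main obstacle.} The delicate step is the lower bound on $L$. A single uniform bound over all $B$ would only give $b_+$. The key device is restricting to the near-infimum event, which has positive probability, and ensuring that $\log p_\eta$ is a constant independent of $\lambda$, so the factor $p_\eta$ is washed out by the $\sqrt{2\lambda}$ normalization. No rate is claimed, and indeed none is available in general: the convergence speed depends on how fast $p_\eta\to0$. This is why the statement is only a limit.
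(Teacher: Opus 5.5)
Your proof is correct and follows the paper's argument essentially step for step. The bound $L(\lambda)\le 2e^{b_+M}e^{-b_-\sqrt{2\lambda}}$ gives the $\liminf$, and restricting to the positive-probability event $\{B\le b_-+\eta\}$ gives $L(\lambda)\ge p_\eta\, e^{-(b_-+\eta)\sqrt{M^2+2\lambda}}$, hence the $\limsup$ after letting $\eta\downarrow 0$.
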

\begin{proof}
Recall that
\begin{align*}
L(\lambda)
=
\mathbb{E}\!\left[
\frac{\cosh(BV)}
{\cosh\!\left(B\sqrt{V^2+2\lambda}\right)}
\right].
\end{align*}
We first prove the lower bound on the logarithmic rate. Similar to the derivation of \eqref{eq:app-Lb-upper}, we have
\begin{align*}
L(\lambda)
\leq
2e^{b_+M}
\exp\!\left(-b_-\sqrt{2\lambda}\right).
\end{align*}
Therefore
\begin{align*}
-\log L(\lambda)
\geq
b_-\sqrt{2\lambda}
-
\log\!\left(2e^{b_+M}\right).
\end{align*}
Dividing by $\sqrt{2\lambda}$ and taking the limit inferior yields
\begin{align}
\liminf_{\lambda\to\infty}
\left(
-\frac{\log L(\lambda)}{\sqrt{2\lambda}}
\right)
\geq b_-.
\label{eq:heterogeneous-boundary-liminf}
\end{align}

We next prove the reverse inequality. Fix $\varepsilon>0$. By the definition of the essential infimum,
\begin{align*}
p_{\varepsilon}:=\mathbb{P}\!\left(B\leq b_-+\varepsilon\right)>0.
\end{align*}
Now, using this, and similar to the derivation of \eqref{eq:app-Lb-lower}, we have
Hence
\begin{align*}
L(\lambda)
&\geq
p_{\varepsilon}
\exp\!\left(-(b_-+\varepsilon)\sqrt{M^2+2\lambda}\right).
\end{align*}
Therefore
\begin{align*}
-\log L(\lambda)
\leq
(b_-+\varepsilon)\sqrt{M^2+2\lambda}
-
\log p_{\varepsilon}.
\end{align*}
Dividing by $\sqrt{2\lambda}$ and taking the limit superior gives
\begin{align}
\limsup_{\lambda\to\infty}
\left(
-\frac{\log L(\lambda)}{\sqrt{2\lambda}}
\right)
\leq
b_-+\varepsilon.
\label{eq:heterogeneous-boundary-limsup-epsilon}
\end{align}
Since $\varepsilon>0$ was arbitrary,
\begin{align}
\limsup_{\lambda\to\infty}
\left(
-\frac{\log L(\lambda)}{\sqrt{2\lambda}}
\right)
\leq b_-.
\label{eq:heterogeneous-boundary-limsup}
\end{align}
Combining \eqref{eq:heterogeneous-boundary-liminf} and \eqref{eq:heterogeneous-boundary-limsup} proves \eqref{eq:heterogeneous-boundary-lower-endpoint-limit}.
\end{proof}
Proposition \ref{prop:unobserved-boundary-lower-endpoint} implies that arbitrary unobserved boundary heterogeneity is not harmless. That said, this also means that when the boundary is not homogeneous but still lies in a small interval, our estimator returns an approximately accurate estimate of it. One can derive bounds on the final estimation error as a function of $b_{+}-b_{-}$.

One final remark is that when boundary heterogeneity is associated with observed strata, such as task type, annotation interface, demographic pool, or any other observed grouping variable, we can group users accordingly, learn one boundary for each group, and again apply the results derived under the homogeneous-boundary case.
\subsection{Relaxing the bounded drifts assumption}
\label{subsec:light-tailed-drifts}

In this section, we relax the bounded-drift assumption to require only bounded exponential moments. This relaxation ensures our estimators provably accommodate Gaussian drift priors. Our contributions in this section are summarized as follows:
\begin{itemize}[leftmargin=1.5em,itemsep=1pt,topsep=1pt]
    \item In \Cref{thm:one-scale} and \Cref{thm:two-scale}, we relax the almost everywhere (a.e.) boundedness assumption, $|V| \leq M$ a.e., to the moment condition $\mathbb{E}\!\left[(1+V^4)\cosh(bV)\right]<\infty$.
    \item In \Cref{thm:linear-main}, we similarly relax the assumption $\|\psi_i\|\le L$ and $\|\theta_i\|\le R$ a.e., to $\mathbb{E}\!\left[\left(1 + \|\psi_i\|^2 + \|\psi_i\|^4\|\theta_i\|^4\right)\cosh\!\left(b\psi_i^{\top}\theta_i\right)\right]<\infty$.
\end{itemize}

Our first result recovers consistent boundary estimate under the relaxed assumption.

\begin{proposition}[Empirical boundary consistency under exponential moments]
\label{cor:light-tailed-empirical-boundary-consistency}
Let $(V_i,T_i)_{i=1}^n$ be i.i.d. as follows: $V_i\sim F^*$,
and conditional on $V_i=v$, $T_i$ is the first-passage time of a DDM
with common boundary $b>0$ and drift $v$. Conditional on $(V_i)_{i=1}^n$,
the DDM paths are independent. Assume
\begin{align}\label{eq:cosh-moment-condition}
C_0 =\mathbb{E}[\cosh(bV)]<\infty.
\end{align}
Then, 
\begin{align}
-\frac{\log L_b(\lambda)}{\sqrt{2\lambda}}
\to b.
\label{eq:light-tailed-one-scale-boundary-limit}
\end{align}
Furthermore, suppose $\{\lambda_n\}_n$ satisfies \eqref{eq:lambda-rate}. Then,
\begin{align}
\frac{\widehat L_n(\lambda_n)}
{L_b(\lambda_n)}
\xrightarrow{{P}}
1, \quad
-\frac{\log \widehat L_n(\lambda_n)}
{\sqrt{2\lambda_n}}
\xrightarrow{{P}}
b.
\label{eq:light-tailed-relative-empirical-laplace}
\end{align}
The same conclusion holds for the Richardson estimator $\widehat B_n$ in \eqref{eq:btil-def}.
\end{proposition}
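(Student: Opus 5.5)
Under only $C_0<\infty$, the plan is to redo Step 1 of the proof of \Cref{thm:one-scale} with bounds that do not use $M$, and then reuse the CLT argument of Step 2 almost unchanged. Start from the mixture formula \eqref{eq:app-Lb}, $L_b(\lambda)=\E[\cosh(bV)/\cosh(b\sqrt{V^2+2\lambda})]$, which only requires conditioning on $V$.

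\emph{Upper bound.} Since $\cosh(x)\ge e^x/2$ and $\sqrt{V^2+2\lambda}\ge\sqrt{2\lambda}$, we get $L_b(\lambda)\le 2C_0e^{-b\sqrt{2\lambda}}$. Hence $\liminf_{\lambda\to\infty} -\log L_b(\lambda)/\sqrt{2\lambda}\ge b$.

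\emph{Lower bound.} I would truncate. Pick $M_0$ with $p:=\Prob(|V|\le M_0)>0$, and drop the rest of the expectation. Using $\cosh(bv)\ge1$ and $\cosh(x)\le e^x$ gives $L_b(\lambda)\ge p\,e^{-b\sqrt{M_0^2+2\lambda}}$. Hence $\limsup_{\lambda\to\infty} -\log L_b(\lambda)/\sqrt{2\lambda}\le b$, which proves \eqref{eq:light-tailed-one-scale-boundary-limit}.

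For the empirical statement, $X_{n,i}=e^{-\lambda_n T_i}\in[0,1]$ are still i.i.d.\ and bounded. So the Lindeberg--Feller argument and the contradiction argument giving $\Lhat(\lambda_n)/L_b(\lambda_n)\xrightarrow{P}1$ go through verbatim, provided \Cref{lemma:nVar_infty} still holds. This is the one place where $M$ entered, and I expect it to be the main (mild) obstacle.

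I would again use $\Var(X_{n,i})\ge\E[\Var(X_{n,i}\mid V)]$, but restricted to the event $\{|V|\le M_0\}$. On that event the conditional variance is at least $e^{-b\sqrt{M_0^2+4\lambda_n}}-4e^{2bM_0}e^{-2b\sqrt{2\lambda_n}}$, exactly as in \eqref{eqn:variance_lb}. This quantity is $\ge \zeta e^{-2b\sqrt{\lambda_n}}$ for large $n$. Therefore $n\Var(X_{n,i})\ge p\zeta\, n e^{-2b\sqrt{\lambda_n}}\to\infty$ by $\sqrt{\lambda_n}=o(\log n)$.

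Combining with the population limit, $\log\Lhat(\lambda_n)-\log L_b(\lambda_n)\xrightarrow{P}0$. Dividing by $\sqrt{2\lambda_n}$ gives the one-scale consistency.

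For the Richardson estimator, I would argue as in Step 3 of the proof of \Cref{thm:two-scale}. The sequence $4\lambda_n$ also satisfies \eqref{eq:lambda-rate}, so both $-\log\Lhat(\lambda_n)/\sqrt{2\lambda_n}\xrightarrow{P} b$ and $-\log\Lhat(4\lambda_n)/(2\sqrt{2\lambda_n})\xrightarrow{P} b$. Subtracting gives $\btil\xrightarrow{P}2b-b=b$. No second-order expansion is needed for consistency; the $V^4$ moment would only enter if one also wanted the $O(\lambda^{-1})$ bias statement.
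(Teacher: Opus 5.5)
Your proof is correct, but it takes a different route from the paper's in both steps.

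\textbf{Population limit.} You sandwich $L_b(\lambda)$: the upper bound uses $C_0$, and the lower bound truncates to $\{|V|\le M_0\}$, which gives only matching log-rate bounds. The paper instead applies dominated convergence, with pointwise limit $2\cosh(bv)$ and dominating function $2\cosh(bV)$. This gives the exact tail asymptotic $e^{b\sqrt{2\lambda}}L_b(\lambda)\to 2C_0$.

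\textbf{Empirical step.} You reuse the Lindeberg--Feller argument and re-prove \Cref{lemma:nVar_infty} by truncation. The paper bypasses both with a direct second-moment bound:
\[
\E\Bigl[\bigl(\Lhat(\lambda_n)/L_b(\lambda_n)-1\bigr)^2\Bigr]\le \frac{L_b(2\lambda_n)}{nL_b(\lambda_n)^2}=O\Bigl(\frac{e^{2b(\sqrt{2}-1)\sqrt{\lambda_n}}}{n}\Bigr)\to 0 .
\]
This bound relies on the two-sided $\Theta(e^{-b\sqrt{2\lambda}})$ behaviour that the exact asymptotic supplies.

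\textbf{What each approach buys.} The paper's route gives a sharper population statement: it isolates the constant $\log(2C_0)$ behind the one-scale bias, which is the starting point of \Cref{prop:light-tailed-richardson-expansion}. It also makes the empirical step a two-line computation. Your route gives a transparent diagnosis: $M$ enters the bounded-drift proofs only through lower bounds, and restricting to $\{|V|\le M_0\}$ repairs all of them at the cost of the factor $p$.

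\textbf{The CLT detour is heavier than needed.} Since $X_{n,i}\in[0,1]$, you have $\Var(X_{n,i})\le L_b(\lambda_n)$, so Chebyshev gives $\E[(\Lhat(\lambda_n)/L_b(\lambda_n)-1)^2]\le 1/(nL_b(\lambda_n))$. Your truncated bound, together with $\sqrt{M_0^2+2\lambda}\le M_0+\sqrt{2\lambda}$, gives $L_b(\lambda_n)\ge p\,e^{-bM_0}e^{-b\sqrt{2\lambda_n}}$. This already forces $nL_b(\lambda_n)\to\infty$ under $\sqrt{\lambda_n}=o(\log n)$. So neither the variance lower bound nor Lindeberg--Feller is required.
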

\begin{proof}
We first show 
\begin{align}
e^{b\sqrt{2\lambda}}L_b(\lambda)
\to
2C_0
\qquad
\text{as } \lambda\to\infty.
\label{eq:light-tailed-laplace-tail-equivalent}
\end{align}

For each fixed $v\in\mathbb{R}$,
\begin{align*}
\sqrt{v^2+2\lambda}-\sqrt{2\lambda}\to 0
\qquad
\text{as } \lambda\to\infty.
\end{align*}
Using $\cosh(x)=(e^x+e^{-x})/2$, we obtain the pointwise limit
\begin{align*}
\frac{
e^{b\sqrt{2\lambda}}\cosh(bv)
}
{
\cosh\!\left(b\sqrt{v^2+2\lambda}\right)
}
\to
2\cosh(bv).
\end{align*}
For domination, since $\cosh(y)\geq e^y/2$ for $y\geq 0$,
\begin{align*}
\frac{
e^{b\sqrt{2\lambda}}\cosh(bv)
}
{
\cosh\!\left(b\sqrt{v^2+2\lambda}\right)
}
\leq
2\cosh(bv)
\exp\!\left(
-b\{\sqrt{v^2+2\lambda}-\sqrt{2\lambda}\}
\right) \leq
2\cosh(bv).
\end{align*}
The dominating function $2\cosh(bV)$ is integrable by \eqref{eq:cosh-moment-condition}. Dominated convergence therefore gives \eqref{eq:light-tailed-laplace-tail-equivalent}. Taking logarithms in \eqref{eq:light-tailed-laplace-tail-equivalent} yields
\begin{align*}
\log L_b(\lambda)
=
-b\sqrt{2\lambda}
+
\log(2C_0)
+
o(1).
\end{align*}
Dividing by $\sqrt{2\lambda}$ proves \eqref{eq:light-tailed-one-scale-boundary-limit}.

Since $0\leq e^{-\lambda T}\leq 1$,
\begin{align*}
\operatorname{Var}\!\left(e^{-\lambda T}\right)
\leq
\mathbb{E}\!\left[e^{-2\lambda T}\right]
=
L_b(2\lambda).
\end{align*}
Therefore
\begin{align*}
\mathbb{E}
\!\left[
\left(
\frac{\widehat L_n(\lambda_n)}{L_b(\lambda_n)}
-
1
\right)^2
\right]
=
\frac{\operatorname{Var}(e^{-\lambda_nT})}
{nL_b(\lambda_n)^2}
\leq
\frac{L_b(2\lambda_n)}
{nL_b(\lambda_n)^2}.
\end{align*}
By \eqref{eq:light-tailed-laplace-tail-equivalent},
\begin{align*}
L_b(2\lambda)
=
O\!\left(e^{-b\sqrt{4\lambda}}\right),
\qquad
L_b(\lambda)^2
=
\Theta\!\left(e^{-2b\sqrt{2\lambda}}\right).
\end{align*}
Hence
\begin{align}
\frac{L_b(2\lambda_n)}
{nL_b(\lambda_n)^2}
=
O\!\left(
\frac{
\exp\!\left(2b(\sqrt{2}-1)\sqrt{\lambda_n}\right)
}
{n}
\right).
\label{eq:light-tailed-relative-variance-order}
\end{align}
Condition \eqref{eq:lambda-rate} implies that the right-hand side of \eqref{eq:light-tailed-relative-variance-order} converges to zero. This proves \eqref{eq:light-tailed-relative-empirical-laplace}. The Richardson estimator uses the same argument at scales $\lambda_n$ and $4\lambda_n$, and $2\sqrt{\lambda_n}=o(\log n)$ still holds. Thus $\widehat B_n$ also converges in probability to $b$.
\end{proof}
It is worth noting that the scalar exponential-moment condition includes Gaussian drifts. Indeed, if $V\sim N(m,\sigma^2)$, then
\begin{align*}
\mathbb{E}[\cosh(bV)]
=
\frac{1}{2}
\left(
e^{bm+b^2\sigma^2/2}
+
e^{-bm+b^2\sigma^2/2}
\right)
<\infty.
\end{align*}
This result implies that the estimator $\widehat B_n$, which we developed earlier, is consistent under the lighter moment-type assumption \eqref{eq:cosh-moment-condition}, rather than under the boundedness assumption. That said, to establish the second part of \Cref{thm:two-scale}, which shows that this estimator enjoys a better rate than the one-scale estimator, we need a slightly stronger assumption, which still holds for Gaussian drifts. The next proposition formalizes this.

\begin{proposition}
\label{prop:light-tailed-richardson-expansion}
Under the premise of \Cref{cor:light-tailed-empirical-boundary-consistency}, assume
\begin{align}
\mathbb{E}
\!\left[
(1+V^4)\cosh(bV)
\right]
<\infty.
\label{eq:fourth-cosh-moment-condition}
\end{align}
Then, \eqref{eqn:estimators_convergence_rate} holds, meaning that the population version of $\widehat B_n$ enjoys a faster convergence rate than the one-scale estimator.
\end{proposition}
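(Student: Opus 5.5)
The plan is to redo Step 1 of the proof of \Cref{thm:two-scale} with uniform-in-$v$ Taylor bounds replaced by pointwise bounds carrying explicit polynomial weights in $V$. Dominated convergence is then justified by \eqref{eq:fourth-cosh-moment-condition}. Set $s:=\sqrt{2\lambda}$ and $R_s(v):=\sqrt{s^2+v^2}-s=v^2/(\sqrt{s^2+v^2}+s)\in[0,v^2/(2s)]$. As before,
\begin{align*}
\frac{1}{\cosh(b\sqrt{s^2+v^2})}=2e^{-bs}e^{-bR_s(v)}(1+\eta),\qquad |\eta|\le e^{-2bs},
\end{align*}
now for every $v\in\R$, since $\sqrt{s^2+v^2}\ge s$. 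Multiplying by $\cosh(bV)$ and integrating, the $\eta$-part is bounded by $2e^{-3bs}C_0$. This gives
\begin{align*}
\Lb(\lambda)=2e^{-bs}\bigl(\E[\cosh(bV)e^{-bR_s(V)}]+O(e^{-2bs})\bigr).
\end{align*}

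Next I expand $e^{-bR_s(V)}$. Using $|e^{-x}-1+x|\le x^2/2$ for $x\ge 0$ and $0\le v^2/(2s)-R_s(v)\le v^4/(8s^3)$, I get the pointwise bound
\begin{align*}
\Bigl|e^{-bR_s(v)}-1+\frac{bv^2}{2s}\Bigr|\le \frac{b^2v^4}{8s^2}+\frac{bv^4}{8s^3}.
\end{align*}
Multiplying by $\cosh(bv)$ and taking expectations, the remainder is at most $s^{-2}\,O\bigl(\E[V^4\cosh(bV)]\bigr)=O(s^{-2})$ by \eqref{eq:fourth-cosh-moment-condition}. Also $C_2=\E[V^2\cosh(bV)]\le\E[(1+V^4)\cosh(bV)]<\infty$. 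Hence $\E[\cosh(bV)e^{-bR_s(V)}]=C_0-bC_2/(2s)+O(s^{-2})$, with $C_0\ge1$, exactly as in the bounded case. Taking logarithms reproduces \eqref{eq:laplace-expansion}:
\begin{align*}
\log\Lb(\lambda)=-bs+\log(2C_0)-\frac{bC_2}{2C_0s}+O(s^{-2}).
\end{align*}

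Step 2 of the proof of \Cref{thm:two-scale} then goes through verbatim. Evaluating at $\lambda$ and $4\lambda$ cancels the constant $\log(2C_0)$, giving the $O(\lambda^{-1})$ Richardson bias. For the one-scale estimator, $-\log\Lb(\lambda)/\sqrt{2\lambda}-b=-\log(2C_0)/\sqrt{2\lambda}+O(\lambda^{-1})$. Since $C_0\ge1$ gives $\log(2C_0)\ge\log 2>0$, this bias is $\Omega(\lambda^{-1/2})$. This establishes \eqref{eqn:estimators_convergence_rate}.

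The main obstacle is ensuring every remainder is controlled by a pointwise inequality valid for all $v$, not just $|v|\le M$. In particular, $R_s(v)$ is not small uniformly in $v$. The fix is to avoid expanding $\log$ or $e^{-x}$ for large arguments and instead use the global inequalities above, which hold for all $x\ge0$. Their polynomial growth, $v^4$ at worst, is exactly what \eqref{eq:fourth-cosh-moment-condition} integrates.
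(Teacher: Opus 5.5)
Your proposal is correct and matches the paper's proof step for step. It uses the same expansion $1/\cosh(b\sqrt{s^2+v^2})=2e^{-bs}e^{-bR_s(v)}(1+\eta)$ with $|\eta|\le e^{-2bs}$, and the same global pointwise bounds $|e^{-x}-1+x|\le x^2/2$ and $0\le v^2/(2s)-R_s(v)\le v^4/(8s^3)$, integrated against $\cosh(bV)$ via the fourth-moment condition. It then applies the same two-scale cancellation and uses $\log(2C_0)>0$ for the one-scale lower bound; strictly speaking, no dominated convergence is needed, since your explicit pointwise bounds integrate directly, exactly as in the paper.
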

\begin{proof}
Recall from the proof of \Cref{thm:two-scale} that
\begin{align*}
C_0:=\mathbb{E}[\cosh(bV)],
\qquad
C_2:=\mathbb{E}[V^2\cosh(bV)].
\end{align*}
Let
\begin{align*}
s:=\sqrt{2\lambda},
\qquad
\delta_s(v):=\sqrt{s^2+v^2}-s.
\end{align*}
Then
\begin{align*}
\delta_s(v)
=
\frac{v^2}{\sqrt{s^2+v^2}+s},
\qquad
0\leq \delta_s(v)\leq \frac{v^2}{2s}.
\end{align*}
Also,
\begin{align*}
\frac{1}{\cosh(b\sqrt{s^2+v^2})}
=
2e^{-bs}e^{-b\delta_s(v)}
\left(1+\rho_s(v)\right),
\end{align*}
where
\begin{align*}
|\rho_s(v)|
\leq
e^{-2b\sqrt{s^2+v^2}}
\leq
e^{-2bs}.
\end{align*}
Therefore
\begin{align}
L_b(\lambda)
=
2e^{-bs}
\mathbb{E}
\!\left[
\cosh(bV)e^{-b\delta_s(V)}
\right]
+
O(e^{-3bs}C_0).
\label{eq:laplace-after-inverse-cosh-expansion}
\end{align}

We now expand the remaining expectation. Since $0\leq \delta_s(V)\leq V^2/(2s)$ and
\begin{align*}
\left|
e^{-b\delta_s(V)}-1+b\delta_s(V)
\right|
\leq
\frac{b^2\delta_s(V)^2}{2}
\leq
\frac{b^2V^4}{8s^2},
\end{align*}
we have
\begin{align}
\mathbb{E}
\!\left[
\cosh(bV)e^{-b\delta_s(V)}
\right]
=
C_0
-
b\mathbb{E}
\!\left[
\cosh(bV)\delta_s(V)
\right]
+
O(s^{-2}).
\label{eq:expectation-after-taylor-expansion}
\end{align}
Next,
\begin{align*}
0
\leq
\frac{V^2}{2s}-\delta_s(V)
=
\frac{V^2(\sqrt{s^2+V^2}-s)}
{2s(\sqrt{s^2+V^2}+s)}
\leq
\frac{V^4}{8s^3}.
\end{align*}
Thus
\begin{align}
\mathbb{E}
\!\left[
\cosh(bV)\delta_s(V)
\right]
=
\frac{C_2}{2s}
+
O(s^{-3}).
\label{eq:delta-s-expectation-expansion}
\end{align}
Combining \eqref{eq:laplace-after-inverse-cosh-expansion}, \eqref{eq:expectation-after-taylor-expansion}, and \eqref{eq:delta-s-expectation-expansion} gives
\begin{align*}
L_b(\lambda)
=
2e^{-bs}
\left(
C_0
-
\frac{bC_2}{2s}
+
O(s^{-2})
\right).
\end{align*}
Taking logarithms,
\begin{align*}
\log L_b(\lambda)
&=
-bs+\log(2C_0)
+
\log
\left(
1-\frac{bC_2}{2C_0s}+O(s^{-2})
\right)
\notag\\
&=
-bs+\log(2C_0)
-
\frac{bC_2}{2C_0s}
+
O(s^{-2}).
\end{align*}
Since $s=\sqrt{2\lambda}$, implies
\begin{align}
\log L_b(\lambda)
=
-b\sqrt{2\lambda}
+
\log(2C_0)
-
\frac{bC_2}{2C_0\sqrt{2\lambda}}
+
O(\lambda^{-1}).
\label{eq:light-tailed-richardson-log-expansion}
\end{align}

Let
\begin{align*}
c_0:=\log(2C_0),
\qquad
c_1:=-\frac{bC_2}{2C_0}.
\end{align*}
Then \eqref{eq:light-tailed-richardson-log-expansion} can be written as
\begin{align}
\log L_b(\lambda)
=
-bs+c_0+\frac{c_1}{s}+O(s^{-2}).
\label{eq:richardson-expansion-in-s}
\end{align}
At scale $4\lambda$, the corresponding value of $s$ is $2s$, and hence
\begin{align}
\log L_b(4\lambda)
=
-2bs+c_0+\frac{c_1}{2s}+O(s^{-2}).
\label{eq:richardson-expansion-at-four-lambda}
\end{align}
Subtracting \eqref{eq:richardson-expansion-at-four-lambda} from \eqref{eq:richardson-expansion-in-s} gives
\begin{align*}
\log L_b(\lambda)-\log L_b(4\lambda)
=
bs+\frac{c_1}{2s}+O(s^{-2}).
\end{align*}
Dividing by $s=\sqrt{2\lambda}$ proves the rate for the Richardson estimator. Finally,
\begin{align}
-\frac{\log L_b(\lambda)}{s}-b
=
-\frac{c_0}{s}
-
\frac{c_1}{s^2}
+
O(s^{-3}).
\label{eq:one-scale-bias-expansion}
\end{align}
Because $C_0=\mathbb{E}[\cosh(bV)]\geq 1$, we have $c_0=\log(2C_0)>0$. Hence the leading term in \eqref{eq:one-scale-bias-expansion} is nonzero, and the other rate follows.
\end{proof}
Finally, we can similarly relax the boundedness assumption in the linear model.

\begin{proposition}[Linear preference consistency under integrability conditions]
\label{prop:linear-light-tailed-consistency}
Consider the setting of linear preferences in \Cref{sec:linear}.  
Assume
\begin{align}
\mathbb{E}
\!\left[\left(1 + \|\psi_i\|^2 + \|\psi_i\|^4\|\theta_i\|^4\right)
\cosh\!\left(b\psi_i^{\top}\theta_i\right)
\right]
<\infty.
\label{eq:linear-light-tailed-boundary-moment}
\end{align}

Then $\widehat \theta_n\xrightarrow{{P}}\theta^{\star}$.
\end{proposition}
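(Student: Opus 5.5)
We follow the four-step structure of the proof of \Cref{thm:linear-main} and replace each use of boundedness by the moment condition \eqref{eq:linear-light-tailed-boundary-moment}.

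\textbf{Step 1: boundary consistency.} The scalar drifts $V_i=\psi_i^\top\theta_i$ are i.i.d. The weight $1$ in \eqref{eq:linear-light-tailed-boundary-moment} gives $\E[\cosh(bV_i)]<\infty$, so \Cref{cor:light-tailed-empirical-boundary-consistency} yields $\btil\xrightarrow{P}b$.

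\textbf{Step 2: covariance matrix.} The weight $\|\psi_i\|^2$, together with $\cosh\ge1$, gives $\E\|\psi_i\|^2<\infty$. The weak law of large numbers then gives $\widehat Q_n\xrightarrow{P}Q$.

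\textbf{Step 3: the term $S_n=\frac1n\sum_i\psi_iz_iw_b(t_i)$.} Conditioning on $(\psi_i,\theta_i)$ and applying \Cref{thm:unbiased} gives the mean $Q\theta^\star$, exactly as before. What remains is integrability of $\|\psi_i\|\,|w_b(t_i)|$. Here we can no longer use the marginal bound $g_b\le\cosh(bM)f_{0,b}$. Instead we work conditionally on $V=v$. By the tilting identity \eqref{eq:app-tilt},
\[
\E[H_K(T)\mid V=v]=\cosh(bv)\int H_K(t)f_{0,b}(t)e^{-v^2t/2}\,dt\le\cosh(bv)\,\E_0[H_K(T)],
\]
where $\E_0$ is expectation under zero drift. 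The right-hand factor $\E_0[H_K(T)]$ is finite by the small-$t$ estimate on $f_{0,b}$ in the proof of \Cref{lem:envelope}. Hence
\[
\E\bigl[\|\psi_i\|H_K(t_i)\bigr]\le C\,\E\bigl[\|\psi_i\|\cosh(bV_i)\bigr]\le C\,\E\bigl[(1+\|\psi_i\|^2)\cosh(bV_i)\bigr]<\infty,
\]
and the weak law of large numbers gives $S_n\xrightarrow{P}Q\theta^\star$.

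\textbf{Step 3 (continued): the residual $R_n$.} We reuse the $r_\delta$ argument of \Cref{thm:plugin-scalar}. On $\{|\btil-b|\le\delta\}$ we have $\|R_n\|\le\frac1n\sum_i\|\psi_i\|r_\delta(t_i)$. Pointwise, $r_\delta\downarrow0$ as $\delta\downarrow0$, and $\|\psi_i\|r_\delta(t_i)\le2\|\psi_i\|H_K(t_i)$, which is integrable by the bound just shown. Dominated convergence gives $\E[\|\psi_i\|r_\delta(t_i)]\to0$. The same split into $\Prob(|\btil-b|>\delta)$ plus a law-of-large-numbers term then yields $R_n\xrightarrow{P}0$.

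\textbf{Step 4: conclusion.} The continuous mapping theorem gives $\thetahat\to Q^{-1}Q\theta^\star=\theta^\star$ in probability.

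\textbf{Main obstacle.} The delicate point is the integrability in Step 3: the marginal response-time density is now a $\cosh(bV)$-weighted mixture, and the regressor $\psi_i$ is correlated with $V_i$. The conditional tilting bound handles both at once, since it factors the $\cosh(bV)$ weight out of the $t$-integral. The fourth-moment term $\|\psi_i\|^4\|\theta_i\|^4$ is not needed for consistency; it only enters if one also wants the Richardson bias rate of \Cref{prop:light-tailed-richardson-expansion}, because $V_i^4\le\|\psi_i\|^4\|\theta_i\|^4$ makes condition \eqref{eq:fourth-cosh-moment-condition} hold.
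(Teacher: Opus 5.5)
Your proposal is correct and follows the paper's proof essentially step for step: boundary consistency via \Cref{cor:light-tailed-empirical-boundary-consistency}, the weak law for $\widehat Q_n$, the split $\widehat m_n=S_n+R_n$ handled with conditional unbiasedness and the envelope $H_K$, then continuous mapping. Your tilting bound $\E[H_K(T)\mid V=v]\le\cosh(bv)\,\E_0[H_K(T)]$ is exactly the justification behind the paper's unexplained ``$\lesssim\E[\|\psi_i\|\cosh(b\psi_i^\top\theta_i)]$''; dominating $\|\psi_i\|r_\delta(t_i)$ by $2\|\psi_i\|H_K(t_i)$ is a slightly cleaner substitute for the paper's Cauchy--Schwarz step, which tacitly needs a second moment of the envelope; and the integrability of $\psi_i\psi_i^\top\theta_i=\psi_iV_i$ that you leave implicit (and for which the paper invokes the $\|\psi_i\|^4\|\theta_i\|^4$ term) follows from $|V_i|\le\cosh(bV_i)/b$ together with your bound on $\E[\|\psi_i\|\cosh(bV_i)]$, so your remark that this fourth-moment term is not needed for consistency is right.
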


\begin{proof}
The proof generalizes the proof of \Cref{thm:linear-main}, provided in Appendix \cref{app:proof-linear-main}.
By the weak law of large numbers and $\mathbb{E}\|\psi_i\|^2<\infty$,
\begin{align}
\widehat Q_n\xrightarrow{{P}}Q.
\label{eq:linear-light-tailed-q-convergence}
\end{align}
Therefore, we have $\widehat Q_n^{-1}\xrightarrow{{P}}Q^{-1}$.

Next, similarly, we consider the decomposition
$
\widehat m_n
=
S_n+R_n,
$
where, recall that,
\begin{align*}
S_n
=
\frac{1}{n}
\sum_{i=1}^n
\psi_iZ_iw_b(T_i),
\qquad
R_n
=
\frac{1}{n}
\sum_{i=1}^n
\psi_iZ_i
\left\{
w_{\widehat b_n}(T_i)-w_b(T_i)
\right\}.
\end{align*}
For the main term, condition on $(\psi_i,\theta_i)$. The scalar unbiasedness identity gives
\begin{align*}
\mathbb{E}
\!\left[
Z_iw_b(T_i)
\mid
\psi_i,\theta_i
\right]
=
\psi_i^{\top}\theta_i.
\end{align*}
Therefore
\begin{align*}
\mathbb{E}[\psi_iZ_iw_b(T_i)]
=
\mathbb{E}[\psi_i\psi_i^{\top}\theta_i] = Q\theta^{\star}.
\end{align*}
The moment assumption implies $\mathbb{E}[\|\psi_i\|^2\|\theta_i\|]<\infty$, which ensures integrability of $\psi_i\psi_i^{\top}\theta_i$. Furthermore, $\phi_iZ_iw_b(T_i)$ is absolute integrable due to
\begin{align*}
    \mathbb{E}\left[\|\psi_i\|\mathbb{E}[|w_b(T_i)|\mid \psi_i,\theta_i]\right] \lesssim \mathbb{E}\left[\|\psi_i\|\cosh(b\psi_i^\top\theta_i)\right]
    < \infty.
\end{align*}
So the weak law of large numbers implies
\begin{align}
S_n\xrightarrow{{P}}Q\theta^{\star}.
\label{eq:linear-light-tailed-main-term-convergence}
\end{align}

It remains to show $R_n\xrightarrow{{P}}0$. Fix $\eta>0$ and let
\begin{align*}
E_{n,\eta}:=\{|\widehat b_n-b|\leq \eta\}.
\end{align*}
By \Cref{prop:light-tailed-richardson-expansion}, $\widehat b_n\xrightarrow{{P}}b$, so $\mathbb{P}(E_{n,\eta})\to 1$. On $E_{n,\eta}$,
\begin{align*}
\|R_n\|
\leq
\frac{1}{n}
\sum_{i=1}^n
\|\psi_i\|
\sup_{|\alpha-b|\leq \eta}
|w_{\alpha}(T_i)-w_b(T_i)|.
\end{align*}
The expectation of the right-hand side converges to zero as $\eta\downarrow 0$ due to 
\begin{align*}
    \mathbb{E}
\!\left[
\|\psi_i\|
\sup_{|\alpha-b|\leq \eta}
\left|
w_{\alpha}(T_i)-w_b(T_i)
\right|
\right] \leq \left(\mathbb{E}
\!\left[
\|\psi_i\|^2\right]\mathbb{E}
\!\left[
\sup_{|\alpha-b|\leq \eta}
\left|
w_{\alpha}(T_i)-w_b(T_i)
\right|^2
\right]\right)^{-1/2} 
\downarrow 0
\end{align*}
by \Cref{lem:envelope}.
Therefore
\begin{align}
R_n\xrightarrow{{P}}0.
\label{eq:linear-light-tailed-residual-convergence}
\end{align}
Combining \eqref{eq:linear-light-tailed-q-convergence}, \eqref{eq:linear-light-tailed-main-term-convergence}, and \eqref{eq:linear-light-tailed-residual-convergence},
\begin{align*}
\widehat \theta_n
=
\widehat Q_n^{-1}\widehat m_n
\xrightarrow{{P}}
Q^{-1}Q\theta^{\star}
=
\theta^{\star},
\end{align*}
which completes the proof.
\end{proof}

If $\psi_i$ is bounded and $\theta_i$ is Gaussian, then \eqref{eq:linear-light-tailed-boundary-moment} holds automatically. Hence Gaussian latent preferences do not require truncation for the boundary-identification argument.

\section{Detailed Experimental Settings}
\label{app:experiments}

This appendix provides the implementation details, hyperparameter settings, and compute cost that were deferred from Section~\ref{sec:experiments}.

\subsection{Common setup}

\paragraph{Boundary estimator.} Throughout, we use the bias-corrected boundary estimator $\btil$ of Eq.~\eqref{eq:btil-def} with $\lambda_n=(\log n)^{3/2}$. 

\paragraph{Weight function evaluation.} The ratio-of-series weight $w_\beta(t)$ of Eq.~\eqref{eq:wb-def} is evaluated by truncating both series at $K=100$ terms. The truncation error decays super-exponentially in $K$: for $t\le T$ and $\beta\ge b_-$, the $k$-th term is bounded by $(2k+1)e^{-(2k+1)^2b_-^2/(2T)}$, which for the parameter ranges used in our experiments gives machine-precision error well before $K=100$. 


\paragraph{First-passage-time sampling.}
Synthetic response times are drawn from the conditional density $f(t\mid v,b)=f_0(t;b)\exp(-v^2 t/2)\cosh(bv)$ by inverse-CDF sampling on a precomputed cache; the real-data experiment uses the recorded $\tau_i$ directly. We tabulate $f$ on the product of an $N_v$-point uniform drift grid and a $998$-point time grid concentrated near the mode of $f_0$ ($\Delta t\approx 5\cdot 10^{-5}$ on $[10^{-4},10^{-2}]$, coarsening to $\Delta t\approx 6.5\cdot 10^{-2}$ in the tail) and truncated at $T_{\max}=20$; the leading large-$t$ asymptote $f_0(t;b)\sim(\pi/2b^2)\exp(-\pi^2 t/(8b^2))$ together with the exponential tilt $\exp(-v^2 t/2)$ bounds the truncated tail mass by $10^{-7}$ for all $|v|\le V_{\max}$ and $b\ge 1.25$ used here. The CDF is built row-wise by the trapezoidal rule and renormalized so that $F(T_{\max}\mid v,b)=1$ exactly; sampling snaps each drift to its nearest $v$-grid point and linearly interpolates in $t$, with bias $O(\Delta v)$ and $O((\Delta t)^2)$ respectively. We use $N_v=500$ over the prior support for the basic experiments and $N_v=801$ over $[-V_{\max},V_{\max}]$ for the contextual experiments, where $V_{\max}=\sum_{k=1}^d\max\bigl(|\theta^\star_k-6\sigma_\theta|,|\theta^\star_k+6\sigma_\theta|\bigr)=12.80$ matches the $\pm 6\sigma_\theta$ truncation of the priors. The cache is built once per scenario and shared across all repetitions; in the contextual setting, if a realized drift ever exceeds $V_{\max}$ during a rep-batch the cache is rebuilt with $V_{\max}\leftarrow\max(1.5\,V_{\max},\,1.1\max_i|V_i|)$, an event that is a deterministic function of the seeded $(\psi_i,\theta_i)$ draws, so the rebuild sequence is identical across re-runs at fixed seed, dtype, and device.

\paragraph{Replications and randomness.} Every MSE and cosine-similarity curve is the Monte-Carlo average over $R=50$ independent replications with independent random seeds. For the synthetic experiments, confidence regions are generated pointwise with half-width $1.96 \cdot \mathrm{std} / \sqrt{R}$; the linear plotting script uses the same formula with the squared-error array. The shaded bands in Figure~\ref{fig:exp_real_cossim} are 95\% bootstrap confidence bands over the $R$ replications.

\paragraph{Compute.} All experiments were run on a single NVIDIA A100-80GB GPU within 10 minutes.

\subsection{Synthetic tabular setting}

We fix $\mu^\star=\E_{\Fstar}[V]=0.25$ and $b^\star=1.25$.

\paragraph{Prior families.}
\begin{itemize}[leftmargin=1.5em,itemsep=1pt,topsep=1pt]
    \item \emph{Uniform:} $V\sim\mathrm{Uniform}[-0.25,0.75]$, which has mean $\mu^\star$ and support width $1$.
    \item \emph{Beta:} $V=\mu^\star-\tfrac{2}{7}+\mathrm{Beta}(2,5)$, a centered Beta rescaled to unit width, yielding a unimodal skewed density.
\end{itemize}


\paragraph{Metric.} The Monte-Carlo MSE $\E[(\muhat-\mu^\star)^2]$, averaged over $R=50$ replications.

\paragraph{Sample-size grid.} $n\in\{10^3,10^4,10^5,10^6\}$.

\subsection{Synthetic linear contextual setting}

We fix $d=4$, $\theta^\star=(0.25,-0.15,0.10,-0.30)$, $\sigma_\theta^2=0.25$, and $b^\star=1.25$. Contexts are $\psi_i\sim\mathrm{Uniform}[-1,1]^d$ i.i.d., and each coordinate of $\theta_i$ is drawn i.i.d.\ from one of the following priors centered at the corresponding coordinate of $\theta^\star$ with variance $\sigma_\theta^2$:

\begin{itemize}[leftmargin=1.5em,itemsep=1pt,topsep=1pt]
    \item \emph{Gaussian:} $\theta_i^{(k)}\sim\cN(\theta^\star_k,\sigma_\theta^2)$, truncated to $\pm 6\sigma_\theta$.
    \item \emph{Uniform:} $\theta_i^{(k)}\sim\mathrm{Uniform}[\theta^\star_k-\sigma_\theta\sqrt{3},\,\theta^\star_k+\sigma_\theta\sqrt{3}]$.
    \item \emph{Beta:} $\theta_i^{(k)}=\theta^\star_k+\sigma_\theta\,\tilde{b}_k$ for a mean-centered $\mathrm{Beta}(2,5)$ variable rescaled to unit variance.
    \item \emph{Laplace:} $\theta_i^{(k)}\sim\mathrm{Laplace}(\theta^\star_k,\sigma_\theta/\sqrt{2})$, truncated to $\pm 6\sigma_\theta$.
\end{itemize}


\paragraph{Metric.} The Monte-Carlo MSE $\E[\|\thetahat-\theta^\star\|_2^2]$, averaged over $R=50$ replications.

\paragraph{Sample-size grid.} $n\in\{10^3,10^4,10^5,10^6,10^7\}$.

\subsection{Real-data intertemporal choice}
\label{app:real-data}

\paragraph{Source.} We use the publicly available dataset of \citet{amasino2019amount} as reanalyzed by \citet{echenique2025response}. Each trial presents a binary choice between a smaller-sooner reward $s_r\in\{0.5,\dots,9.5\}$ dollars paid today ($s_d=0$) and a larger-later reward $\ell_r=10$ dollars paid after delay $\ell_d\in\{1,7,15,30,90,180,365\}$ days. The dataset records the binary choice $Z_i\in\{+1,-1\}$ (encoded $+1$ for larger-later) and the raw reaction time $\tau_i>0$.

\paragraph{Preprocessing.} We drop trials with missing choices or response times, and we drop two participants whose retained trials contain only one choice class (so that the subject-specific Bradley--Terry MLE is degenerate). The resulting replication sample pools $n_{\max}=13{,}793$ valid trials from $S=98$ participants (out of $100$ enrolled). 

\paragraph{Feature encoding.} Following \citet{echenique2025response}, each trial is encoded by the normalized comparison-feature vector $\psi_i=\bigl((\ell_r-s_r)/9.5,\,-(\ell_d-s_d)/365\bigr)\in\R^2$. The first coordinate is the scaled monetary gap and the second is the negated scaled delay gap.


\paragraph{Regularization.}

In 2-D feature space, it is possible that some participants' choices are perfectly linearly separable, which results in divergence of unregularized \btmodel{} MLE. Therefore, we add a $\ell_2$-regularization with penalty parameter $\lambda = 0.1$ to all \btmodel{} MLE losses in the real data setting.

\paragraph{Subsampling.} For each $n$ in the grid $\{10^2,250,500,10^3,2000,5000\}$, we draw $R=50$ random subsamples of size $n$ with replacement from the pooled trials. Each estimator is fit on each subsample.

\paragraph{Metric.} Because the common boundary $b$ is unidentifiable from choices alone, only the direction of $\theta^\star$ is comparable across estimators. We therefore report
\begin{align*}
    \mathrm{CosSim}(\widehat\theta,\theta^\star)
    \;=\; \frac{\widehat\theta^\top\theta^\star}{\|\widehat\theta\|_2\|\theta^\star\|_2}\in[-1,1],
\end{align*}
averaged over the $R=50$ subsamples of size $n$, with $95\%$ bootstrap bands shown as the shaded region in Figure~\ref{fig:exp_real_cossim}.

\paragraph{On the DDM assumption.} The real-data experiment is not an identification claim: human decision-making in intertemporal choice is only approximately modeled by the \ddm{}. The fact that the DDM estimator nonetheless aligns more closely with the subject-level target direction than the pooled \btmodel{} baseline, and continues to improve where the baseline plateaus, is evidence that response times carry information about the aggregate preference direction even when the generative model is potentially misspecified.



\subsection{Ablation: Richardson extrapolation for boundary estimation}
\label{app:bhat-ablation}

We directly compare the two-scale Richardson-extrapolated estimator $\btil$ of Eq.~\eqref{eq:btil-def} against the uncorrected one-scale estimator $\bhat$ of Eq.~\eqref{eq:bhat-def}, with both estimators using the same Laplace-tail rate $\lambda_n=(\log n)^{3/2}$ and the same response-time samples.

\paragraph{Setup.} We reuse the synthetic data-generating processes of Sections~\ref{sec:exp-synthetic} and Appendix~\ref{app:experiments}: tabular data with $b^\star=1.25$ and the prior on $V$ varied across $\{\text{Uniform}, \text{Beta}, \text{Gaussian}, \text{Laplace}\}$, and contextual data with $b^\star=1.25$ and the per-coordinate prior on $\theta_i\in\R^4$ varied across the same four families. For each $n$ we report the median and interquartile range (IQR) over $R=10$ replications.

\paragraph{Results.} Across all priors and both settings (Figures~\ref{fig:bhat_basic} and~\ref{fig:bhat_linear}), the Richardson-extrapolated estimator $\btil$ achieves smaller error and faster convergence to $b^\star=1.25$ than the uncorrected one-scale estimator $\bhat$. This directly substantiates the efficiency-gain of \Cref{thm:two-scale}.

\begin{figure}[t]
  \centering
  \begin{subfigure}[t]{0.47\linewidth}
    \centering
    \includegraphics[width=\linewidth]{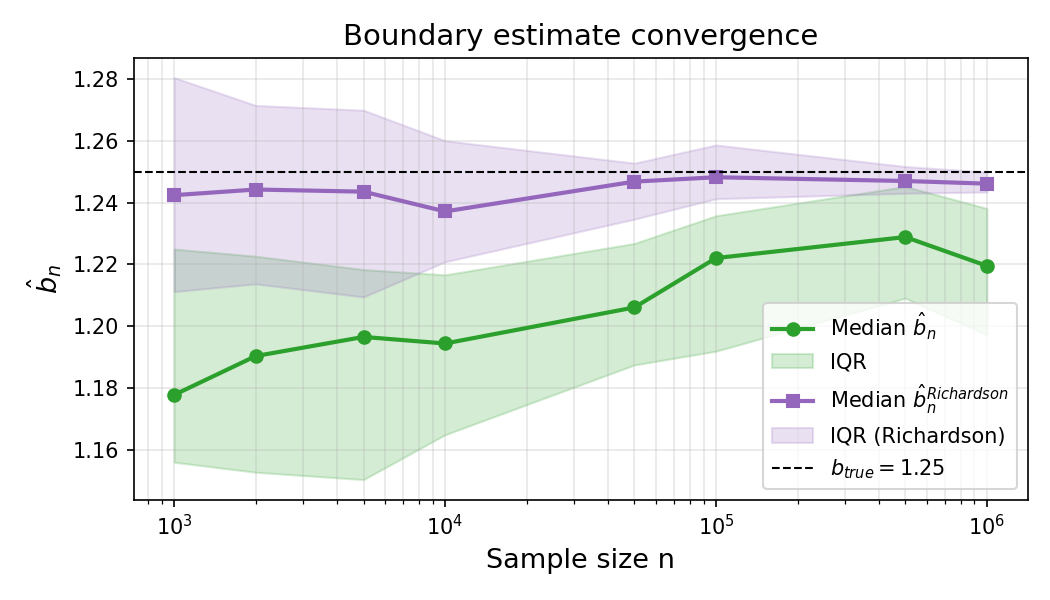}
    \caption{Uniform.}
    \label{fig:bhat_basic_uniform}
  \end{subfigure}
  \hfill
  \begin{subfigure}[t]{0.47\linewidth}
    \centering
    \includegraphics[width=\linewidth]{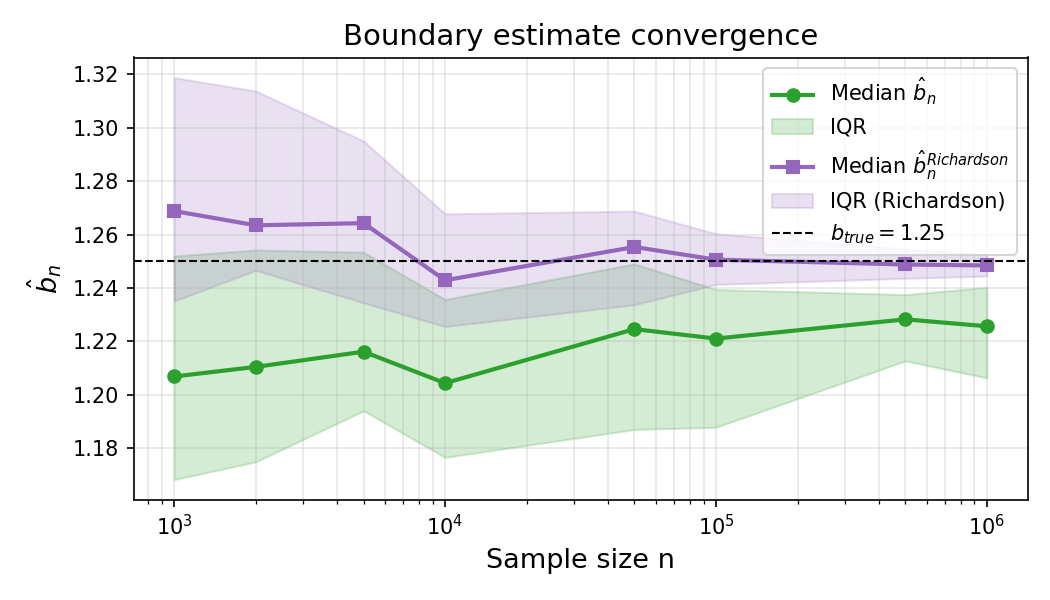}
    \caption{Beta.}
    \label{fig:bhat_basic_beta}
  \end{subfigure}
  \caption{\textbf{Tabular setting.} Median and IQR (over $R=10$ replications) of the one-scale estimator $\bhat$ (green) and the Richardson-extrapolated estimator $\btil$ (purple) as a function of sample size $n$, for each of the two priors on the latent drift $V$. The dashed line marks $b^\star=1.25$. The Richardson correction removes the $O(\lambda_n^{-1/2})$ bias of $\bhat$ across all priors.}
  \label{fig:bhat_basic}
\end{figure}

\begin{figure}[t]
  \centering
  \begin{subfigure}[t]{0.47\linewidth}
    \centering
    \includegraphics[width=\linewidth]{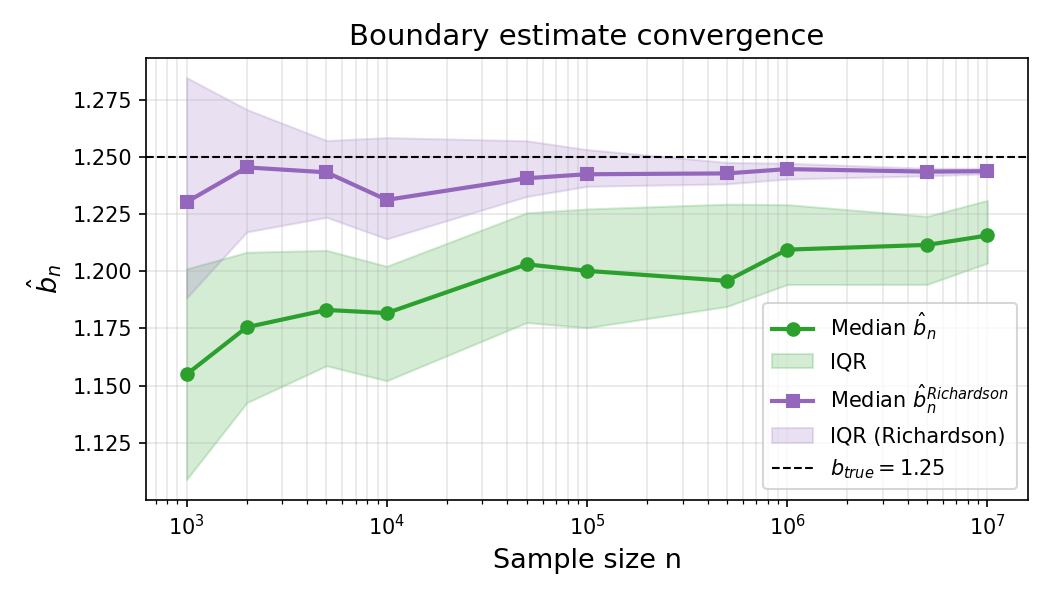}
    \caption{Gaussian.}
    \label{fig:bhat_linear_gaussian}
  \end{subfigure}
  \hfill
  \begin{subfigure}[t]{0.47\linewidth}
    \centering
    \includegraphics[width=\linewidth]{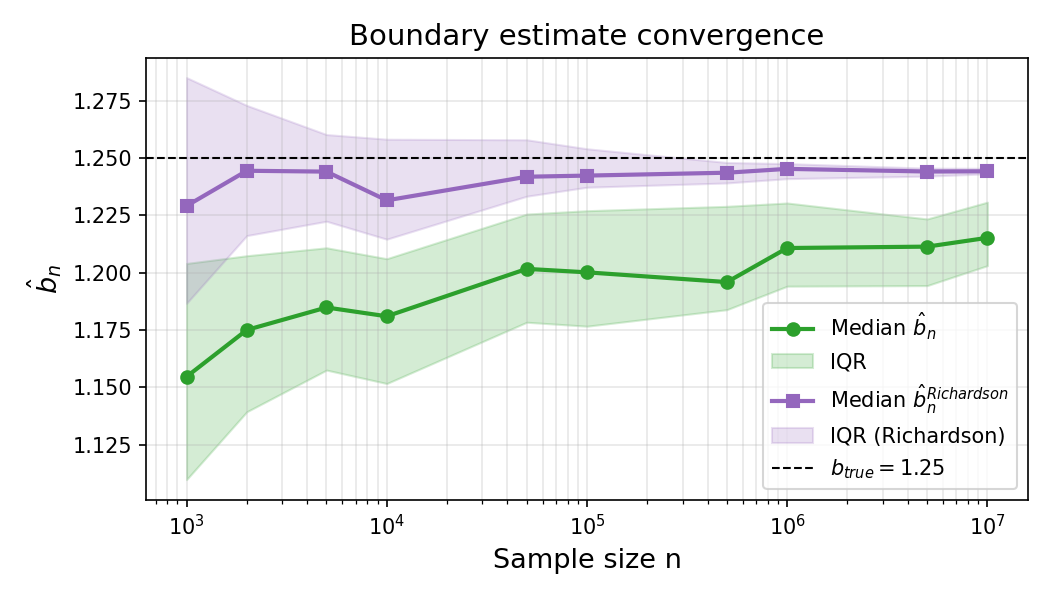}
    \caption{Uniform.}
    \label{fig:bhat_linear_uniform}
  \end{subfigure}
  \\[0.4em]
  \begin{subfigure}[t]{0.47\linewidth}
    \centering
    \includegraphics[width=\linewidth]{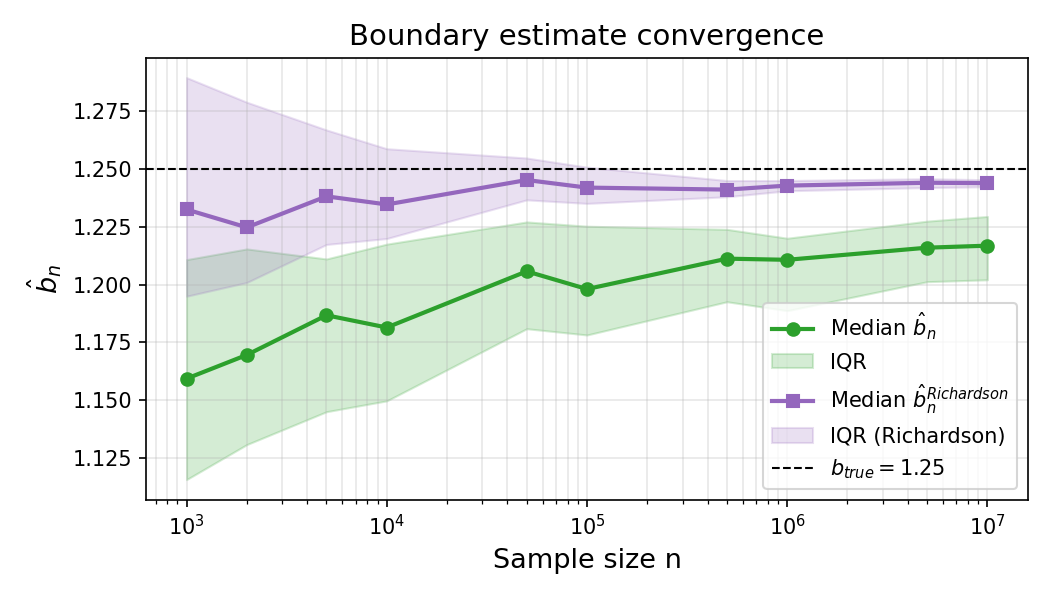}
    \caption{Beta.}
    \label{fig:bhat_linear_beta}
  \end{subfigure}
  \hfill
  \begin{subfigure}[t]{0.47\linewidth}
    \centering
    \includegraphics[width=\linewidth]{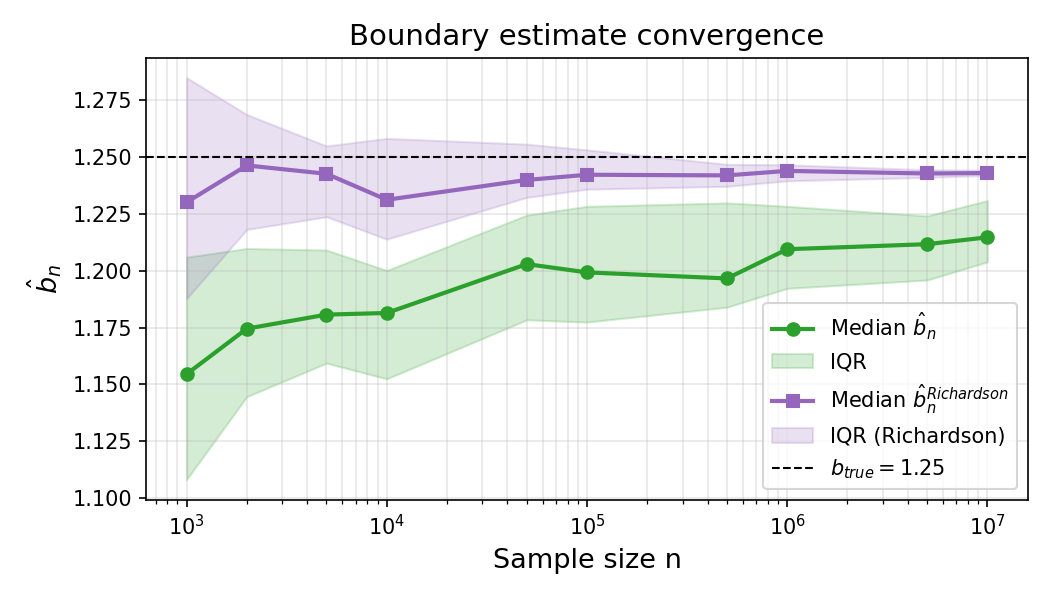}
    \caption{Laplace.}
    \label{fig:bhat_linear_laplace}
  \end{subfigure}
  \caption{\textbf{Linear contextual setting.} Median and IQR (over $R=10$ replications) of $\bhat$ (green) and $\btil$ (purple) versus sample size $n$, for each of the four per-coordinate priors on $\theta_i\in\R^4$. The dashed line marks $b^\star=1.25$. The Richardson correction is also effective in the contextual setting, where the latent drifts $v_i=\psi_i^\top\theta_i$ are heterogeneous through both $\psi_i$ and $\theta_i$.}
  \label{fig:bhat_linear}
\end{figure}

\clearpage

\end{document}